\documentclass[acmsmall,screen,nonacm]{acmart}
\usepackage{ulem}\usepackage{cancel}
\usepackage{graphicx}
\usepackage{url}
\usepackage{verbatim}
\usepackage{xcolor}
\usepackage[breakable]{tcolorbox}
\usepackage{amsmath}
\theoremstyle{definition}
\newtheorem{theorem}{Theorem}
\makeatletter
\@ifundefined{definition}{\newtheorem{definition}[theorem]{Definition}}{}
\makeatother
\newcommand{\PipeRev}[1]{#1}
\definecolor{PipeRevisionC}{RGB}{0,0,0}
\newcommand{\PipeRevC}[1]{#1}
\usepackage{iftex}
\ifPDFTeX
\else
  \usepackage{fontspec}
\fi
\usepackage{enumitem}
\usepackage{interval}
\usepackage{tuples,eqntabular,strut}
\usepackage{tabularx}
\usepackage{array}
\usepackage{longtable}
\usepackage{hyperref}
\usepackage{float}
\usepackage{centernot}
\usepackage{tikz-cd}

\newcolumntype{Y}{>{\raggedright\arraybackslash}X}
\newcolumntype{L}[1]{>{\raggedright\arraybackslash}p{#1}}
\newcolumntype{V}[1]{>{\raggedright\arraybackslash\strut}p{#1}}

\usepackage{stmaryrd}

\usepackage{booktabs}

\ifPDFTeX
\else
\fi

\setcopyright{none}
\makeatletter
\patchcmd{\@mkauthorsaddresses}
  {\ifnum\num@authors>1\relax
  Authors' \else Author's \fi
  Contact Information:}
  {Contact Information:}
  {}{\ClassWarning{acmart}{Could not patch authors contact heading}}
\makeatother
\makeatletter
\let\PipeAddContentsLine\addcontentsline
\newcommand*\PipeFilteredAddContentsLine[3]{%
  \def\PipeTocType{#2}%
  \def\PipeParagraphType{paragraph}%
  \def\PipeSubparagraphType{subparagraph}%
  \ifx\PipeTocType\PipeParagraphType
  \else
    \ifx\PipeTocType\PipeSubparagraphType
    \else
      \PipeAddContentsLine{#1}{#2}{#3}%
    \fi
  \fi
}
\let\addcontentsline\PipeFilteredAddContentsLine
\makeatother
\definecolor{PipeForOmaBackground}{gray}{0.94}
\definecolor{PipeForOmaBorder}{gray}{0.78}
\newtcolorbox{PipeForOmaFrame}{
  breakable,
  enforce breakable,
  lines before break=0,
  colback=PipeForOmaBackground,
  colframe=PipeForOmaBorder,
  boxrule=0.35pt,
  arc=0pt,
  outer arc=0pt,
  boxsep=0pt,
  left=0.7em,
  right=0.7em,
  top=0.35\baselineskip,
  bottom=0.35\baselineskip,
  before skip=0.55\baselineskip,
  after skip=0.55\baselineskip,
  width=\linewidth,
  left skip=0pt,
  right skip=0pt
}
\newenvironment{foroma}{%
  \par\begingroup\small\begin{PipeForOmaFrame}%
  \noindent
  \textit{For Oma:}\hspace{0.25em}%
}{%
  \end{PipeForOmaFrame}\endgroup\par
}

\title[Ceci n'est pas une pipe]{Ceci n'est pas une pipe: AI systems as semantic abstractions}
\author{Jade Alglave}
\email{jade.alglave@arm.com, j.alglave@ucl.ac.uk}
\author{Patrick Cousot}
\email{pcousot@cims.nyu.edu}

\RequirePackage{color} %DIF PREAMBLE
\DeclareOldFontCommand{\sf}{\normalfont\sffamily}{\mathsf} %DIF PREAMBLE
\providecommand{\DIFaddtex}[1]{{\protect\color{PipeRevisionC}#1}} %DIF PREAMBLE
\providecommand{\DIFdeltex}[1]{} %DIF PREAMBLE
\providecommand{\DIFaddbegin}{} %DIF PREAMBLE
\providecommand{\DIFaddend}{} %DIF PREAMBLE
\providecommand{\DIFdelbegin}{} %DIF PREAMBLE
\providecommand{\DIFdelend}{} %DIF PREAMBLE
\providecommand{\DIFaddFL}[1]{\DIFadd{#1}} %DIF PREAMBLE
\providecommand{\DIFdelFL}[1]{\DIFdel{#1}} %DIF PREAMBLE
\providecommand{\DIFaddbeginFL}{} %DIF PREAMBLE
\providecommand{\DIFaddendFL}{} %DIF PREAMBLE
\providecommand{\DIFdelbeginFL}{} %DIF PREAMBLE
\providecommand{\DIFdelendFL}{} %DIF PREAMBLE
\providecommand{\DIFadd}[1]{\texorpdfstring{\DIFaddtex{#1}}{#1}} %DIF PREAMBLE
\providecommand{\DIFdel}[1]{\texorpdfstring{\DIFdeltex{#1}}{}} %DIF PREAMBLE

\RequirePackage{settobox} %DIF PREAMBLE
\RequirePackage{letltxmacro} %DIF PREAMBLE
\newsavebox{\DIFdelgraphicsbox} %DIF PREAMBLE
\LetLtxMacro{\DIFOincludegraphics}{\includegraphics} %DIF PREAMBLE
\newcommand{\DIFaddincludegraphics}[2][]{\DIFOincludegraphics[#1]{#2}} %DIF PREAMBLE
\newcommand{\DIFdelincludegraphics}[2][]{} %DIF PREAMBLE
\LetLtxMacro{\DIFOaddbegin}{\DIFaddbegin} %DIF PREAMBLE
\LetLtxMacro{\DIFOaddend}{\DIFaddend} %DIF PREAMBLE
\LetLtxMacro{\DIFOdelbegin}{\DIFdelbegin} %DIF PREAMBLE
\LetLtxMacro{\DIFOdelend}{\DIFdelend} %DIF PREAMBLE
\DeclareRobustCommand{\DIFaddbegin}{\DIFOaddbegin \let\includegraphics\DIFaddincludegraphics} %DIF PREAMBLE
\DeclareRobustCommand{\DIFaddend}{\DIFOaddend \let\includegraphics\DIFOincludegraphics} %DIF PREAMBLE
\DeclareRobustCommand{\DIFdelbegin}{\DIFOdelbegin \let\includegraphics\DIFdelincludegraphics} %DIF PREAMBLE
\DeclareRobustCommand{\DIFdelend}{\DIFOaddend \let\includegraphics\DIFOincludegraphics} %DIF PREAMBLE
\LetLtxMacro{\DIFOaddbeginFL}{\DIFaddbeginFL} %DIF PREAMBLE
\LetLtxMacro{\DIFOaddendFL}{\DIFaddendFL} %DIF PREAMBLE
\LetLtxMacro{\DIFOdelbeginFL}{\DIFdelbeginFL} %DIF PREAMBLE
\LetLtxMacro{\DIFOdelendFL}{\DIFdelendFL} %DIF PREAMBLE
\DeclareRobustCommand{\DIFaddbeginFL}{\DIFOaddbeginFL \let\includegraphics\DIFaddincludegraphics} %DIF PREAMBLE
\DeclareRobustCommand{\DIFaddendFL}{\DIFOaddendFL \let\includegraphics\DIFOincludegraphics} %DIF PREAMBLE
\DeclareRobustCommand{\DIFdelbeginFL}{\DIFOdelbeginFL \let\includegraphics\DIFdelincludegraphics} %DIF PREAMBLE
\DeclareRobustCommand{\DIFdelendFL}{\DIFOaddendFL \let\includegraphics\DIFOincludegraphics} %DIF PREAMBLE
\makeatletter %DIF PREAMBLE
\let\sout@orig\sout %DIF PREAMBLE
\renewcommand{\sout}[1]{\ifmmode\text{\sout@orig{\ensuremath{#1}}}\else\sout@orig{#1}\fi} %DIF PREAMBLE
\makeatother %DIF PREAMBLE
\RequirePackage{listings} %DIF PREAMBLE
\RequirePackage{color} %DIF PREAMBLE
\lstdefinelanguage{DIFcode}{ %DIF PREAMBLE
  moredelim=[il][\color{red}\scriptsize]{\%DIF\ <\ }, %DIF PREAMBLE
  moredelim=[il][\color{blue}\sffamily]{\%DIF\ >\ } %DIF PREAMBLE
} %DIF PREAMBLE
\lstdefinestyle{DIFverbatimstyle}{ %DIF PREAMBLE
	language=DIFcode, %DIF PREAMBLE
	basicstyle=\ttfamily, %DIF PREAMBLE
	columns=fullflexible, %DIF PREAMBLE
	keepspaces=true %DIF PREAMBLE
} %DIF PREAMBLE
\lstnewenvironment{DIFverbatim}{\lstset{style=DIFverbatimstyle}}{} %DIF PREAMBLE
\lstnewenvironment{DIFverbatim*}{\lstset{style=DIFverbatimstyle,showspaces=true}}{} %DIF PREAMBLE
\begin{document}

\begingroup
\makeatletter
\let\addcontentsline\@gobblethree
\makeatother
\begin{abstract} An AI system's output is not the fact or world state it
appears to describe, but rather an engineered representation. We propose a
semantic framework to describe AI systems, to be able to examine the
correctness of such representations. To do so, we distinguish what is justified
by accepted domain knowledge, what reference sources say, and what the system
can currently use.  This allows us to give precise definitions to common
failures: extrapolation, refuted or unsupported assertion, sources versus
knowledge mismatch, stale or refuted source, added hypotheses, unsupported
use\DIFdelbegin \DIFdel{\ldots }\DIFdelend \DIFaddbegin \DIFadd{. }\DIFaddend We hope our framework gives a useful vocabulary for
specifying and checking AI systems whose outputs, citations, tool calls, and
world-changing actions must be justified by reliable claims and explicit
authority rather than apparent fluency.  \end{abstract} \endgroup

\begingroup
\makeatletter
\let\addcontentsline\@gobblethree
\makeatother
\maketitle
\endgroup

\section{Introduction}

Many deployed AI systems are used either as assistants that answer questions or
as agents that perform actions. Both kinds may proceed through a series of
iterations\DIFaddbegin \DIFadd{: an assistant may retrieve, answer, check, and revise, while an
agent may repeatedly observe, select context, call a component or service,
record the result, and decide whether to act or iterate}\DIFaddend .

Such systems are often discussed in two ways that we find misleading. As magic
systems: as if their behaviour could not be decomposed and understood, akin to
machine learning as alchemy~\cite{rahimiRecht2017Alchemy}. As oracles: as
if an answer produced by the system were already the fact or world state of
interest; other works warn against conflating fluent generated text with
factuality~\cite{weizenbaum1966eliza,benderKoller2020,benderEtAl2021,jiEtAl2023Hallucination}.

Thus important questions are difficult to answer with precision: what is
accepted knowledge in a given domain?  What are the reference sources? What do
they say?  What can the system use at this point in time? Does this differ from
accepted domain knowledge or reference sources?  What part of
the world can the system observe or change? Is the system \DIFdelbegin \DIFdel{allowed to }\DIFdelend \DIFaddbegin \DIFadd{permitted to observe
or change that part of the world}\DIFaddend ? Can we
record, examine and check what led to a consequential step such as an
observation of or a change to the world?

We start from the stance that AI systems should be treated as engineered
semantic abstractions, not as magic or oracles: an AI system's output is not
the object it appears to describe, but rather, like Magritte's pipe, a
representation of it. At a high level, we consider a system to consist of:
\begin{itemize}[leftmargin=*,itemsep=0pt,topsep=2pt]
\item Knowledge Bases: possibly erroneous material available to the system, e.g. prose or code;
%\item a System Prompt: standing instructions that configure or constrain the
%system's behaviour;
\item Prompts: the user's queries, refinements, and steering during an
interaction;
\item Compute Components: \DIFdelbegin \DIFdel{e.g.}\DIFdelend \DIFaddbegin \DIFadd{message transformers, e.g., symbolic checkers, one
model}\DIFaddend , \DIFdelbegin \DIFdel{one model }\DIFdelend or a cohort of models\DIFdelbegin \DIFdel{that
transform }\DIFdelend \DIFaddbegin \DIFadd{, that turn }\DIFaddend prompts and reference sources into
\DIFdelbegin \DIFdel{messages, artefacts, or
actions, as well as Agent Services such as cameras}\DIFdelend \DIFaddbegin \DIFadd{further messages or artefacts}\DIFaddend ;
\item \DIFaddbegin \DIFadd{Agent Services: tools, sensors, browsers, APIs, or actuators
through which the system can observe or affect the world;
}\item \DIFadd{Interfaces: the language of messages, observations, actuations, event
records, and traces through which the other parts communicate;
}\item \DIFaddend Actuations: the responses, artefacts, actions, or world modifications
produced by the system;
\item an Orchestrator: a possibly empty orchestration layer around the Compute
Components that may call tools, retrieve files, or iterate before actuation.
\end{itemize}

\begin{PipeForOmaFrame}
\small
\noindent
We use this example throughout: Oma needs to renew her passport using an
AI-aided application. The system may need to inspect her old passport, check
current guidance on what pictures are acceptable, take or evaluate a picture,
fill in a form, handle an interruption such as the doorbell, save progress,
check that everything is ready for submission, and ask Oma for confirmation to
submit.

The Knowledge Bases need to contain official passport requirements; the
Prompts may be Oma's requests and corrections; Compute Components may be an
LLM procured by the government \DIFdelbegin \DIFdel{, and }\DIFdelend \DIFaddbegin \DIFadd{or a symbolic form checker; Agent Services may
include }\DIFaddend Oma's phone camera \DIFaddbegin \DIFadd{and the government web service}\DIFaddend ; the Orchestrator may be
a software stack around the LLM that decides whether to retrieve, ask, check,
save, or mark the application ready, and Actuation may consist of a local
PDF export.  \end{PipeForOmaFrame}

\DIFdelbegin \DIFdel{A system does not manipulate the semantic objects of interest directly:
retrieving sources, calling tools, proposing updates, is done through
interfaces. This interface layer embodies the core distinction of this work: an
AI answer, prompt or tool call is but a message whose meaning, source,
authority, and effect have to be interpreted and checked w.r.t. an underlying
semantics.}\DIFdelend
\DIFaddbegin \DIFadd{A system does not manipulate semantic objects directly: it
retrieves sources, calls tools, and proposes updates through interfaces. Our
contribution is, in the Scott--Strachey/abstract-interpretation sense, a
semantics of reliance at this boundary: it relates messages to represented
objects, separates source-derived from universal justification, and makes those
distinctions part of transition acceptance.}\DIFaddend

\vspace*{-2mm}
\begin{PipeForOmaFrame}
\small
\noindent
Consider Oma completing the form to renew her passport. The system records a
trace of events: route selection (e.g. UK or FR), passport observation, form
population, picture and form checks, application readiness check, Oma's
confirmation, permission, and submission.  The readiness event is not itself
proof that the application is ready.  Its message must be interpreted, checked
against sources, and supported by the current trace before the system may rely
on it. The later definitions formalise this path and the failure modes that
arise when necessary observations, witnesses, authority, or freshness
conditions are missing.  \end{PipeForOmaFrame}

\DIFdelbegin %DIFDELCMD < \paragraph{%%%
\DIFdel{Outline}%DIFDELCMD < \MBLOCKRIGHTBRACE
%DIFDELCMD < %%%
\DIFdel{Sections~\ref{sec:knowledge-base} and~\ref{sec:objects} define knowledge bases, the communication language used by the system, the world, traces, contexts and prompts.
Section~\ref{sec:information-state-ai-system} defines the Information
State of an AI system.  Section~\ref{sec:candidate-to-reliable-claims} explains
how messagesbecome candidate claims, when those are supported and may be
relied on.Section~\ref{sec:mismatches} examines Information State mismatches, which we hope gives a taxonomy of what is often called hallucination.
Section~\ref{sec:compute-components} defines compute components such as neural
and symbolic components, services, and orchestrators.Section~\ref{sec:def-spec-soundness} assembles these layers into the definition
of an AI system and gives quality judgments.
}\DIFdelend % Section~\ref{sec:refinement-vibe-coding} instantiates our framework for
%vibe coding, treating generated patches as messages, and refinement as a
%supported claim.

Thus we propose that AI systems are abstractions of ideal semantic systems. An
ideal semantic system relies on an underlying semantics and its actuations are
grounded over that semantics. An AI system operates through partial, lossy, and
sometimes speculative abstractions of that semantics.

Of course in certain domains, the underlying semantics may be unclear, e.g. when
dealing with prose.  But our framework may still help locate a system's
obligations and failure modes. 
\begin{PipeForOmaFrame}
\small
\noindent
\DIFdelbegin
\DIFdel{A prototype, which can be made available upon request, exemplifies the concepts
introduced in this paper: the app lets a user, or an agent on behalf of that
user, fill in a passport renewal form, take a picture of themselves, scan their
existing passport page, and save the current draft when the doorbell
rings. }
\DIFdelend
\DIFaddbegin
\DIFadd{A prototype, which can be made available upon request, implements the
passport-renewal example used throughout this paper: the app lets a user, or an
agent on behalf of that user, fill in a passport renewal form, take a picture
of themselves, scan their existing passport page, and save the current draft
when the doorbell rings. }
\DIFaddend
\end{PipeForOmaFrame}

\DIFaddbegin \paragraph{\DIFadd{Structure of the framework.}}
\DIFadd{Our semantic framework is structured as follows:}
\begin{itemize}[leftmargin=*,itemsep=1pt,topsep=2pt]
\item \DIFadd{\emph{Knowledge bases \(K_U,K_S,K_E\).} Purpose: distinguish
accepted domain knowledge, source-derived knowledge, and the system's current
working knowledge.  Structure: rule systems with fixed-point closure and
statuses \((+,-,\pm,?)\).}
\item \DIFadd{\emph{Messages, sources, and claim judgments.} Purpose: say what
messages and source items represent and classify their claims as candidate,
asserted, supported, or reliable.  Structure: partial denotations, selection,
assertion, and support maps, with abstractions such as ignorance and
forgetfulness.}
\item \DIFadd{\emph{System states, traces, and transitions.} Purpose: represent
iteration, provenance, freshness, authority, and world effects.
Structure: a labelled transition system checked by specifications, acceptance
certificates, and safeguards.}
\end{itemize}
\DIFadd{Sections~\ref{sec:knowledge-base}--
\ref{sec:candidate-to-reliable-claims} define the knowledge bases, message
semantics, and claim judgments; Section~\ref{sec:finite-readiness-instance}
gives a complete finite example and an invalidating form edit.
Section~\ref{sec:mismatches} classifies unsupported, conflicting, and stale
claims.  Sections~\ref{sec:compute-components}--\ref{sec:controlled-soundness}
define components, services, transitions, and acceptance.  The
theorems certify coverage, soundness, and adequacy of these composed contracts:
Exhaustiveness checks coverage of failed reliability conditions, Trace
Soundness checks that temporal diagnostics imply reliance failure, and the
later results check the specification, safeguard, and interface contracts.
}\DIFaddend

\section{Knowledge Base \label{sec:knowledge-base}}
This section defines the machinery used for knowledge bases. A knowledge
base~$K=\langle D,F,R,t\rangle$ at time $t\in\mathbb{N}$ features a domain $D$
(a collection of objects, such as sets or types), facts $F\in\wp(D)$, and a
rule system~$R$ over domain~$D$. A rule $\frac{P}{c}$ has premises
$P\in\wp(D)$ and a conclusion $c\in D$. If $P=\emptyset$ the rule is called an
axiom.  The rule system  $R\triangleq\Bigl\{\frac{P_i}{c_i}\Bigm|
i\in\rho\Bigr\}$ over~$D$ is a possibly infinite family of rules indexed by a
set~$\rho$.

Standard \cite{Shoenfield1967-classical-logic}, intuitionistic
\cite{vanDalen2013intuitionistic-logic}, and modal
\cite{Blackburn2001modal-logic} logics can be formalised using such
Hilbert-style proof systems \cite{Hilbert1928proof-system}. Although introduced
for logics \cite{Hilbert1928proof-system}, such rule-based definitions can
define any subset of a domain $D$, hence encompass grammars
\cite{DBLP:journals/tit/Chomsky56}, databases
\cite{Silberschatz2020data-bases}, knowledge graphs
\cite{HoganEtAl2022knowledge-graphs}, and deductive systems
\cite{Aczel1977inductive-definitions}, and may also contain contradictory
facts.

\begin{foroma}
One rule may say that if a candidate passport picture respects current official guidance, then the picture respects the overall rules for the application.
\end{foroma}

Next we need to say what consequences may be justified by $R$ from the facts in
$F$:
\begin{itemize}[leftmargin=*,itemsep=0pt,topsep=2pt]
\item \emph{Proofs.} A theorem has hypotheses~$H\in\wp(D)$, a conclusion~$c\in
D$, and a proof.  If every premise set~\(P_i\) of every
rule~\(\frac{P_i}{c_i}\) in~$R$ is finite, the proof is a sequence~$p_1
p_2\ldots p_n$ s.t. each~$p_i$ is a hypothesis in~$H$, a fact in~$F$, or the
conclusion of a rule s.t. all elements of~$P_i$ previously appeared in the
sequence, and the last element in the sequence is the conclusion of the
theorem. If some premises are infinite, the proof may rely on the principle of
transfinite induction~\cite{pohlers1989-proof-theory}.

\begin{foroma}
A proof step may be s.t. if freshness and official sources have been
checked, then the draft may be marked ready.
\end{foroma}

\item \emph{Inference.} 
If \(X\) is the set of facts already reached, then
$\textsf{Infer}_{\!R}(F)(X)\triangleq F\cup X\cup\{c_i\mid i\in\rho\wedge
P_i\subseteq F\cup X\}$ adds facts \(F\) to \(X\), and every conclusion whose
premises are available from \(F\cup X\).

\begin{foroma}
A proof step may add ``the draft may be marked ready'' only
after the required source, picture-check, \texttt{CheckForm}, and
\texttt{ConfirmDraft} facts are already reached.
\end{foroma}

\item \emph{Chaining.} Forward chaining repeatedly applies
$\textsf{Infer}_{R}(F)$ to justify consequences from the facts already
available. Backward chaining replaces a set of goals $G\subseteq D$ by the
premises $P$ of a rule $R$ that would prove it: $G \leadsto_{R}
(G\setminus\{c\})\cup P \hbox{ when } \frac{P}{c}\in R\hbox{ and }c\in G$.

\begin{foroma}
Forward chaining starts from observations and sources and justifies claims such
as ``picture respects rules'' or ``form ready''. Backward chaining starts from
``may mark ready?'' and reduces it to source freshness, picture compliance,
form completion, \texttt{CheckForm}, and Oma's confirmation.
\end{foroma}

\item \emph{Closure.} For a set \(B\subseteq D\), let
$\textsf{Infer}^{\ast}_{R}(B)\triangleq \textsf{lfp}^{\subseteq}\bigl(\lambda
X.\textsf{Infer}_{R}(B)(X)\bigr)$, where \(\textsf{lfp}^{\subseteq}\) is the
least fixed point in order \(\subseteq\), which exists by
Tarski's theorem~\cite{Tarski1955fixpoint} since \(X\mapsto
\textsf{Infer}_{R}(B)(X)\) is \(\subseteq\)-monotone in \(X\).  A conclusion
\(c\) is justified from facts \(F\) and hypotheses \(H\) when
$c\in\textsf{Infer}^{\ast}_{R}(F\cup H)$.  

\item \emph{Paraconsistent justification.\label{sec:paracons}} Given
\(K=\langle D,F,R,t\rangle\), we say that $K$ justifies a claim $c$ when
$K\vdash c \triangleq c\in\textsf{Infer}^{\ast}_{R}(F)$, and that $K$ does not
justify $c$ when $K\not\vdash c \triangleq
c\notin\textsf{Infer}^{\ast}_{R}(F)$. A knowledge base may justify a fact, its
negation, both, or neither. Thus \(K\vdash c\) may not mean that \(c\) is
reliable.  Absence of justification is not negation: \(K\not\vdash c\) does not
imply \(K\vdash\neg c\). We distinguish four base statuses: $K\vdash^{+}c
\triangleq K\vdash c\wedge K\not\vdash\neg c$, $K\vdash^{-}c \triangleq
K\vdash\neg c\wedge K\not\vdash c$, $K\vdash^{\pm}c \triangleq K\vdash c\wedge
K\vdash\neg c$, $K\vdash^{?}c\triangleq K\not\vdash c\wedge K\not\vdash\neg c$.
We also use the derived shorthand $K\vdash^{\oplus}c \triangleq K\vdash^{+}c \oplus
K\vdash^{-}c$ for one-sided justification.  When the time index is useful, we
write \(K\vdash^{\times}_t c\) for the corresponding base status at time \(t\),
with \(\times\in\{+,-,\pm,?\}\), and use \(K\vdash^{\oplus}_t c\) as the derived
one-sided shorthand.  \end{itemize}

\DIFaddbegin \DIFadd{This four-way distinction follows the Belnap--Dunn/FDE treatment of independent
support for a claim and its negation~\cite{belnap1977useful,dunn1976intuitive}.
Our justification relation is fixed-point rule closure.  Dung, ABA, and ASPIC+
instead study attack, assumptions, and argument acceptability at this
knowledge-justification layer~\cite{dung1995acceptability,bondarenko1997aba,modgilPrakken2014aspic};
they do not by themselves define message denotation, distinguish source-derived
from universal knowledge, or provide selected contexts, trace freshness,
authority, and actuation structures introduced below.
}

\DIFaddend \section{Communication Language, World, Traces, Context, Prompts\label{sec:objects}}

This section introduces messages, world states, event traces, contexts, and
prompts.

\subsection{Communication Language\label{sec:communication-language}} To
communicate, the components of an AI system share a common language
$\textit{Lang}$. This language may include labels, token sequences,
communication codes, matrices, computer programs, data tables, URLs, tool-call
schemas, etc, for the usage of the AI system.  
\DIFaddbegin
\DIFadd{Thus \textit{Lang} includes natural-language and formal messages;
encoders and decoders translate to and from component languages
(Section~\ref{sec:compute-components}).}
\DIFaddend
%If $m\in\textit{Lang}$ is a logical statement, we write $\neg
%m\in\textit{Lang}$ for its negation.
%Otherwise, $\neg m$ is a shorthand for a denial message.

Below, the objects written $m_{*}$ are communication-language messages:
elements of $\textit{Lang}$. They are not world objects and not semantic facts.
\begin{foroma} In the app, \(m_{\textit{ready}}\triangleq\texttt{result =
passed}\) is recorded in event \(e_{\textit{ready}}\) in the trace, which comes
from clicking the button \texttt{Check Everything Ready for Submission} to run
checks on the current draft.  \end{foroma}

\subsection{Semantics of the Communication Language}

Now we need to determine the meaning of messages. We write
$\mathsf{denotation}_{\textit{Lang}} : \textit{Lang}\rightharpoonup
\mathsf{SourceItems}$ for the denotation of messages, where
$\mathsf{SourceItems}$ is the set of source items, which we discuss more amply
in Section~\ref{sec:source-derived-kb}. Because denotation may not be defined, we write:
  \[
  \mathsf{Denotes}_{\textit{Lang}}(m,s)
  \triangleq
  m\in\textit{Lang}
  \wedge s\in\mathsf{SourceItems}
  \wedge
  \mathsf{denotation}_{\textit{Lang}}(m)=s
  \text{ is defined}.
  \]

\begin{foroma}
Consider the application readiness check message \(m_{\textit{ready}}\).  Its
$\mathsf{denotation}_{\textit{Lang}}(m_{\textit{ready}}) =
s_{\textit{readinessChecksPassed}}$ records the fact that according to the
app design, the message denotes a passed readiness check. But if Oma edited
a required field or replaced the picture after the check, the message still
has the same denotation, but the trace no longer supports using it for
submission readiness.  \end{foroma}

Thus denotation alone does not make a source item usable. \DIFaddbegin \DIFadd{As in denotational
semantics, the map separates an expression from what it denotes
~\cite{strachey2000fundamentalConcepts,scottStrachey1971mathematicalSemantics};
the expression/reference distinction also has roots in Frege and
Peirce~\cite{frege1892sinn,peirce1931collectedPapers}.  Here denotation is an
application-defined partial map; the later claim judgments separately check
assertion, support, and reliability.
}\DIFaddend

\begin{foroma}
In the prototype, \(\mathsf{denotation}_{\textit{Lang}}\) is a schema: a
communication message \(m\in \textit{Lang}\) carries explicit fields, and
\(\mathsf{denotation}_{\textit{Lang}}(m)=s\) is defined only when exactly
one row matches. Thus for the $m_{\textit{ready}}$ message above, the app
records $m=\{\texttt{messageType}=\texttt{CheckResultMessage},
\texttt{actionKind}=\texttt{checkReady},
\texttt{resourceKind}=\texttt{currentDraft},
\texttt{payloadKind}=\texttt{readinessResult},
\texttt{result}=\texttt{passed}\}$, which matches exactly one row:
$\mathsf{denotation}_{\textit{Lang}}(m)=s_{\textit{readinessChecksPassed}}$.
If no row matches, denotation is undefined; if more than one row matches,
the orchestrator records failed denotation rather than choosing silently.
The action, resource, result kind, and any state on which denotation
depends must be explicit in \(m\) or in the trace; hidden browser state,
model state, or guessing is not allowed to decide the denotation.
\end{foroma}

\subsection{Model of the World}

Our model of the world is a formal representation of the external state that an AI system
can observe, affect, or be affected by. It
includes at least:
\begin{itemize}[leftmargin=*,itemsep=0pt,topsep=2pt]
\item the set of possible world states $W$;  at time $t$, the current
world state is $w_t$;
\item the history $w_1, \ldots, w_t$ of world states at time $t$;
\item what the AI system can observe, via $\mathsf{observation}$;
\item what the AI system can change, via $\mathsf{actuation}$;
\item which world changes are due to the AI system, users, tools, or external
events;
\item specifications saying which reads and writes are allowed or correct.
\end{itemize}

The system may observe only part of the world. We write
$\mathsf{observation}\in \textit{Services}\times W\rightarrow \textit{Lang}$
for the service-relative observation interface, possibly lossy,
permission-limited, and time-dependent. 

\begin{foroma}
The interface $\mathsf{observation}$ covers things such as reading a
prompt, observing an uploaded passport page, capturing or importing a
picture, reading the current draft form, inspecting checker results,
retrieving official guidance, and observing an interruption or later form
edit.  \end{foroma}

The AI system may also modify part of the world. We write
$\mathsf{actuation}\in (W\times \textit{Lang})\rightarrow W$ for the
actuation interface. Thus $\mathsf{actuation}(w,m)$ should be effectless when the message
\(m\in\textit{Lang}\) is not an allowed update from world state \(w\).  
\begin{foroma}
The interface $\mathsf{actuation}$ covers things such as filling a field,
saving a draft, taking or importing a picture, exporting a PDF, and recording
readiness. 
\end{foroma}

\subsection{Traces}

The world state may change outside the system's control. A user, a tool, an
external service, may move $w_t$ to $w_{t+1}$. The relation between the two
states may be unknown to the system. 

\begin{foroma}
The app may populate fields in the form from the old passport. At the
same time, Oma may edit a field, replace the uploaded picture, or reset
the form. We must therefore record observations and invalidations of the
current form state before the system may rely on a prior readiness check.
\end{foroma}

Thus we keep separate records: the history $w_1\ldots w_t$, an event trace
recording what happened, and the context, i.e. material selected from that
trace for a particular component call. An event is a record $e=\langle
\mathsf{name},\mathsf{comment},\mathsf{actor},\mathsf{message},\mathsf{kind},\mathsf{resource},
\mathsf{observation},\mathsf{actuation},\mathsf{authority},\mathsf{witness}
\rangle \in \textit{Ev}.$ We write e.g. $e.\mathsf{message}$ for the $\mathsf{message}$ field of the event $e$.
\begin{foroma}
Pressing the UI button ``Check Everything Ready for Submission'' produces an event:
  \[
  \begin{array}{rll}
  e_{\textit{ready}}=\{&
    \texttt{name} =\texttt{CheckReady},&  \texttt{comment}=\texttt{Everything ready for submission checked},\\
  & \texttt{actor}=\texttt{AppAgent},& \texttt{message} = m_{\textit{ready}},
  \text{ with the remaining fields as specified below}\} \\
  \end{array}
  \]
\end{foroma}

The actor may be, for example, the user, a compute component, or the
environment. The kind records whether the event is a
read, write, check, authority event, internal computation, or external
mutation. The resource says what was read, written, or
checked. The observation and actuation fields hold
messages in a language discussed in
Section~\ref{sec:communication-language}.  The authority
field records permission, consent, revocation, or delegation.  The
witness field records source pointers, tool results,
proof objects, checker results, timestamps, or trace slices.

\begin{foroma}
A picture-check event might record $e.\mathsf{witness} =
\{\mathit{sp}_{\textit{gov-pic-guidance}},r_{\textit{pic-check}}\}$, with 
\(\mathit{sp}_{\textit{gov-pic-guidance}}\) a source pointer and
\(r_{\textit{pic-check}}\) a checker result.  \end{foroma}

A trace is a finite event sequence $\textit{trace}=e_1\ldots
e_n\in\mathsf{Trace}\triangleq \textit{Ev}^\ast$.  For \(0\le k\le n\), let
\(\textit{trace}_{\le k}\triangleq e_1\ldots e_k\), with \(\textit{trace}_{\le
0}=\epsilon\), be the prefix of the trace available after \(k\) events. Often,
we identify the time $t$ with the current number of events recorded in the
trace.

\DIFaddbegin \DIFadd{This trace has the lineage role studied in provenance and audit
systems~\cite{moreauMissier2013provdm,bunemanKhannaTan2001whyWhere,gargJiaDatta2011audit}.
Unlike an unstructured log, its event fields are interpreted below and checked
for current witnesses, authority, and permitted effects; this is where the
framework meets runtime verification~\cite{leuckerSchallhart2009rv}.
}\DIFaddend

\begin{foroma}
A trace for the scenario where Oma successfully submits her UK
passport renewal application by scanning her old passport to populate the form
is as follows:
{\small
$  \langle
  \texttt{UKRenewalPicked},
  \texttt{ObservePassport},$
  $\texttt{PopulateFields},
  \texttt{CheckPicture},
  \texttt{CheckForm},
  \texttt{ConfirmDraft},
  \texttt{CheckEverything}$ \\ $\texttt{ReadyForSubmission},
  \texttt{SubmissionPermit}, 
  \texttt{Submit}
  \rangle .$}
The events' names are shorthands for the events, e.g.
\(\texttt{CheckReady}\) for the event
\(e_{\textit{ready}}\) above.  \end{foroma}

The world state history, event trace, and context are related but not
identical. A read may
add an event without changing the physical world. A write may both add an event
and move the world from $w_t$ to $w_{t+1}$. An external mutation may change the
world before the AI system has observed it. The context may omit trace events,
summarise them, or include computed communication messages, but it is not the
authority record used by event specifications.
\begin{foroma}
The model of the world rules out states that cannot be true of the
passport renewal application: the same current draft cannot both contain and
not contain a required field value; a readiness check cannot be for a picture
different from the current uploaded picture; and a filled PDF cannot faithfully
represent a form state that has since been reset or edited.  
\end{foroma}

\vspace*{-3mm}
\subsection{Context}
Calls to AI systems are often talked about in terms of their context. We
let \(\textit{ctx}\in\textit{Lang}^{\ast}\) range over messages available
to the system at some point in the trace. More specifically, \textit{ctx}
is an application-specific abstraction of \textit{trace}: $\textit{ctx}_t
\in \mathsf{Select}_t(\textit{trace})$: when selecting context, the
system may compress, forget, compact, rearrange the context as required
by the application. The selection may use the orchestrator policy,
component being called, memory state, and context-window budget. 

\vspace*{-2mm}
\subsection{Prompts}

Users communicate with a system via prompts: a system prompt is a
standing constraint recorded in the trace and reflected in the contexts
selected from it.  A user prompt is what the system is being asked to do
right now, also recorded in the trace.

Some prompts gather information, e.g. reading files or
making web searches. Other prompts require actions from the system, e.g.
answering questions, creating or editing files, calling tools.

\section{Information State of an AI system \label{sec:information-state-ai-system}}
We aim to distinguish the underlying semantics from the reference
sources, and to distinguish these two objects from what the AI system is
using. Thus we introduce three different knowledge bases which, together
with the reference sources $\textit{Src}_t$ constitute the Information
State of an AI system: \begin{itemize}[leftmargin=*,itemsep=0pt,topsep=2pt]

\item Section~\ref{sec:universal-knowledge-base} defines the universal knowledge base $K_{U,t}$: accepted domain knowledge. 

\item The reference sources $\textit{Src}_t$ consist of source pointers or
resources such as pages, files, and tool results.

\item Section~\ref{sec:source-derived-kb} defines the source-derived
knowledge base $K_{S,\textit{Src},t}$, computed from facts extracted from
$\textit{Src}_t$: it may be wrong, incomplete, stale, or inconsistent.

\item Section~\ref{sec:effective-kb} defines the AI system's effective
knowledge base~$K_{E,\textit{ctx},t}$: what the system can use from the
current context \textit{ctx}, including mistaken, stale, compressed
material.  $K_{E,\textit{ctx},t}$ is not what is in context; rather the
context is an input to $K_{E,\textit{ctx},t}$.  \end{itemize}

\begin{foroma}
\(\textit{ctx}\) may contain messages describing the official guidance,
picture-check result, and confirmation record, while \textit{Src} contains
e.g. passport-picture or form-validity rules.
\end{foroma}

Throughout the paper, when we use conceptual objects that exist in the
mathematics but may not appear directly in an AI system implementation, we
highlight them with an overline, e.g. $\bar{U}$.

\subsection{Universal Knowledge Base \label{sec:universal-knowledge-base}}
The domain of the universal knowledge base $K_{U,t}$ is a universe $\bar{U}$ such that:
\begin{itemize}[leftmargin=*,itemsep=0pt,topsep=2pt]
\item The elements of the universe $\bar{U}$ encode facts, objects, and
reasoning steps: token sequences, pictures, videos, recordings,
theorems, proofs, laws, programs, events, and permissions. One can read
$\bar{U}$ as many-sorted, or as a single encoded universe with tags for the
different sorts of objects;
\item Without loss of generality, $\bar{U}$ need not evolve in time: it can be
a sufficiently large set of representations, and contain the objects we need:
functions, relations, sets, sequences, and traces.
\end{itemize}

\begin{foroma}
Elements of $\bar{U}$ include picture facts, form field values,
source claims, confirmation events, readiness facts, and claims about whether a
given app action is allowed, as well as definitions of the data appearing on a
passport, the database where this data is recorded, and how to access that
database.
\end{foroma}

The Universal Knowledge Base w.r.t. a universe $\bar{U}$ and a rule
system $\bar{R}_t$ over $\bar{U}$ at a time~$t$ consists of selected hypotheses
$\bar{H}_t\subseteq\bar{U}$ together with the rule system $\bar{R}_t$:
$K_{U,t}\triangleq\langle \bar{U}, \bar{H}_t,\bar{R}_t,t\rangle$. 
\begin{foroma}
 $\bar{H}_t$ may include current domain facts such as the applicable passport
renewal rules, the meaning of form fields, and the validity conditions for a
submission, while $\bar{R}_t$ contains rules for deriving claims such as
whether a draft is complete, whether a picture satisfies the applicable
requirements, and whether an application may be submitted.
\end{foroma} 

\subsection{Source-derived Knowledge Base \label{sec:source-derived-kb}}

We assume that an AI system cannot always range over everything in $\bar{U}$.
Instead, a system has a scope $\overline{\textit{Scope}}\subseteq\bar{U}$, and
within that scope it may lose distinctions that are present in the universal
knowledge base. \DIFdelbegin
\DIFdel{We model these two limits by an ignorance abstraction and a
forgetfulness abstraction, which yield Galois connections in abstract
interpretation \mbox{%DIFAUXCMD
\cite{cousotCousot1979AI}}
.}
\DIFdelend
\DIFaddbegin
\DIFadd{We model such abstractions by Galois connections, as in abstract
interpretation \mbox{%DIFAUXCMD
\cite{cousotCousot1979AI}}
.  We consider two examples: ignorance and forgetfulness.}
\DIFaddend

\subsubsection{Ignorance and Forgetfulness Abstractions}

\paragraph{Ignorance.}

The $\overline{\textit{ignore}}$ abstraction
ignores what falls outside this scope: $\overline{\textit{ignore}}(X)=X\cap \overline{\textit{Scope}}$ for $X\in\wp(\bar{U})$.
The same restriction applies to rules:
$\overline{\textit{ignore}}(\bar{R}_t)
\triangleq
\Bigl\{
\frac{P}{c}\in \bar{R}_t
\Bigm|
P\subseteq \overline{\textit{Scope}}\wedge c\in \overline{\textit{Scope}}\,
\Bigr\}$.
Thus $\overline{\textit{ignore}}(\bar{R}_t)$ is sound but incomplete relative to
$\bar{R}_t$: any proof in $\overline{\textit{Scope}}$ is also a proof in $\bar{U}$, but some
proofs in $\bar{U}$ disappear because some facts, premises, or rules are
outside the system's scope.

\begin{foroma}
Ignorance is exemplified by a system that does not encode every
possible law, or institutional process that might not matter to passport
renewal.
\end{foroma}

\paragraph{Forgetfulness.}
The $\overline{\textit{forget}}$ abstraction formalises loss
of distinctions inside the system's scope. We write $\equiv_f$ for the
application-specific equivalence relation on \(\overline{\textit{Scope}}\) that
identifies semantic objects the AI system's representation cannot distinguish,
and $[x]_{\equiv_f}$ for the equivalence class represented by $x$. On sets,
$\overline{\textit{forget}}(X)=\{[x]_{\equiv_f}\mid x\in X\}$. On rules,
$\overline{\textit{forget}}(R)
\triangleq
\biggl\{
\frac{\{\,[p]_{\equiv_f}\;|\; p\in P\,\}}{\ustrut[c]_{\equiv_f}}
\biggm|
\frac{P}{c}\in R
\biggr\}\,$,
so abstract rules can no longer distinguish equivalent pieces of information.

\begin{foroma}
Forgetting is exemplified by \DIFdelbegin \DIFdel{a system compressing a
passport image.  For Oma's passport application, a full-resolution passport image, an OCR record,
and a statement such as ``old passport number was \(X\)''
may be distinct in \(\bar{U}\), but the app may
represent all of them by the same object}\DIFdelend \DIFaddbegin \DIFadd{two passport scans that the app reduces to the
same stored thumbnail.  One original scan preserves the digits needed by the
passport check, while the other is already too blurred to do so.  If
\(\equiv_f\) identifies the two scans, the abstract representation preserves
the fact that an image was received but cannot establish that the required
digits are readable.  This distinction is needed by the later usability check
and therefore important}\DIFaddend .
\end{foroma}

\subsubsection{Source-derived Knowledge Base and its semantics}

The reference sources \(\textit{Src}_t\) are not themselves facts: source-item
facts $F_{S,\textit{Src},t}\subseteq\mathsf{SourceItems}$ (with
$\mathsf{SourceItems}\triangleq \overline{\textit{Scope}}/{\equiv_f}$) are
extracted from \(\textit{Src}_t\).  The corresponding rule system is
$R_{S,t}\triangleq
\overline{\textit{forget}}\,(\overline{\textit{ignore}}\,(\bar{R}_t))$. Thus
the source-derived knowledge base is $K_{S,\textit{Src},t}\triangleq \langle
\mathsf{SourceItems}, F_{S,\textit{Src},t}, R_{S,t}, t \rangle$. Its inference
operator is \(\textsf{Infer}_{R_{S,t}}\), and $\mathsf{denotation}_{U}
\in\mathsf{SourceItems}\rightarrow\wp(\bar{U})$ defines the semantics of the
source-derived knowledge base $K_{S,\textit{Src},t}$ w.r.t. the universal
knowledge base $K_{U,t}$.

\begin{foroma}
A picture with infinite precision might be in $K_{U,t}$, and
its grainy processed sibling in $K_{S,\textit{Src},t}$.
\end{foroma}

\subsection{Effective knowledge base \label{sec:effective-kb}}

This section defines the Effective knowledge base of an AI system, i.e.
the material the system works with. It may be distinct from the Universal
and Source-derived Knowledge Bases, and instead be built from what is
often called the context $\textit{ctx} \in
\mathsf{Select}_t(\textit{trace})$, gathered e.g.\ from interactions with
the world recorded in \textit{trace}.

\begin{foroma}
$W$ contains Oma, the phone, the old passport, the camera view, the
doorbell, the draft form, the saved picture, and the government site. The
interface $\mathsf{observation}$ covers things such as camera
images, form text, retrieval results, and interruption notifications.
\end{foroma}

\subsubsection{Effective knowledge base and its Semantics}

We let \(K_{E,\textit{ctx},t}\) be the effective knowledge of an AI system:
$K_{E,\textit{ctx},t}\triangleq \langle
\textit{Lang},F_{E,\textit{ctx},t},R_{E,\textit{ctx},t},t\rangle$.

Recall that the context \textit{ctx} is gathered by the orchestrator by
interaction with users or external agents. The facts
$F_{E,\textit{ctx},t}\triangleq\{m\mid m\in\textit{ctx\/}\}$ are directly
extracted from this context \textit{ctx}. 

\DIFaddbegin \DIFadd{Retrieved documents and cited evidence are part of \(\textit{Src}_t\), from
which \(K_{S,\textit{Src},t}\) is built.  Retrieved or generated messages
selected for the current context contribute to the system's working knowledge
\(K_{E,\textit{ctx},t}\).  Keeping source material separate from working
knowledge reveals when the system relies on information that differs from its
sources~\cite{lewis2020rag,bohnet2022attributedQA,wang2024knowledgeConflicts}.
}\DIFaddend

\subsubsection{Context is not Effective Knowledge}

Recall that context $\textit{ctx}$ is an application-specific abstraction of
the \textit{trace} (which may include loss or extrapolation). Thus
$\textit{ctx}$ lives in~$\textit{Lang}$: it is not itself a knowledge base, nor
is it a subset of the effective knowledge~$K_{E,\textit{ctx},t}$.

Indeed context can contain something the system fails to use. A retrieved
document may state the correct rule, but the model may overlook it: the
rule is in context, but not justified by $K_{E,\textit{ctx},t}$.

\begin{foroma}
If the context says ``Oma has not confirmed readiness,'' the orchestrator
may justify a source item $s_{\textit{readinessNotAllowed}}$ even if that
exact symbol does not occur in the context.  \end{foroma}

\DIFdelbegin
\DIFdel{A context window is a bounded fragment of the context, sent as a response
to a call: the bound is imposed by the model and its deployment: tokens,
bytes, image slots, file attachments, retrieved chunks, tool-result size,
latency, cost, or policy. }
\DIFdelend
\DIFaddbegin
\DIFadd{A context window is a bounded fragment of the context, sent as a response
to a call: the bound is imposed by the model and its deployment: tokens,
bytes, image slots (a PDF page may occupy one slot),
file attachments, retrieved chunks, tool-result size, latency, cost, or policy. }
\DIFaddend

\subsubsection{Context is not Trace}

At a high-level, \textit{trace} is the event history: observations,
actions, tool calls, UI events, confirmations, invalidations, timestamps,
provenance, and \textit{ctx} is an application-specific abstraction
(including potential loss and extrapolation) available to the AI system
for building~$K_{E,\textit{ctx},t}$. Thus $K_{E,\textit{ctx},t}$ depends
solely on \textit{ctx}, not the whole \textit{trace}.

By contrast, event validity and source witness provenance often need
trace, because \textit{ctx} may omit or summarise the events that matter. 

Without the distinction, we may accidentally assume either \textit{ctx =
trace} so the system has perfect access to the whole event history, or
\textit{trace = ctx} which means we will not really be able to audit the
system, because we will only have access to the context. 

\begin{foroma} ``Oma confirmed the current draft and no later edit
invalidated it'' is a trace property, not merely a context-message
property.
\end{foroma}

\subsection{Knowledge Base Statuses}\label{sec:KnowledgeBaseStatuses}

Justification of a claim by a knowledge base is only
potential by default.  Writing \(+\), \(-\), \(\pm\), and \(?\) for the base
justification statuses of Section~\ref{sec:paracons}, and \(\oplus\) for the
derived one-sided shorthand,
we record what is justified by the Effective
knowledge base \(K_{E,\textit{ctx},t}\), the Source-derived knowledge base
\(K_{S,\textit{Src},t}\), and the Universal knowledge base \(K_{U,t}\).  To
keep the selected knowledge base explicit, each status predicate takes it as a
parameter $(\times \in \{+, -, \pm, ?\})$:
\begingroup
\small
\[
\begin{aligned}
\mathsf{Effective}_{\times,t}(m,s;K_{E,\textit{ctx},t})
&\triangleq
\mathsf{Denotes}_{\textit{Lang}}(m,s)
\wedge K_{E,\textit{ctx},t}\vdash^{\times}_t m
\\
\mathsf{Source}_{\times,t}(s;K_{S,\textit{Src},t})
&\triangleq
s\in\mathsf{SourceItems}
\wedge K_{S,\textit{Src},t}\vdash^{\times}_t s
\\
\mathsf{Universal}_{\times,t}(c;K_{U,t})
&\triangleq
c\in\bar{U}
\wedge K_{U,t}\vdash^{\times}_t c
\end{aligned}
\]
\vspace*{-3mm}
\endgroup
\vspace*{-2mm}

\subsection{Relationships between Knowledge Bases \label{sec:ai-system-knowledge}}

An AI system may not normally possess the universal knowledge base directly. Rather
it builds an Effective knowledge base from reference sources, observations,
tool results, memory, local system state, and traces. An AI actuation, whether
an answer to a question or an action performed, comes from that effective
knowledge base. It may or may not match the reference sources, and it may or
may not match the universal knowledge base. Contrary to mathematics, reference
sources need not be consistent: both a conclusion and its contrary may be
reached from defective sources.

The universal knowledge base $K_{U,t}$ is assumed given: in general $K_{U,t}$
will be application-dependent and thus the only general comment about it is
that it needs rules describing known facts (axioms) and reasoning steps
(induction rules). But in many domains, natural-language semantics is unclear,
disputed, or changing, so \(K_{U,t}\) is not fully available as a clean
mathematical object.  In other domains, such as mathematics, certain formally
specified programming languages, controlled forms, or controlled English,
\DIFaddbegin \DIFadd{\mbox{%DIFAUXCMD
\cite{kuhn2014survey,fuchs2008ace}}
, }\DIFaddend\(K_{U,t}\) may be precise enough to state obligations, but checking those
obligations may still be undecidable, expensive, approximate, or
domain-specific.

\begin{foroma}
\(K_{U,t}\) is the current normative passport-renewal process: which
UK or French renewal route applies, what the form fields mean, what picture
conditions are required, and what authority is needed before the app may mark a
draft ready or submit it. Official pages are \(\textit{Src}_t\); parsed guidance
and OCR/MRZ output are \(F_{S,\textit{Src},t}\); their
source-derived closure is \(K_{S,\textit{Src},t}\).
\end{foroma}

Source evidence has a different status. A source pointer \(\mathit{sp}\in
\textit{Src}_t\), recorded in a trace witness field, can witness that the
system retrieved or cited a source. It does not by itself show that the source
was current, official, complete, or authoritative, nor does it show that the
system correctly interpreted the source: that is an obligation on the
construction of \(K_{S,\textit{Src},t}\) from \(\textit{Src}_t\).

Finally, even if a source item is present in the system's effective knowledge
\(K_{E,\textit{ctx},t}\), that only says that the system can currently use
that item. It does not by itself show that any semantic claim asserted through
the item is justified by \(K_{S,\textit{Src},t}\), true in \(K_{U,t}\), or
authorised by the user.

These limitations are not a reason to abandon the formalisation. They mark
where the obligations sit. If \(K_{U,t}\) is unclear, an application written over
\(K_{U,t}\) should explicitly disclaim that. If source material is stale or
unofficial, the issue is in \(\textit{Src}_t\). If extraction or interpretation is
wrong, the issue is in \(K_{S,\textit{Src},t}\). If the system relies on a compressed, inferred,
or approximate item, the issue is in \(K_{E,\textit{ctx},t}\). If a witness only supports the
fact that a source said something, and not the semantic claim itself, that
distinction should remain visible in the trace through source pointers and
witness fields.

\section{From Candidate Claims to Reliable Claims\label{sec:candidate-to-reliable-claims}}

We aim to determine when a claim made or used by the system may be reliable.
\DIFdelbegin \DIFdel{The following diagram illustrates relationships between the concepts we
introduce}
\DIFdelend \DIFaddbegin \DIFadd{The four judgments answer different questions:}
\DIFaddend
\DIFaddbegin \begin{itemize}[leftmargin=*,itemsep=0pt,topsep=2pt]
\item \DIFadd{\(\mathsf{CandidateClaims}(m)\): Which semantic claims may \(m\)
stand for?  The answer is determined by
\(\mathsf{denotation}_{\textit{Lang}}\) and \(\mathsf{denotation}_{U}\); neither
map is indexed by \(\textit{ctx}\) or \(t\).}
\item \DIFadd{\(\mathsf{AssertedClaims}_{E,\textit{ctx},t}(m)\): Which candidate
claims is the system asking others to use?  This depends on \(\textit{ctx}\)
and \(t\).}
\item \DIFadd{\(\mathsf{SupportedClaim}_t
(m,c;K_{U,t},K_{S,\textit{Src},t})\): Do current sources and accepted domain
knowledge support \(c\)?  This depends on \(t\).}
\item \DIFadd{\(\mathsf{ReliableClaim}_t
(m,c;K_{E,\textit{ctx},t},K_{S,\textit{Src},t},K_{U,t},\textit{ctx})\): May the
system rely on this asserted claim now?  This depends on \(\textit{ctx}\) and
\(t\).}
\end{itemize}

\DIFadd{These are separate checks, not stages passed once and for all.
\(\mathsf{CandidateClaims}\) changes only if either denotation map changes.
Whether a message asserts a claim, whether that claim has support, and whether
it is reliable can change with the context, sources, or knowledge bases, so
each must be checked again at the current time.
}
\DIFaddend \subsection{Candidate Claims}

The Effective knowledge base \(K_{E,\textit{ctx},t}\)  over \textit{Lang}
records what the system is justified to use at time \(t\) for the context
\textit{ctx}. For a message \(m\in\textit{Lang}\), its being
justified by the Effective knowledge base does not mean it is true.

\begin{foroma}
$s_{\textit{readinessChecksPassed}}$ being justified by
\(K_{E,\textit{ctx},t}\) just means that this readiness-check result is considered usable for the current draft.  It should not be justified by 
\(K_{E,\textit{ctx},t}\) if Oma edited a required field, replaced the
picture, or if a later trace event otherwise
invalidated the draft state.
\end{foroma}

For a message \(m\) justified by \(K_{E,\textit{ctx},t}\), the map
\(\mathsf{CandidateClaims}\in \textit{Lang}\rightarrow\wp(\bar{U})\) records
which claims \(m\) might be standing for.  We let \(s_m\) be the unique source
item \(s\) s.t. \(\mathsf{Denotes}_{\textit{Lang}}(m,s)\). Then:
  \[
  \mathsf{CandidateClaims}(m)\triangleq
  \begin{cases}
  \mathsf{denotation}_{U}(s_m),
  & \text{if } s_m \text{ is defined},\\
  \emptyset,
  & \text{otherwise.}
  \end{cases}
  \]
  When \(s_m\) is defined, the claims in
  \(\mathsf{CandidateClaims}(m)\) are the semantic claims associated with the
  source item \(s_m\).  This may be a set with several elements because of the
  forgetfulness abstraction.  If \(s_m\) is undefined, \(m\) has no candidate
  claims.

  An empty candidate-claim set does not by itself make an event valid: event
  validity separately requires the event message to have defined denotation when
  the event is checked.

\begin{foroma}
A priori $\mathsf{CandidateClaims}(m_{\textit{ready}})$
might mean that the readiness checker returned
normally, or that all required form fields were nonempty, or that the picture check
and form check both passed for the current draft, or that the current draft
satisfies the local readiness conditions for submission.
\end{foroma} 

The definition of $\mathsf{CandidateClaims}$, like that of the underlying
semantics, is application-dependent. It is the role of e.g. application
developers and semanticists to define suitable semantics and
corresponding maps, given a domain and application. For an item $m$,
$\mathsf{CandidateClaims}(m)$ needs to say which underlying claims~$m$ is
allowed to stand for. There may be a priori more than one, and the
application developers and semanticists need to determine which
claims must be distinguished.

\begin{foroma}
In the app, the denotation schema maps message $m_{\textit{ready}}$ to a source
item $s_{\textit{readinessChecksPassed}}$. We then look up the row for that
source item $s_{\textit{readinessChecksPassed}} \mapsto
\{\mathsf{ReadinessChecksPassed}(\texttt{currentDraft})\}$.  This only says
that the app's local readiness check passed for the current draft. Since the
row does not contain, e.g. $\mathsf{UserPermissionGiven}(\texttt{Submission})$,
the orchestrator must not infer that permission from the readiness result. Thus
permission can only come from an explicit separate message from Oma.
\end{foroma}

\subsection{Asserted Claims}

The map \(\mathsf{CandidateClaims}\) records what a message could mean.  To say
which of those candidate claims the system is actually asserting through the
message, we use an application-specific predicate
\(\mathsf{Asserts}_{E,\textit{ctx},t}(m,c)\), s.t.
$\mathsf{Asserts}_{E,\textit{ctx},t}(m,c) \Rightarrow m\in\textit{Lang}\wedge
c\in\mathsf{CandidateClaims}(m)$.

The asserted claims of \(m\) are then:
\[\mathsf{AssertedClaims}_{E,\textit{ctx},t}(m)\triangleq
\{\,c\in\DIFdelbegin \DIFdel{\mathsf{CandidateClaims}(m)}\DIFdelend \DIFaddbegin \bar{U}\DIFaddend \mid K_{E,\textit{ctx},t}\vdash^{+}_t m
\wedge \mathsf{Asserts}_{E,\textit{ctx},t}(m,c)\,\}.\]
Thus \(\mathsf{CandidateClaims}(m)\) records possible meanings, while
\(\mathsf{AssertedClaims}_{E,\textit{ctx},t}(m)\) records the candidate claims
that the system actually asks the user, caller, or downstream component to rely
on.  The positive effective-knowledge condition
$K_{E,\textit{ctx},t}\vdash^{+}_t m$ excludes messages that are refuted,
contradictory, or unjustified in \(K_{E,\textit{ctx},t}\). This does not entail
that any asserted claim is justified by the source-derived knowledge base, let
alone by the universal knowledge base.

\subsection{Supported Claims}

Write \(\mathsf{SourceSupport}_t\subseteq
\mathsf{SourceItems}\times\bar U\).  The application-defined fact
\(\mathsf{SourceSupport}_t(s,c)\) means that source item \(s\) supports claim
\(c\) at time \(t\).  It must respect denotation:
$\mathsf{SourceSupport}_t(s,c) \Rightarrow
s\in\mathsf{SourceItems}\wedge c\in\mathsf{denotation}_{U}(s)$.

For a message \(m\) justified by \(K_{E,\textit{ctx},t}\), i.e.\ \(K_{E,\textit{ctx},t}\vdash^{+}_t m\), the corresponding candidate claims
are \(\mathsf{CandidateClaims}(m)\).
This does not mean that those claims are justified by the source-derived
knowledge base \(K_{S,\textit{Src},t}\), let alone the universal knowledge base
\(K_{U,t}\).
The predicate
\[
  \begin{aligned}
  &\mathsf{SupportedClaim}_t
  (m,c;K_{U,t},K_{S,\textit{Src},t})\triangleq
  m\in\textit{Lang}
  \wedge c\in\mathsf{CandidateClaims}(m)
  \\
  &\qquad{}\wedge
  \exists s\in\mathsf{SourceItems}\mathrel{.}
  \mathsf{Denotes}_{\textit{Lang}}(m,s)
  \\
  &\qquad{}\wedge \mathsf{Universal}_{+,t}(c;K_{U,t})
  \wedge \mathsf{Source}_{+,t}(s;K_{S,\textit{Src},t})
  \wedge \mathsf{SourceSupport}_t(s,c).
  \end{aligned}
\]
says that \(c\) is definitely justified by the Universal Knowledge base, and
that it is supported by a source item \(s\) definitely justified by the
Source-derived Knowledge base. 
%It does not use \(K_{E,\textit{ctx},t}\) as
%evidence for support.

\begin{foroma}
The claim that the draft satisfies $c_{\textit{readinessChecksPassed}}=
\mathsf{ReadinessChecksPassed}(\texttt{currentDraft})$ is supported only if the
relevant passport rules are available as source-derived knowledge, the
form-completeness check and picture check apply to the current draft, Oma
confirmed that draft.  \end{foroma} 

\subsection{Reliable Claims}

\(\mathsf{SupportedClaim}\) checks whether the candidate claim \(c\)
is justified by the universal and source-derived knowledge bases.
Reliability also requires that the message \(m\) through which the system
uses \(c\) is justified by current effective knowledge and that \(c\) is
one of the asserted claims of \(m\). With
$\mathsf{ClaimUse}_{E,\textit{ctx},t}(m,c)\triangleq m\in\textit{Lang}\wedge
c\in\mathsf{AssertedClaims}_{E,\textit{ctx},t}(m)$, the full reliability
predicate is
\[
\begin{aligned}
&\mathsf{ReliableClaim}_t(m,c;
K_{E,\textit{ctx},t},K_{S,\textit{Src},t},K_{U,t},\textit{ctx})
\triangleq\\
&\qquad
\mathsf{ClaimUse}_{E,\textit{ctx},t}(m,c)\wedge K_{E,\textit{ctx},t}\vdash^{+}_t m
\wedge\mathsf{SupportedClaim}_t(m,c;K_{U,t},K_{S,\textit{Src},t}) .
\end{aligned}
\] 
Thus $K_{E,\textit{ctx},t}$ appears in the judgment for what the system may use, while the support check is grounded in the source-derived and universal knowledge bases.

\begin{foroma}
\DIFaddbegin \DIFadd{Recording }\texttt{\DIFadd{result = passed}} \DIFadd{is not enough.  The message must denote a
readiness source item; that item must support a readiness claim justified by
the source-derived and universal knowledge bases; and the system must assert
the claim and justify the message in its effective knowledge.  The complete
instance below calculates each of these checks.}\DIFaddend
\end{foroma}

\DIFaddbegin \subsection{\DIFadd{A complete finite readiness instance}}
\label{sec:finite-readiness-instance}

\DIFadd{We exercise the definitions on a finite instance.  Set \(t_0=4\), let
\(D=\{d_0,d_1\}\) be successive drafts, and set
\(\mathcal P=\{\mathsf{PictureOK},\mathsf{FormComplete},\mathsf{Confirmed}\}\)
and \(\mathcal X=\mathcal P\cup\{\mathsf{Ready}\}\).  Let
\(q_c=\mathsf{Usable}(d_0,p_c)\), \(q_b=\mathsf{Usable}(d_0,p_b)\), and
\(Q=\{q_c,q_b\}\), where the clear scan \(p_c\) establishes that the required
digits are readable but the blurred scan \(p_b\) does not.  Let
\(\bar U=\{X(d),\neg X(d)\mid X\in\mathcal X,
d\in D\}\cup Q\cup\{\neg q\mid q\in Q\}\) and
\(\bar H_{t_0}=\{X(d_0)\mid X\in\mathcal P\}\cup\{q_c\}\).  For each \(d\in D\),
\(\bar R\) contains the rule with premises \(\{X(d)\mid X\in\mathcal P\}\)
and conclusion \(\mathsf{Ready}(d)\).}
\DIFadd{Thus \(K_{U,t_0}=\langle\bar U,\bar H_{t_0},\bar R,t_0\rangle\)
justifies \(\mathsf{Ready}(d_0)\), but not its negation.}

\paragraph{\DIFadd{The three knowledge bases and four claim judgments.}}
\DIFadd{Take \(\overline{\textit{Scope}}=\bar U\).  Let \(\equiv_f\) be equality
except that it identifies \(q_c\) and \(q_b\), because the app stores both scans
as the same thumbnail.  Write \(s_X(d)=[X(d)]_{\equiv_f}\) and
\(s_{\mathsf{thumb}}=[q_c]_{\equiv_f}=[q_b]_{\equiv_f}\).  Set
\(\mathsf{denotation}_{U}(s_X(d))=\{X(d)\}\) and
\(\mathsf{denotation}_{U}(s_{\mathsf{thumb}})=Q\), and set
\(R_S=\overline{\mathsf{forget}}(\overline{\mathsf{ignore}}(\bar R))\).
Let \(\textit{Src}_{t_0}=\{g_{\mathsf{official}},p_0,f_0,c_0\}\) and
\(F_{S,t_0}=\{s_X(d_0)\mid X\in\mathcal P\}\cup\{s_{\mathsf{thumb}}\}\).
Finally, set
\(\mathsf{SourceSupport}_{t_0}
=\{(s_X(d),X(d))\mid X\in\mathcal X,\ d\in D\}
\cup\{(s_{\mathsf{thumb}},q_c)\}\).}
\DIFadd{The four source symbols denote guidance, picture, form, and confirmation.
Let \(K_{S,t_0}\) have the source-derived components just defined.  Its
abstract rule gives \(K_{S,t_0}\vdash^+s_{\mathsf{Ready}}(d_0)\).}

\DIFadd{For \(X\in\mathcal P\), let \(m_X\) abbreviate \(m_{\mathsf{pic}}\),
\(m_{\mathsf{form}}\), and \(m_{\mathsf{conf}}\), respectively.  Let \(M\)
contain these three messages together with \(m_{\mathsf{checkReady}}\),
\(m_{\mathsf{ready}}\), \(m'_{\mathsf{ready}}\), \(m_{\mathsf{thumb}}\),
\(m_{\mathsf{markReady}}\), \(m_{\mathsf{observeEdit}}\),
\(m_{\mathsf{edit}}\), \(o_{\mathsf{edit}}\), \(o_\bot\), and \(m_\bot\).
Let \(\textit{Lang}=M\cup\{\neg m\mid m\in M\}\).  Let \(\textit{ctx}_0\)
contain, in order, \(m_{\mathsf{pic}}\), \(m_{\mathsf{form}}\), and
\(m_{\mathsf{conf}}\), followed by \(m_{\mathsf{thumb}}\).
The first three context messages record passed results for \(d_0\).  Define
\(\mathsf{denotation}_{\textit{Lang}}(m_X)=s_X(d_0)\) for \(X\in\mathcal P\)
and \(\mathsf{denotation}_{\textit{Lang}}(m_{\mathsf{ready}})
=s_{\mathsf{Ready}}(d_0)\) and
\(\mathsf{denotation}_{\textit{Lang}}(m'_{\mathsf{ready}})
=s_{\mathsf{Ready}}(d_1)\), and let
\(\mathsf{denotation}_{\textit{Lang}}(m_{\mathsf{thumb}})=s_{\mathsf{thumb}}\).
The rule set \(R_E\) contains the single rule
with premises \(\{m_X\mid X\in\mathcal P\}\) and conclusion
\(m_{\mathsf{ready}}\).  The assertion relation contains
\((m_{\mathsf{ready}},\mathsf{Ready}(d_0))\) and
\((m_{\mathsf{thumb}},q_c)\).}
\DIFadd{Then \(K_{E,\textit{ctx}_0,t_0}=\langle\textit{Lang},
\{m_X\mid X\in\mathcal P\}\cup\{m_{\mathsf{thumb}}\},R_E,t_0\rangle\)
justifies \(m_{\mathsf{ready}}\), but not its negation.  Direct substitution
in the definitions gives:
}
\begin{itemize}[leftmargin=*,itemsep=0pt,topsep=2pt]
\item \DIFadd{\(\mathsf{CandidateClaims}(m_{\mathsf{ready}})
=\mathsf{AssertedClaims}_{E,\textit{ctx}_0,t_0}(m_{\mathsf{ready}})
=\{\mathsf{Ready}(d_0)\}\).}
\item \DIFadd{\(\mathsf{SupportedClaim}_{t_0}
(m_{\mathsf{ready}},\mathsf{Ready}(d_0);K_{U,t_0},K_{S,t_0})\).}
\item \DIFadd{\(\mathsf{ReliableClaim}_{t_0}
(m_{\mathsf{ready}},\mathsf{Ready}(d_0);
K_{E,\textit{ctx}_0,t_0},K_{S,t_0},K_{U,t_0},\textit{ctx}_0)\).}
\end{itemize}
\DIFadd{For \(m_{\mathsf{thumb}}\), the candidates are \(Q\).  The obligation
to record which candidate claim the message actually asserts sits on
\(\mathsf{Asserts}\).  If it records \(q_c\), the claim has source support; if
it records \(q_b\), source support fails and \(K_{U,t_0}\vdash^?q_b\), so
Section~\ref{sec:mismatches} classifies it as an unsupported assertion.  The
framework evaluates either choice rather than selecting one itself.}

\paragraph{\DIFadd{A complete accepted transition.}}
\DIFadd{Let the four-event trace be
\(\textit{trace}_0=e_{\mathsf{route}}e_{\mathsf{pic}}e_{\mathsf{form}}
e_{\mathsf{conf}}\).}
\DIFadd{The event \(e_{\mathsf{route}}\) is named \texttt{UKRenewalPicked};
\(e_{\mathsf{pic}},e_{\mathsf{form}},e_{\mathsf{conf}}\) are named
\texttt{CheckPicture}, \texttt{CheckForm}, and \texttt{ConfirmDraft}.
Writing \(e_X\) for these three events as for the messages above, each carries
\(m_X\) and witnesses \(d_0\).  The system-controlled candidate
\(e_{\mathsf{ready}}\) is named \texttt{CheckReady}, has actor
\texttt{AppAgent}, kind \texttt{Write}, message \(m_{\mathsf{ready}}\), resource
\(d_0\), witnesses \(e_{\mathsf{route}},e_{\mathsf{pic}},e_{\mathsf{form}},
e_{\mathsf{conf}}\), observation \(o_\bot\), and actuation
\(m_{\mathsf{markReady}}\).
}
\DIFadd{Let \(w_0=(\texttt{draft}=d_0,\texttt{ready}=\mathsf{false}),\quad
w_0^+=(\texttt{draft}=d_0,\texttt{ready}=\mathsf{true})\), and
\(w_1=(\texttt{draft}=d_1,\texttt{ready}=\mathsf{false})\); take
\(W=\{w_0,w_0^+,w_1\}\).  The remaining choices needed to check acceptance
are \(\mathsf{Select}_{t_0}(\textit{trace}_0)=\{\textit{ctx}_0\}\) and
\(\mathsf{Requires}_{\textsf{Spec}}(\textit{trace}_0,w_0)
=\{\mathsf{Ready}(d_0)\}\).  Let
\(\mathsf{worldmap}_{\textsf{Spec}}(w_0)=\{X(d_0)\mid X\in\mathcal P\}\) and
\(\mathsf{worldmap}_{\textsf{Spec}}(w_0^+)
=\mathsf{worldmap}_{\textsf{Spec}}(w_0)\cup\{\mathsf{Ready}(d_0)\}\).
Define \(\mathsf{interp}_{\textsf{Spec}}(e_{\mathsf{route}})=\emptyset\),
\(\mathsf{interp}_{\textsf{Spec}}(e_X)=\{X(d_0)\}\) for \(X\in\mathcal P\),
and \(\mathsf{interp}_{\textsf{Spec}}(e_{\mathsf{ready}})
=\{\mathsf{Ready}(d_0)\}\).  Finally,
\(\mathsf{observation}(\textit{ReadyServ},w)=o_\bot\) and
\(\mathsf{actuation}(w_0,m_{\mathsf{markReady}})=w_0^+\); all other
actuations in this finite instance leave \(w\) unchanged.}
\DIFadd{Write \(q_0=(\textit{trace}_0,w_0,e_{\mathsf{ready}})\), with the tuple
expanded when passed to a predicate.  The checks
\(\mathsf{CurrentWitnessesOK}_{\textsf{Spec}}\),
\(\mathsf{AuthorityOK}_{\textsf{Spec}}\), and
\(\mathsf{AllowedEvent}_{\textsf{Spec}}\) all hold for \(q_0\).
The check \(\mathsf{ActuationOK}_{\textsf{Spec}}\) holds for
\((q_0,w_0^+)\).}
\DIFadd{The current-witness check records that
the selected route, picture check, form check, and confirmation all precede
\(e_{\mathsf{ready}}\), and that no intervening event invalidates them.}
\DIFadd{On this finite instance,
\(\mathsf{AcceptOK}(q_0,w_0^+)\) is the only true acceptance certificate.
The application-supplied \(\mathsf{AllowedEvent}_{\textsf{Spec}}\) is defined
to hold on that candidate, so \(\mathsf{AcceptAdequate}_{\textsf{Spec}}\)
holds.  This witnesses the adequacy condition rather than deriving the
application's policy; the framework calculates \(\mathsf{AcceptOK}\) from the
other supplied maps.}
\DIFadd{Let
\(\textit{Services}=\{\textit{ReadyServ},\textit{EditObserver}\}\), with
\(\textit{States}_{\textit{ReadyServ}}=\{r_0,r_1\}\),
\(\textit{States}_{\textit{EditObserver}}=\{u_0,u_1\}\), and
\(\textit{States}_{\textit{Orch}}=\{s,s',s''\}\).  Let
\(\vec v_0=(r_0,u_0)\), \(\vec v_1=(r_1,u_0)\), and
\(\vec v_2=(r_1,u_1)\), in the displayed service order.  The readiness-service
transition takes
\(\langle r_0,m_{\mathsf{checkReady}},o_\bot\rangle\) to
\(\langle r_1,m_{\mathsf{ready}},m_{\mathsf{markReady}}\rangle\).}\linebreak
\DIFadd{Let \(a_0=\mathsf{Accept}(e_{\mathsf{ready}},w_0^+)\) and
\(z_0=\langle s,\vec v_0,w_0,\textit{trace}_0\rangle\).  Let
\(z_0^+=\langle s',\vec v_1,w_0^+,\textit{trace}_0e_{\mathsf{ready}}\rangle\).
The system-controlled readiness transition takes
\(\langle z_0,m_{\mathsf{checkReady}}\rangle\) to \(z_0^+\), and it is a
service transition.}
\DIFadd{These definitions give \(\mathsf{InterfaceOK}\) and \(\mathsf{StepOK}\).
Let \(\mathsf{SystemControlled}(e_{\mathsf{ready}})\) hold and let the safeguard
return \(a_0\) only on this tuple.  The reliable claim then gives
\(\mathsf{RequiredClaimsOK}\), \(\mathsf{Validity}\), \(\mathsf{AcceptOK}\),
accept-soundness, and \(\mathsf{ValidStep}\).  Section~\ref{sec:controlled-soundness}
therefore concludes that the event is allowed and its record agrees with the
interface.
}

\paragraph{\DIFadd{The edit transition.}}
\DIFadd{After the accepted readiness step, an external edit first changes the
world and is then observed and recorded.  Set \(t_1=6\).  The event
\(e_{\mathsf{edit}}\), named \texttt{FormEdited}, has actor \texttt{Oma}, kind
\texttt{ExternalMutation}, message \(m_{\mathsf{edit}}\), observation
\(o_{\mathsf{edit}}\), and actuation \(m_\bot\).  Let
\(\textit{trace}_1=\textit{trace}_0e_{\mathsf{ready}}e_{\mathsf{edit}}\) and
\(\bar H_{t_1}=\bar H_{t_0}\cup\{\mathsf{PictureOK}(d_1)\}\).  Define
\(\mathsf{worldmap}_{\textsf{Spec}}(w_1)=\{\mathsf{PictureOK}(d_1)\}\),
\(\mathsf{interp}_{\textsf{Spec}}(e_{\mathsf{edit}})=\emptyset\), and
\(\mathsf{Requires}_{\textsf{Spec}}(\textit{trace}_1,w_1)
=\{\mathsf{Ready}(d_1)\}\).}
\DIFadd{Let \(z_1^-\) be
\(\langle s',\vec v_1,w_1,\textit{trace}_0e_{\mathsf{ready}}\rangle\) and
\(z_1=\langle s'',\vec v_2,w_1,\textit{trace}_1\rangle\).  Put
\((w_0^+,w_1)\in Env\); the World clause gives
\(\langle z_0^+,m_\bot\rangle\to z_1^-\) without changing the trace.  Set
\(\mathsf{observation}(\textit{EditObserver},w_1)=o_{\mathsf{edit}}\).
Let \textit{EditObserver} take\linebreak
\(\langle u_0,m_{\mathsf{observeEdit}},o_{\mathsf{edit}}\rangle\) to
\(\langle u_1,m_{\mathsf{edit}},m_\bot\rangle\).  The Service clause gives
\(\langle z_1^-,m_{\mathsf{observeEdit}}\rangle\to z_1\) and appends
\(e_{\mathsf{edit}}\).  Hence \(z_1\) is reachable from \(z_0^+\).}
\DIFadd{Let the new candidate \(e^1_{\mathsf{ready}}\) be named
\texttt{CheckReady}, have kind \texttt{Write}, message
\(m'_{\mathsf{ready}}\), resource \(d_1\), and cite the old
\(e_{\mathsf{form}}\) event as a witness.  Set
\(\mathsf{interp}_{\textsf{Spec}}(e^1_{\mathsf{ready}})
=\{\mathsf{Ready}(d_1)\}\).  Set \(c_0=\mathsf{FormComplete}(d_0)\).
Abbreviate \(i_1=(\textit{trace}_1,w_1,e_{\mathsf{edit}},e_{\mathsf{form}})\)
when passed to \(\mathsf{Invalidates}_{\textsf{Spec}}\), and set
\(\mathsf{Invalidates}_{\textsf{Spec}}(i_1,c_0)
=\mathsf{true}\).  Thus the edit invalidates \(e_{\mathsf{form}}\) as evidence
for \(\mathsf{FormComplete}(d_0)\).}
\DIFadd{There is no later current form witness.  The rejection diagnostics in
Section~\ref{sec:rej-diag} and Appendix~\ref{app:transition-safeguard-rejection-cases}
call this a stale-witness failure.  This does not
say that \(\mathsf{FormComplete}(d_0)\) became false: it remains a claim about
the earlier draft, while its event is no longer a current witness for the
proposed \(d_1\) readiness check.}
\DIFadd{This is the failure checked by the existing }\texttt{\DIFadd{cat}} \DIFadd{trace-currentness
rule.
}

\DIFadd{There is also a distinct unsupported-output path.  Let
\(\textit{ctx}_1=\langle m'_{\mathsf{ready}}\rangle\), set
\(\mathsf{Select}_{t_1}(\textit{trace}_1)=\{\textit{ctx}_1\}\), and define
\(\mathsf{Asserts}_{E,\textit{ctx}_1,t_1}
=\{(m'_{\mathsf{ready}},\mathsf{Ready}(d_1))\}\).  Keep \(R_S\) and
\(\mathsf{SourceSupport}\) unchanged, set
\(F_{S,t_1}=F_{S,t_0}\cup\{s_{\mathsf{PictureOK}}(d_1)\}\), and let
\(K_{S,t_1}\) be the resulting source-derived knowledge base.  Let
\(K_{E,\textit{ctx}_1,t_1}=\langle\textit{Lang},
\{m'_{\mathsf{ready}}\},R_E,t_1\rangle\).}
\DIFadd{Then \(K_{E,\textit{ctx}_1,t_1}\vdash^+m'_{\mathsf{ready}}\), while
\(K_{S,t_1}\vdash^?s_{\mathsf{Ready}}(d_1)\) and
\(K_{U,t_1}\vdash^?\mathsf{Ready}(d_1)\).}
\DIFadd{Thus the message asserts \(\mathsf{Ready}(d_1)\), although neither the
source-derived nor the universal knowledge base supports that claim.
Section~\ref{sec:mismatches} calls this an unsupported assertion and shows that
it is a reliance failure.  Because \(\mathsf{Ready}(d_1)\) is required and no
reliable message for it exists, \(\mathsf{RejectMissingClaim}
(\mathsf{Ready}(d_1))\) also satisfies its reject-soundness condition.}
\DIFadd{New }\texttt{\DIFadd{CheckForm}} \DIFadd{and
}\texttt{\DIFadd{ConfirmDraft}} \DIFadd{events are needed before \(d_1\) is ready.
The first failure reuses old evidence after an edit; the second asserts a new
readiness claim that the knowledge bases do not support.
}
\DIFaddend \section{What could possibly go wrong \label{sec:mismatches}}

A number of things could go wrong, and we distinguish at least two cases:
reliance failures occur when the various layers of knowledge disagree, leading
to an unreliable claim; history failures occur when the trace disagrees with
certain claims, for example when a source becomes stale. 

\subsection{Reliance failure diagnostics}

A reliance failure is a claim which may not be relied on:
\[\mathsf{RelianceFailure}_{t,\textit{ctx}}(m,c)\triangleq\mathsf{ClaimUse}_{E,\textit{ctx},t}(m,c)
\wedge\neg\,\mathsf{ReliableClaim}_t(m,c;
K_{E,\textit{ctx},t},K_{S,\textit{Src},t},K_{U,t},\textit{ctx}).\]

This section gives reasons leading to reliance failures: the semantic machinery
established in the previous section allows us to diagnose mismatches for
asserted claims.  

\begin{foroma}
An unsupported assertion may be a claim that the passport picture rule allows
something that the current official rule does not allow. An added hypothesis 
may be the system silently assuming that the old passport is still valid. A
source-derived vs universal mismatch is an outdated web page that
conflicts with the current official guidance. A useful output is therefore not
only the generated answer, but also, e.g., the source pointer to the official
rule against which the claim is checked.  \end{foroma}

Given a message \(m\in\textit{Lang}\) whose denotation is defined, let
$s=\mathsf{denotation}_{\textit{Lang}}(m)\in\mathsf{SourceItems}$.  The claims
to classify are \(\mathsf{AssertedClaims}_{E,\textit{ctx},t}(m)\), not every
candidate claim that the denoted source item could have stood for.  Each
diagnostic is a sufficient reason for a reliance failure:
\begin{itemize}[leftmargin=*,itemsep=0pt,topsep=2pt]

\item \emph{Refuted Assertion.} If \(\mathsf{Universal}_{-,t}(c;K_{U,t})\) also holds, \(K_{U,t}\) refutes the
claim:
\[
\mathsf{RefutedAssertion}_t(m,c;\textit{ctx})
\triangleq{}
\mathsf{ClaimUse}_{E,\textit{ctx},t}(m,c)
\wedge\mathsf{Universal}_{-,t}(c;K_{U,t}).
\]

\item \emph{Unsupported Assertion.} If \(\mathsf{Universal}_{?,t}(c;K_{U,t})\) holds, the use is an unsupported
assertion:
\[
\begin{aligned}
&\mathsf{UnsupportedAssertion}_t(m,c;\textit{ctx})
\triangleq\mathsf{ClaimUse}_{E,\textit{ctx},t}(m,c)
\wedge \exists s\in\mathsf{SourceItems}\mathrel{.}\,
\mathsf{Denotes}_{\textit{Lang}}(m,s)
\\
&\quad{}\wedge
\mathsf{Effective}_{\oplus,t}(m,s;K_{E,\textit{ctx},t})
\wedge\mathsf{Universal}_{?,t}(c;K_{U,t})
\wedge
\neg\,\mathsf{SupportedClaim}_t(m,c;K_{U,t},K_{S,\textit{Src},t}) .
\end{aligned}
\]

The unsupported claim may originate with
\(\mathsf{Source}_{+,t}(s;K_{S,\textit{Src},t})\), or
it may be invented by the AI system when
\(\mathsf{Source}_{\pm,t}(s;K_{S,\textit{Src},t})\)
or \(\mathsf{Source}_{?,t}(s;K_{S,\textit{Src},t})\) holds. This corresponds to what is often
referred to as hallucinations, e.g. invented citations or fabricated details.

\item \emph{Extrapolation.} This is the assertion of a claim \(c\) through a
message \(m\) when the corresponding source item \(s\) is not one-sided and the
universal status of \(c\) is contradictory:
\[
\begin{aligned}
&\mathsf{Extrapolation}_t(m,c;\textit{ctx})\triangleq
\mathsf{ClaimUse}_{E,\textit{ctx},t}(m,c)
\wedge \exists s\in\mathsf{SourceItems}\mathrel{.}\,
\mathsf{Denotes}_{\textit{Lang}}(m,s)
\\
&\qquad{}\wedge
\mathsf{Effective}_{\oplus,t}(m,s;K_{E,\textit{ctx},t})\wedge
\neg\,\mathsf{Source}_{\oplus,t}(s;K_{S,\textit{Src},t})
\wedge\mathsf{Universal}_{\pm,t}(c;K_{U,t}).
\end{aligned}
\]

\item \emph{Source-Derived vs Universal Mismatch.} If a message \(m\) denotes a source item \(s\) justified by
\(K_{S,\textit{Src},t}\), and an asserted claim
\(c\in\mathsf{denotation}_{U}(s)\) is contradictory in \(K_{U,t}\), there is a
mismatch:
\[
\begin{aligned}
&\mathsf{SourceUniversalMismatch}_t(m,s,c;\textit{ctx})
\triangleq\mathsf{ClaimUse}_{E,\textit{ctx},t}(m,c)
\wedge \mathsf{Denotes}_{\textit{Lang}}(m,s)
\\
&\qquad{}\wedge
c\in\mathsf{denotation}_{U}(s)
\wedge\mathsf{Source}_{+,t}(s;K_{S,\textit{Src},t})
\wedge\mathsf{Universal}_{\pm,t}(c;K_{U,t}).
\end{aligned}
\]

\item \emph{Spurious Support.}
\[
\begin{aligned}
&\mathsf{SpuriousSupport}_{t}
(m,s,c;\textit{ctx})
\triangleq
\mathsf{ClaimUse}_{E,\textit{ctx},t}(m,c)
\wedge
\mathsf{Denotes}_{\textit{Lang}}(m,s)\wedge
\mathsf{Universal}_{+,t}(c;K_{U,t})\\
&\qquad{}
\wedge
\neg\mathsf{Source}_{+,t}(s;K_{S,\textit{Src},t})
\end{aligned}
\]
The claim is true, but the specific source item cited for it is refuted, contradictory, or unknown. Therefore a debunked or unverifiable source is cited for something that happens to be correct.

\item \emph{Source Support Gap.}

{\small
\[
\begin{aligned}
&\mathsf{SourceSupportGap}_{t}(m,s,c;\textit{ctx})
\triangleq
\mathsf{ClaimUse}_{E,\textit{ctx},t}(m,c)
\wedge
\mathsf{Denotes}_{\textit{Lang}}(m,s)\\
&\qquad{}\wedge
\mathsf{Universal}_{+,t}(c;K_{U,t})
\wedge
\mathsf{Source}_{+,t}(s;K_{S,\textit{Src},t})
\wedge
\neg \mathsf{SourceSupport}_t(s,c)
\end{aligned}
\]
}
The claim is true and the cited source item is itself well-justified but it
just doesn't actually support this particular claim. 

\item \emph{Refuted Source, Contradictory Claim}
Although the universal semantics finds $c$ contradictory, the source item $s$
is definitely refuted:
{\small
\[
\begin{aligned}
&\mathsf{RefutedSourceContradictoryClaim}_t(m,s,c;\textit{ctx})
\triangleq
\mathsf{ClaimUse}_{E,\textit{ctx},t}(m,c)
\wedge
\mathsf{Denotes}_{\textit{Lang}}(m,s)\\
&\quad{}\wedge
\mathsf{Universal}_{\pm,t}(c;K_{U,t})
\wedge
\mathsf{Source}_{-,t}(s;K_{S,\textit{Src},t})
\end{aligned}
\]
}

\end{itemize}
The exhaustiveness result below classifies reliance failures only for asserted
claims whose message has defined denotation; events with undefined denotation
are rejected by event validity rather than by this reliance taxonomy. We give
all proof sketches in appendix.

\begin{theorem}[Exhaustiveness]\label{th:exhaustiveness}
If $\mathsf{ClaimUse}_{E,\textit{ctx},t}(m,c)$ and
$\mathsf{Denotes}_{\textit{Lang}}(m,s)$, then it is the case that
$\mathsf{RelianceFailure}_{t,\textit{ctx}}(m,c)$ holds if and only if at least
one of the following holds:
{\scriptsize
\[
\begin{gathered}
\mathsf{RefutedAssertion}_t(m,c;\textit{ctx}),\quad
\mathsf{UnsupportedAssertion}_t(m,c;\textit{ctx}),\\
\mathsf{Extrapolation}_t(m,c;\textit{ctx}),\quad
\mathsf{SourceUniversalMismatch}_t(m,s,c;\textit{ctx}),\\
\mathsf{SpuriousSupport}_t(m,s,c;\textit{ctx}),\quad
\mathsf{SourceSupportGap}_t(m,s,c;\textit{ctx}),\\
\begin{aligned}
&\mathsf{RefutedSourceContradictoryClaim}_t(m,s,c;\textit{ctx}).
\end{aligned}
\end{gathered}
\]
}
\end{theorem}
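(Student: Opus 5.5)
The plan is to reduce $\mathsf{RelianceFailure}$, under the two hypotheses, to the failure of three atomic conditions. Then I split into cases on the four base statuses of $c$ in $K_{U,t}$ and, where needed, of $s$ in $K_{S,\textit{Src},t}$.

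\emph{Unfolding.} First I record what the hypotheses give.
\begin{itemize}
\item From $\mathsf{ClaimUse}_{E,\textit{ctx},t}(m,c)$ we get $c\in\mathsf{AssertedClaims}_{E,\textit{ctx},t}(m)$. Hence $K_{E,\textit{ctx},t}\vdash^{+}_t m$ and $\mathsf{Asserts}_{E,\textit{ctx},t}(m,c)$ hold, and the latter gives $m\in\textit{Lang}$ and $c\in\mathsf{CandidateClaims}(m)\subseteq\bar U$.
\item Since $\mathsf{denotation}_{\textit{Lang}}$ is a partial function, $\mathsf{Denotes}_{\textit{Lang}}(m,s)$ makes $s$ the unique $s_m$. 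So $\mathsf{CandidateClaims}(m)=\mathsf{denotation}_U(s)$ and $c\in\mathsf{denotation}_U(s)$.
\item The existential in $\mathsf{SupportedClaim}_t$ can only be witnessed by this $s$.
\end{itemize}
Consequently the first two conjuncts of $\mathsf{ReliableClaim}_t$ are already true, and $\mathsf{SupportedClaim}_t(m,c;\ldots)$ is equivalent to $\mathsf{Universal}_{+,t}(c)\wedge\mathsf{Source}_{+,t}(s)\wedge\mathsf{SourceSupport}_t(s,c)$. Thus
\[
\mathsf{RelianceFailure}_{t,\textit{ctx}}(m,c)\iff \neg\mathsf{Universal}_{+,t}(c)\vee\neg\mathsf{Source}_{+,t}(s)\vee\neg\mathsf{SourceSupport}_t(s,c).
\]
I also note that $\mathsf{Effective}_{\oplus,t}(m,s)$ holds. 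It holds because $\mathsf{Denotes}_{\textit{Lang}}(m,s)$ and $K_{E,\textit{ctx},t}\vdash^{+}_t m$ are both true, and the four base statuses are mutually exclusive, so $\vdash^{+}$ makes the exclusive disjunction in $\vdash^{\oplus}$ true. By the same exclusivity, $\neg\mathsf{Source}_{\oplus,t}(s)$ is equivalent to $s$ having status $\pm$ or $?$.

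\emph{Only if.} Assume a reliance failure. The four statuses of $c$ in $K_{U,t}$ are exhaustive and exclusive, so I split on them.
\begin{itemize}
\item If $c$ has status $-$, $\mathsf{RefutedAssertion}$ holds directly.
\item If $c$ has status $?$, $\mathsf{UnsupportedAssertion}$ holds. Its remaining conjuncts are $\mathsf{Effective}_{\oplus}$, which was shown above, and $\neg\mathsf{SupportedClaim}$, which holds because $\mathsf{Universal}_{+}$ fails.
\item If $c$ has status $+$, the failure must come from $\neg\mathsf{Source}_{+,t}(s)$ or from $\mathsf{Source}_{+,t}(s)\wedge\neg\mathsf{SourceSupport}_t(s,c)$. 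These give $\mathsf{SpuriousSupport}$ and $\mathsf{SourceSupportGap}$ respectively.
\item If $c$ has status $\pm$, I split on the status of $s$. Status $+$ gives $\mathsf{SourceUniversalMismatch}$, using $c\in\mathsf{denotation}_U(s)$. Status $-$ gives $\mathsf{RefutedSourceContradictoryClaim}$. Status $\pm$ or $?$ gives $\neg\mathsf{Source}_{\oplus,t}(s)$ and hence $\mathsf{Extrapolation}$, with $\mathsf{Effective}_{\oplus}$ as before.
\end{itemize}

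\emph{If.} Each diagnostic contains $\mathsf{ClaimUse}$, so it suffices to show each one falsifies one of the three atomic conjuncts.
\begin{itemize}
\item $\mathsf{RefutedAssertion}$, $\mathsf{Extrapolation}$, $\mathsf{SourceUniversalMismatch}$ and $\mathsf{RefutedSourceContradictoryClaim}$ assert status $-$ or $\pm$ for $c$. This excludes $\mathsf{Universal}_{+,t}(c)$.
\item $\mathsf{UnsupportedAssertion}$ contains $\neg\mathsf{SupportedClaim}$ outright.
\item $\mathsf{SpuriousSupport}$ and $\mathsf{SourceSupportGap}$ falsify $\mathsf{Source}_{+,t}(s)$ and $\mathsf{SourceSupport}_t(s,c)$ respectively, for the cited $s$. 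This suffices because $s$ is the only possible witness in $\mathsf{SupportedClaim}$.
\end{itemize}

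The main point needing care is this uniqueness of the witness $s$. Support is stated existentially, while several diagnostics are stated for a specific $s$, so everything rests on $\mathsf{denotation}_{\textit{Lang}}$ being functional. Two further details also matter: reading $\oplus$ as exclusive or is harmless because $+$ and $-$ are exclusive, and $c\in\bar U$ is needed for the $\mathsf{Universal}$ predicates to apply.
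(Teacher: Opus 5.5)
Your proposal is correct and follows the paper's proof essentially step for step. Both proofs fix the unique $s$ via functionality of $\mathsf{denotation}_{\textit{Lang}}$, reduce the reliance failure to the failure of $\mathsf{Universal}_{+,t}(c)\wedge\mathsf{Source}_{+,t}(s)\wedge\mathsf{SourceSupport}_t(s,c)$, split on the status of $c$ in $K_{U,t}$ and then on the status of $s$ in $K_{S,\textit{Src},t}$ in the $\pm$ case, and prove the converse by having each diagnostic falsify one of those conjuncts. The only deviation is trivial: for $\mathsf{UnsupportedAssertion}$ you cite its literal $\neg\,\mathsf{SupportedClaim}$ conjunct, where the paper uses $\mathsf{Universal}_{?,t}$ to exclude $\mathsf{Universal}_{+,t}$.
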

\DIFadd{This means that, whenever the theorem applies, the system can give at
least one of the listed reasons why the claim may not be relied on.}

\subsection{History failure diagnostics}
\begin{itemize}[leftmargin=*]
\item \emph{Stale Source.} This is a temporal instance of
\(\mathsf{SourceUniversalMismatch}\), for which we need a notion of a source
witness: an event present at time $t_0$ that records a source
pointer~$\mathit{sp}$ as provenance for an item~$s$. We assume the existence of
an application-specific predicate~$\mathsf{ExtractedFrom}$, which records that
event \(e\) is the trace event whose extraction procedure produced source item
\(s\) from source pointer \(\mathit{sp}\). We require this predicate to be
well-typed:
  \[
  \begin{aligned}
  \mathsf{ExtractedFrom}_{t_0}(s,\mathit{sp};e_s)\Rightarrow{}&
  e_s\in \textit{Ev}
  \wedge s\in\mathsf{SourceItems}
  \wedge \mathit{sp}\in\textit{Src}_{t_0}
  \\
  &{}\wedge \mathit{sp}\in e_s.\mathsf{witness}
  \wedge \mathsf{Denotes}_{\textit{Lang}}(e_s.\mathsf{message},s) .
  \end{aligned}
  \]

Then we record provenance for an extracted source item:
$\mathsf{SourceWitness}_{t_0}(\mathit{sp},s;\textit{trace})\triangleq
    \exists e_s\in\textit{trace}_{\le t_0}\mathrel{.}
  \mathit{sp}\in\textit{Src}_{t_0}
  \wedge
  \mathsf{ExtractedFrom}_{t_0}(s,\mathit{sp};e_s)
  \wedge
  s\in F_{S,\textit{Src},t_0}$.

A stale source error arises when a source witness comes from an older
pointer at time \(t_0\), and \(\mathsf{Universal}_{+,t_0}(c;K_{U,t_0})\) is
replaced at time \(t\):
\begin{flushleft}
\small
\(
\begin{aligned}
&\mathsf{StaleSource}_t(m,\mathit{sp},s,c;\textit{ctx},\textit{trace})
\triangleq\mathsf{ClaimUse}_{E,\textit{ctx},t}(m,c)
\wedge\mathsf{Denotes}_{\textit{Lang}}(m,s)\\
&\qquad{}\wedge c\in\mathsf{denotation}_{U}(s)
\wedge \exists t_0<t\mathrel{.}\,
\bigl(\mathsf{SourceWitness}_{t_0}(\mathit{sp},s;\textit{trace})
\wedge\mathsf{Universal}_{+,t_0}(c;K_{U,t_0})\bigr)\\
&\qquad{}\wedge\mathsf{Source}_{+,t}(s;K_{S,\textit{Src},t})\\
&\qquad{}\wedge
\bigl(\mathsf{Universal}_{-,t}(c;K_{U,t})
\vee\mathsf{Universal}_{?,t}(c;K_{U,t})
\vee\mathsf{Universal}_{\pm,t}(c;K_{U,t})\bigr).
\end{aligned}
\)
\end{flushleft}

\item \emph{Refuted Source.} Such a source satisfies
\(\mathsf{Universal}_{-,t}(c;K_{U,t})\) while
\(\mathsf{Source}_{+,t}(s;K_{S,\textit{Src},t})\) holds:
\[
\begin{aligned}
&\mathsf{RefutedSource}_t(m,s,c;\textit{ctx})
\triangleq
\mathsf{ClaimUse}_{E,\textit{ctx},t}(m,c)
\wedge \mathsf{Denotes}_{\textit{Lang}}(m,s)
\\
&\qquad{}\wedge
\mathsf{Source}_{+,t}(s;K_{S,\textit{Src},t})
\wedge
\mathsf{Universal}_{-,t}(c;K_{U,t}).
\end{aligned}
\]
This is stronger than lack of justification for claim $c$: the universal
knowledge base justifies $\neg c$. It is an AI system error when the system
uses this claim, or when source knowledge is itself being presented as
authority for \(c\); otherwise it is a source or source-processing issue
that the AI system must not silently turn into authority.

\item \emph{Added Hypothesis.} Such an error occurs when a system claim actually
depends on a hypothesis \(h\) that is not among the claims on which the system
may currently rely. The relation below records actual dependence, not merely
possible derivability. When the system exposes derivation, explanation, or
provenance records, this relation may be realized as:
\[
\begin{aligned}
&\mathsf{DependsOn}_{E,\textit{ctx},t}(m,c,h)
\triangleq
\exists \pi\in
\mathsf{UsedDerivations}_{E,\textit{ctx},t}(m,c)\mathrel{.}
\\
&\qquad{}
h\in\mathsf{Premises}(\pi)
\wedge
c\in\textsf{Infer}^{\ast}_{\bar{R}_t}(\mathsf{Premises}(\pi))
\wedge
c\notin
\textsf{Infer}^{\ast}_{\bar{R}_t}
(\mathsf{Premises}(\pi)\setminus\{h\}) .
\end{aligned}
\]
Here \(\mathsf{UsedDerivations}_{E,\textit{ctx},t}(m,c)\) is the set of
derivation, explanation, or provenance records actually used by the system in
asserting \(c\) through \(m\), and \(\mathsf{Premises}(\pi)\) is the set of
semantic premises used by \(\pi\).

Let $\mathcal{R}_{t,\textit{ctx}} \triangleq
\{\,c\in\bar{U}\mid \exists m_c\in\textit{Lang}\mathrel{.}\,
\mathsf{ReliableClaim}_t( m_c,c;
K_{E,\textit{ctx},t},K_{S,\textit{Src},t},K_{U,t}, \textit{ctx})\,\}$.
An asserted claim \(c\) in
message \(m\) depends on an added hypothesis when the system asserted
\(c\), actually used \(h\), \(c\) is justified if we add \(h\) but not
justified from currently reliable claims alone, \(h\) itself has no
reliable witnessing message \(m_h\), and \(c\) is not otherwise directly
supported: 
\[
\begin{aligned}
&\mathsf{AddedHypothesis}_t(h,c;m,\textit{ctx})
\triangleq
\mathsf{ClaimUse}_{E,\textit{ctx},t}(m,c)
\wedge
h\in\bar{U}
\wedge
\mathsf{DependsOn}_{E,\textit{ctx},t}(m,c,h)
\\
&\qquad{}\wedge
\neg\,\exists m_h\in\textit{Lang}\mathrel{.}\,
\mathsf{ReliableClaim}_t(
m_h,h;
K_{E,\textit{ctx},t},K_{S,\textit{Src},t},K_{U,t},
\textit{ctx})
\\
&\qquad{}\wedge
c\in\textsf{Infer}^{\ast}_{\bar{R}_t}
(\mathcal{R}_{t,\textit{ctx}}\cup\{h\})
\wedge
c\notin\textsf{Infer}^{\ast}_{\bar{R}_t}
(\mathcal{R}_{t,\textit{ctx}})
\wedge
\neg\,\mathsf{SupportedClaim}_t(m,c;K_{U,t},K_{S,\textit{Src},t}). 
\end{aligned}
\]

\item \emph{Unsupported Use.} When does an unsupported claim enter the trace? If $e$ is a new event extending
a trace $\textit{trace}$ at time $t$, \DIFdelbegin \DIFdel{write
\(\textit{trace}'=\textit{trace}\cdot e\).  }\DIFdelend \DIFaddbegin \DIFadd{the
extended trace is \(\textit{trace}\cdot e\).  }\DIFaddend Let
\(\textit{ctx}_{t}\in\mathsf{Select}_t(\textit{trace})\) and
\DIFdelbegin \DIFdel{\(\textit{ctx}_{t+1}\in\mathsf{Select}_{t+1}(\textit{trace}')\)}
\DIFdelend \DIFaddbegin \DIFadd{\(\textit{ctx}_{t+1}\in
\mathsf{Select}_{t+1}(\textit{trace}\cdot e)\)}\DIFaddend  be the
selected contexts before and after the extension.  An unsupported use is a transition
from
\(\neg\,\mathsf{Effective}_{+,t}(m,s;K_{E,\textit{ctx}_{t},t})\) to
\(\mathsf{Effective}_{+,t+1}
(m,s;K_{E,\textit{ctx}_{t+1},t+1})\)
while the asserted claim \(c\) still fails
the support check: 
\[
\begin{aligned}
&\mathsf{UnsupportedUse}_{t+1}
(e,m,s,c;\textit{trace},\textit{ctx}_{t},\textit{ctx}_{t+1})
\triangleq
\DIFdelbegin \DIFdel{\textit{trace}'=\textit{trace}\cdot e}
\DIFdelend \DIFaddbegin \DIFadd{\textit{ctx}_{t}\in
\mathsf{Select}_t(\textit{trace})}\DIFaddend
\\
&\quad{}\DIFdelbegin \DIFdel{\wedge
\textit{ctx}_{t}\in\mathsf{Select}_t(\textit{trace})}\DIFdelend
\wedge
\textit{ctx}_{t+1}\in\mathsf{Select}_{t+1}(\DIFdelbegin
\DIFdel{\textit{trace}'}\DIFdelend \DIFaddbegin
\DIFadd{\textit{trace}\cdot e}\DIFaddend)
\wedge
e.\mathsf{message}=m
\wedge
\mathsf{Denotes}_{\textit{Lang}}(m,s)
\\
&\quad{}\wedge
\neg\,\mathsf{Effective}_{+,t}(m,s;K_{E,\textit{ctx}_{t},t})
\wedge
\mathsf{Effective}_{+,t+1}
(m,s;K_{E,\textit{ctx}_{t+1},t+1})
\wedge
\mathsf{ClaimUse}_{E,\textit{ctx}_{t+1},t+1}(m,c)
\\
&\qquad{}\wedge
\neg\,\mathsf{SupportedClaim}_{t+1}
(m,c;K_{U,t+1},K_{S,\textit{Src},t+1}) .
\end{aligned}
\]

\end{itemize}

\begin{theorem}[Trace Soundness]\label{th:trace-soundness} Each of $\mathsf{StaleSource}_t(m,\mathit{sp},s,c;\textit{ctx},\textit{trace})$, $\mathsf{RefutedSource}_t(m,s,c;\textit{ctx})$, $\mathsf{AddedHypothesis}_t(h,c;m,\textit{ctx})$, and $\mathsf{UnsupportedUse}_{t+1}(e,m,s,c;\textit{trace},\textit{ctx}_t,\textit{ctx}_{t+1})$ implies, respectively, \(\mathsf{RelianceFailure}_{t,\textit{ctx}}(m,c)\) or \(\mathsf{RelianceFailure}_{t+1,\textit{ctx}_{t+1}}(m,c)\). \end{theorem}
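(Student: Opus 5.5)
Each of the four diagnostics already contains the conjunct \(\mathsf{ClaimUse}\) at the relevant time and context. The first conjunct of \(\mathsf{RelianceFailure}\) is therefore immediate in every case. The work is to show \(\neg\mathsf{ReliableClaim}\). Since \(\mathsf{ReliableClaim}_t\) is a conjunction whose last conjunct is \(\mathsf{SupportedClaim}_t(m,c;K_{U,t},K_{S,\textit{Src},t})\), it suffices in each case to show \(\neg\mathsf{SupportedClaim}\). This in turn follows if \(\neg\mathsf{Universal}_{+,t}(c;K_{U,t})\), because \(\mathsf{SupportedClaim}\) requires \(\mathsf{Universal}_{+,t}(c;K_{U,t})\). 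The plan is thus a case-by-case reduction to the failure of one conjunct of \(\mathsf{SupportedClaim}\).

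\emph{Stale Source and Refuted Source.} For \(\mathsf{StaleSource}_t\), the last conjunct states that \(c\) has status \(-\), \(?\) or \(\pm\) in \(K_{U,t}\). By the definitions of Section~\ref{sec:paracons}, the four base statuses are mutually exclusive: each fixes the truth values of \(K\vdash c\) and \(K\vdash\neg c\). So any of these three excludes status \(+\), hence \(\neg\mathsf{Universal}_{+,t}(c;K_{U,t})\), hence \(\neg\mathsf{SupportedClaim}_t\), and therefore \(\mathsf{RelianceFailure}_{t,\textit{ctx}}(m,c)\). The temporal conjuncts (the witness at \(t_0\) and \(\mathsf{Universal}_{+,t_0}\)) are not used; only the status at \(t\) matters. \(\mathsf{RefutedSource}_t\) is the special case with \(\mathsf{Universal}_{-,t}(c;K_{U,t})\), handled the same way.

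\emph{Added Hypothesis.} The definition of \(\mathsf{AddedHypothesis}_t\) explicitly includes \(\neg\mathsf{SupportedClaim}_t(m,c;K_{U,t},K_{S,\textit{Src},t})\). Together with \(\mathsf{ClaimUse}\), this gives the result directly; the dependence and closure conjuncts are not needed. One could alternatively argue via \(\mathcal{R}_{t,\textit{ctx}}\): reliability of \(c\) would put \(c\in\mathcal{R}_{t,\textit{ctx}}\subseteq\textsf{Infer}^{\ast}_{\bar R_t}(\mathcal{R}_{t,\textit{ctx}})\), which contradicts \(c\notin\textsf{Infer}^{\ast}_{\bar R_t}(\mathcal{R}_{t,\textit{ctx}})\). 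I would mention this as a robustness check.

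\emph{Unsupported Use.} This case is at time \(t+1\) with context \(\textit{ctx}_{t+1}\). The definition contains \(\mathsf{ClaimUse}_{E,\textit{ctx}_{t+1},t+1}(m,c)\) and \(\neg\mathsf{SupportedClaim}_{t+1}(m,c;K_{U,t+1},K_{S,\textit{Src},t+1})\). These instantiate \(\mathsf{RelianceFailure}_{t+1,\textit{ctx}_{t+1}}(m,c)\) exactly, once we check that the knowledge bases in \(\mathsf{ReliableClaim}_{t+1}\) are \(K_{E,\textit{ctx}_{t+1},t+1}\), \(K_{S,\textit{Src},t+1}\) and \(K_{U,t+1}\). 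The only real obstacle in the whole proof is this bookkeeping: matching the time and context indices of \(\mathsf{RelianceFailure}\) to those in each diagnostic. The statement's word ``respectively'' assigns time \(t\) to the first three diagnostics and \(t+1\) to the last, and I would spell that correspondence out. The mutual exclusivity of the four statuses is the only nontrivial lemma, and it is immediate from the definitions.
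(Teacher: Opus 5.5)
Your proposal is correct and follows the paper's proof essentially step for step. Both arguments reduce \(\mathsf{RelianceFailure}\) to \(\neg\,\mathsf{SupportedClaim}\) via the literal \(\mathsf{ClaimUse}\) conjunct, use status exclusivity to rule out \(\mathsf{Universal}_{+,t}\) for \(\mathsf{StaleSource}\) and \(\mathsf{RefutedSource}\), and read off the literal \(\neg\,\mathsf{SupportedClaim}\) conjunct for \(\mathsf{AddedHypothesis}\) and for \(\mathsf{UnsupportedUse}\) (at \(t+1\), \(\textit{ctx}_{t+1}\)); your extra check for \(\mathsf{AddedHypothesis}\) via \(c\in\mathcal{R}_{t,\textit{ctx}}\subseteq\textsf{Infer}^{\ast}_{\bar R_t}(\mathcal{R}_{t,\textit{ctx}})\) is also valid, since the closure contains its base set, but the paper does not need it.
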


\DIFaddbegin
\DIFadd{Put contrapositively: without the corresponding reliance failure, the
associated history diagnostic cannot hold.}
\DIFaddend

\section{Compute Components \DIFaddbegin \DIFadd{and Agent Services}\DIFaddend
\label{sec:compute-components}}

Compute components \DIFdelbegin \DIFdel{are the engineered interfaces through which an AI system
turns }\DIFdelend \DIFaddbegin \DIFadd{transform }\DIFaddend input messages into further messages \DIFdelbegin \DIFdel{, checks, observations, or actuations}\DIFdelend \DIFaddbegin \DIFadd{or checks;
symbolic and neural components instantiate this common role.  Agent services,
introduced separately below, let the orchestrator observe or affect the world}\DIFaddend .

\subsection{Symbolic Components}\label{sec:symbolic-components}

A symbolic component features base facts, and
inference rules. An inference engine justifies new conclusions from facts, e.g.
by forward or backward chaining, DPLL-style search, fixpoint iteration, or
another symbolic strategy.

Thus a symbolic component is an abstraction of the rule system $\bar{R}_t$,
more precisely of the rules available after scope restriction and forgetting.
This abstraction arises for a number of reasons: some universal rules may not be
expressible in the component's modelling language, some expressible rules may
not be implemented or used by the engine, and some speculative or buggy rules
may be added although they do not exist in $\bar{R}_t$.

\subsubsection{Definition}
A symbolic component uses a language $\textit{ModLang}$ that abstracts
$\mathsf{SourceItems}$ (where $\mathsf{SourceItems}\triangleq
\overline{\textit{Scope}}/{\equiv_f}$). Its intended semantics is
$\mathsf{sem}_{\textit{ModLang}}\in \textit{ModLang}\rightarrow
\mathsf{SourceItems}$.  Conversely, statements in $\mathsf{SourceItems}$
must be approximated by $\textit{ModLang}$:
$\mathsf{encode}_{\textit{ModLang}}\in \mathsf{SourceItems}\rightarrow
\textit{ModLang}$.  In practice these maps may be implicit: a language such
as SMT-LIB is often understood operationally by solvers and users without a
fully exposed semantic map. 
%The standard
%\href{https://ecma-international.org/publications-and-standards/standards/ecma-404/}{ECMA-404}
%of \href{https://en.wikipedia.org/wiki/JSON}{JSON} specifies only the
%syntax and not the semantics that would assign a specific meaning,
%interpretation, or formal mathematical behaviour to the data. 
To communicate with the rest of the AI system, the modelling language may
be embedded in $\textit{Lang}$; otherwise the component needs communication
encoders and decoders $\mathsf{enc}_{\textit{ModLang}}\in
\textit{Lang}\rightarrow \wp(\textit{ModLang})$ and
$\mathsf{dec}_{\textit{ModLang}}\in \textit{ModLang}\cup\{\bot\}\rightarrow
\textit{Lang}$.

\begin{foroma}
$\textit{ModLang}$ may model rules for picture and form;
$\mathsf{encode}_{\textit{ModLang}}$ encodes rules
into checker syntax;
$\mathsf{sem}_{\textit{ModLang}}$ gives the meaning
of an encoded rule; $\mathsf{enc}_{\textit{ModLang}}$
turns a system message into checker inputs; and
$\mathsf{dec}_{\textit{ModLang}}$ turns the checker
result, failure, or timeout back into a system message.  \end{foroma}

For a black-box symbolic component, we can represent the inference engine as
$\textit{inference}_{\textit{ModLang}}\in
\wp(\wp(\textit{ModLang})\times(\textit{ModLang}\cup\{\bot\}))$, where $\bot$
represents timeout, failure, or absence of an answer. This relation abstracts
search over
$\overline{\textit{forget}}(\overline{\textit{ignore}}(\bar{R}_t))$, but may
omit rules or add incorrect ones.
\begin{foroma}
$\textit{inference}_{\textit{ModLang}}$ is the execution relation of a picture-rule
checker, source status checker, or form validator, and $\bot$
is timeout, failure, or no answer from that checker.
\end{foroma}

\subsubsection{Scoped correctness}

A symbolic component is correct within the system's scope if each 
non-\(\bot\) result it returns corresponds, through
\(\mathsf{sem}_{\textit{ModLang}}\), to a consequence of the scoped underlying
rules:

  \[
  \begin{aligned}
  &\forall H_{\textit{ModLang}}\subseteq\textit{ModLang}\mathrel{.}\,
  \forall c\in\textit{ModLang}\mathrel{.}\,
  \bigl(
  \langle H_{\textit{ModLang}},c\rangle
  \in\textit{inference}_{\textit{ModLang}}
  \bigr)
  \\
  &\Rightarrow
  \mathsf{sem}_{\textit{ModLang}}(c)
  \in
  \textsf{Infer}^{\ast}_{R_{S,t}}
  (\{\mathsf{sem}_{\textit{ModLang}}(h)\mid h\in H_{\textit{ModLang}}\}).
  \end{aligned}
  \]

This is a scoped correctness condition, which must be stated relative to
$\pair{\bar{U}}{\bar{R}_t}$: forgetting can introduce abstract consequences
that are not consequences of the ideal rule system.

\begin{foroma}
Consider the picture part of Oma's application. Rules distinguish observing a picture from checking it:
$R_{\textit{picture}}=
\left\{
\frac{\{c_{\textit{pictureObserved}}\}}{c_{\textit{pictureCandidate}}},
\frac{\{c_{\textit{pictureChecked}}\}}{c_{\textit{pictureUsable}}}
\right\}$.
Thus an observed picture does not imply its usability. Suppose a forgetfulness abstraction collapses
$c_{\textit{pictureCandidate}}\equiv_f c_{\textit{pictureChecked}}$ because
the orchestrator stores a coarse UI item e.g. ``picture status
present'', while $c_{\textit{pictureObserved}}$ and
$c_{\textit{pictureUsable}}$ stay distinguished.  The abstract rules are
$\overline{\textit{forget}}(R_{\textit{picture}})= \left\{
\frac{\{[c_{\textit{pictureObserved}}]_{\equiv_f}\}}{[c_{\textit{pictureCandidate}}]_{\equiv_f}},
\frac{\{[c_{\textit{pictureChecked}}]_{\equiv_f}\}}{[c_{\textit{pictureUsable}}]_{\equiv_f}} \right\}$. Since 
$[c_{\textit{pictureCandidate}}]_{\equiv_f} =
[c_{\textit{pictureChecked}}]_{\equiv_f}$, we have:
$[c_{\textit{pictureObserved}}]_{\equiv_f} \Rightarrow
[c_{\textit{pictureCandidate}}]_{\equiv_f} =
[c_{\textit{pictureChecked}}]_{\equiv_f} \Rightarrow
[c_{\textit{pictureUsable}}]_{\equiv_f}$.  The abstract system has
therefore erroneously justified that the picture is usable from a mere
observation.  \end{foroma}

\subsubsection{In practice}
Verification of a symbolic component can use the standard repertoire for
transition systems and logic programs: testing, model checking, static analysis,
deductive verification, consistency checks, type checks, termination arguments,
and runtime verification. 

\subsection{Neural Components}

A neural component encodes an input into vectors, applies a neural network,
and decodes the result into an output. We assume that the component
communicates through $\textit{Lang}$. 

Large language models are neural components specialised to
token sequences.  Their inputs and outputs are messages in $\textit{Lang}$,
typically natural language or token sequences.

\subsubsection{Definition} A neural component is a \DIFdelbegin \DIFdel{black
box $\textit{decode}_{N\!N}\circ N\!N\circ\textit{encode}_{N\!N}\in
\textit{Lang}\rightarrow \textit{Lang}$, where
$\textit{encode}_{N\!N}\in \textit{Lang}\rightarrow\mathbb{R}^n$, $N\!N\in\mathbb{R}^n\rightarrow\mathbb{R}^m$,
and $\textit{decode}_{N\!N}\in\mathbb{R}^m\rightarrow \textit{Lang}$
define the encoder $\textit{encode}_{N\!N}$, the neural network ${N\!N}$ itself and the
decoder$\textit{decode}_{N\!N}$}\DIFdelend \DIFaddbegin \DIFadd{possibly set-valued
message transformer:
}{\small
\[
\DIFadd{\begin{gathered}
\mathsf{NeuralOutputs}_{N\!N}(m)\triangleq
\displaystyle\bigcup_{v\in N\!N(\textit{encode}_{N\!N}(m))}
\textit{decode}_{N\!N}(v)
\subseteq\textit{Lang},\\
\textit{encode}_{N\!N}:\textit{Lang}\rightarrow\mathbb{R}^n,\qquad
N\!N:\mathbb{R}^n\rightarrow\wp(\mathbb{R}^m),\qquad
\textit{decode}_{N\!N}:\mathbb{R}^m\rightarrow\wp(\textit{Lang}).
\end{gathered}
}\]
}
\DIFaddend

The encoder \DIFdelbegin \DIFdel{$\textit{encode}_{N\!N}$ is the interface between
}\DIFdelend \DIFaddbegin \DIFadd{\(\textit{encode}_{N\!N}\) maps a }\DIFaddend communication-level \DIFdelbegin \DIFdel{inputs and the input layer of the neural network ${N\!N}$.
It maps input data }\DIFdelend \DIFaddbegin \DIFadd{message }\DIFaddend to a
fixed-size vector\DIFdelbegin \DIFdel{of machine numbers.  This encoding }\DIFdelend \DIFaddbegin \DIFadd{.  It }\DIFaddend is already an abstraction: \DIFdelbegin \DIFdel{if relevant information is lost before the networksees
the input, the network cannot recover it}\DIFdelend \DIFaddbegin \DIFadd{information lost here is
unavailable to the network}\DIFaddend .

The \DIFdelbegin \DIFdel{neural network ${N\!N}$ can be decomposed into transformers between layers
$N\!N=N\!N_\ell\circ\cdots\circ N\!N_1$.
The transformers $N\!N_1$, \ldots, $N\!N_\ell$ map vectors using weights,
calls, and optimisation procedures such as backpropagation and gradient
descent.  $N\!N_1$ is the transformer of the input layer into the first hidden
layer and is applied first. $N\!N_\ell$ is the transformer of the last hidden
layer into the output layer. This final transformer $N\!N_\ell$ shapes the
result for the task: }\DIFdelend \DIFaddbegin \DIFadd{deployed network relation
collects result vectors from forward passes of the form
\(N\!N_\ell\circ\cdots\circ N\!N_1\), with each layer transforming vectors and
the final layer shaping a }\DIFaddend classification, regression, \DIFdelbegin \DIFdel{sequence generation, or another resultformat}\DIFdelend \DIFaddbegin \DIFadd{generated
sequence, or other result}\DIFaddend .

The decoder $\textit{decode}_{N\!N}$ translates the result vector \DIFdelbegin \DIFdel{back into a
communication message }\DIFdelend \DIFaddbegin \DIFadd{into the set
of communication messages }\DIFaddend in $\textit{Lang}$ \DIFaddbegin \DIFadd{that the call may return}\DIFaddend .
\DIFaddbegin \DIFadd{Several decoded messages for one result vector represent
random sampling during decoding.  Several result vectors represent numerical
variation in a deployed forward pass, for example from floating-point reduction
order, batching, or mixed precision; singleton sets recover the mathematical
functional case.  A particular call returns one
\(m'\in\mathsf{NeuralOutputs}_{N\!N}(m)\), which the orchestration rule below
records in the event trace.}\DIFaddend

\begin{foroma}
A neural encoder may convert the candidate passport picture into vectors,
the neural network may classify or extract information, and the decoder
may return a label, field value, or message in $\textit{Lang}$.
\end{foroma}

\subsubsection{Output Reliability}
\DIFdelbegin\DIFdel{A neural component output is reliable for a selected context when every claim
asserted by the output is reliable for that context. Reliability is checked
after interpretation by \(\mathsf{denotation}_{\textit{Lang}}\), expansion by
\(\mathsf{CandidateClaims}\), assertion selection by
\(\mathsf{AssertedClaims}_{E,\textit{ctx},t}\), and the support checks in
\(\mathsf{ReliableClaim}_t\).

The reliability obligation on this output is not that every element of
\(\mathsf{CandidateClaims}(\textit{output})\) be reliable, but rather that
every claim \(c\in\mathsf{AssertedClaims}_{E,\textit{ctx},t}(\textit{output})\)
asserted by the output be reliable for the selected context \(\textit{ctx}\):
$\mathsf{ReliableClaim}_t(\textit{output},c;
K_{E,\textit{ctx},t},K_{S,\textit{Src},t},K_{U,t},\textit{ctx})$.}\DIFdelend
\DIFaddbegin\DIFadd{For a selected context, an output is reliable when every
claim it asserts satisfies
\(\mathsf{ReliableClaim}_t(\textit{output},c;
K_{E,\textit{ctx},t},K_{S,\textit{Src},t},K_{U,t},\textit{ctx})\).
Candidate claims not selected by
\(\mathsf{AssertedClaims}_{E,\textit{ctx},t}\) need not be reliable.}\DIFaddend

\subsubsection{In practice}
Validation of neural components is usually empirical.
It may test properties of the trained system, using adversarial inputs,
invariance tests, boundary cases, and monitoring for data drift. Empirical
checks ask whether the test distribution matches the training distribution or
whether a measured property points to a failure. PAC-style learning properties
concern the learning algorithm rather than a particular trained network. Formal
methods can sometimes prove properties of networks, especially small or
piecewise-linear networks, but large networks require approximation and
abstraction techniques \cite{albarghouthi21,DBLP:journals/corr/abs-2104-02466,pifarreesquerda2026NeuralNetworkVerification}.

\begin{foroma}
Validation may include adversarial picture tests, OCR checks for passport
fields, and monitoring for cases where the image or source has drifted away from
what the component was tested on.
\end{foroma}

\subsection{Agent Services}

Agent services enable interactions with the world, e.g.  tools, web search,
code interpreters, sensors, actuators, access to external servers via the model
context protocol (MCP)~\cite{MCP}. \DIFaddbegin \DIFadd{These are the tools and APIs studied in
MRKL, Toolformer, and Gorilla
~\cite{karpas2022mrkl,schick2023toolformer,patil2023gorilla}; here they are
distinguished from message-transforming compute components by their explicit
service state and world-facing transition relation.
}\DIFaddend

\subsubsection{Definition}
An agent service $\textit{Serv}$ is formalised by
internal states
$\textit{States}_{\textit{Serv}}$ and a
transition relation
$\textit{trans}_{\textit{Serv}}\in
\wp((\textit{States}_{\textit{Serv}}\times
\textit{Lang}\times \textit{Lang})\times(\textit{States}_{\textit{Serv}}\times
\textit{Lang}\times \textit{Lang}))$.

A transition $\pair{\triple{s_{\textit{Serv}}}{m_{ai}}{m_w}}
{\triple{s'_{\textit{Serv}}}{m'_{ai}}{m'_w}}
\in\textit{trans}_{\textit{Serv}}$
means that the service is in state $s_{\textit{Serv}}$, reads a communication message
$m_{ai}$ from the AI system, reads the observed world message
$m_w=\mathsf{observation}(\textit{Serv},w)$, changes its internal state to
$s'_{\textit{Serv}}$, returns a
communication message $m'_{ai}$ to the AI system, and may request a world update
represented by $m'_w$, producing $w'=\mathsf{actuation}(w,m'_w)$.

\subsubsection{Service Conformance} A conformance relation for a service
\textit{Serv} has the form $\textit{trans}_{\textit{Serv}}\subseteq
\textsf{Spec}_{\textit{Serv}}$ where $\textsf{Spec}_{\textit{Serv}}\in
\wp((\textit{States}_{\textit{Serv}}\times \textit{Lang}\times
\textit{Lang})\times(\textit{States}_{\textit{Serv}}\times \textit{Lang}\times
\textit{Lang}))$ is the relational input-output specification.

Because agent services are transition systems, standard specification and
validation methods apply: testing, code review, model checking, static
analysis, deductive verification, and runtime monitoring.  These checks do not
establish soundness of the whole system per se.

\subsubsection{In practice}

The specification of an agent service is application dependent. A service may
otherwise behave arbitrarily, e.g. modify an account it should not. Some task
services should eventually return an answer, but termination is not universal
for monitoring or reactive services. Beyond safety and liveness, services may
need to satisfy, e.g., confidentiality or privacy.

\begin{foroma}
In the app, the camera, browser, document scanner, picture checker, form
filler, retrieval service, and possible official web-service client are agent
services. Each service has a bounded job. None of them should quietly become
the source of truth for the whole task, and none of them should be able to
perform a high-consequence action merely because a generated sentence made it
sound plausible.  \end{foroma}

\subsubsection{Prompt} The Prompt is an example of an agent service,
which takes information and directives as input. In response to prompts, an LLM
may produce messages in \textit{Lang}, which are added to the orchestrator
state (including information recorded in the context \textit{ctx}).

\subsection{Orchestration}
\label{sec:Neuro-SymbolicOrchestration}
\label{sec:agentic-harness}

The orchestrator coordinates the calls and communications of
symbolic or neural components, and agent services. This part of the AI system
selects which component to call, passes messages through $\textit{Lang}$,
records intermediate state, and handles observations or updates of the world.

The user normally interacts with the orchestrator through prompts in $\textit{Lang}$.
Commands such as shell commands, web requests, code execution, or file writes
are represented as service calls and actuations mediated by the
orchestrator.

\DIFaddbegin \DIFadd{The resulting cycle is the formal counterpart of the
reasoning/action loops and memory architectures studied by ReAct and
CoALA~\cite{yao2023react,sumers2023coala}, and of the surrounding agent
harness~\cite{li2026agentharness}.  We represent that cycle as a relation so
that its paths, recorded traces, and safeguards can be specified.
}\DIFaddend

\subsubsection{Definition}
Let $W$ be the world and let the components of an orchestrator be
\begin{itemize}[leftmargin=*,itemsep=0pt,topsep=2pt]
\item Language $\textit{Lang}$ with
$\mathsf{denotation}_{\textit{Lang}}$, $\mathsf{CandidateClaims}$,
$\mathsf{observation}$, and $\mathsf{actuation}$;

\item Symbolic components 
$\quadruple{\textit{ModLang}}{\mathsf{enc}_{\textit{ModLang}}}
{\mathsf{dec}_{\textit{ModLang}}}{\textit{inference}_{\textit{ModLang}}}
\in \textit{Symbolic}$;

\item Neural components
$\triple{\textit{encode}_{N\!N}}{N\!N}{\textit{decode}_{N\!N}}\in \textit{Neural}$;

\item Agent services
$\pair{\textit{States}_{\textit{Serv}}}
{\textit{trans}_{\textit{Serv}}}\in \textit{Services}$.
\end{itemize}

An orchestrator has internal states $s_{\textit{Orch}}\in
\textit{States}_{\textit{Orch}}$. The orchestrator also records the states
$s_{\textit{Serv}}$ indexed by services $\textit{Serv}\in
\textit{Services}$, although these states can only be read and
modified by their services. It communicates with the components through
messages in $\textit{Lang}$ and with the world through $\mathsf{observation}$ and
$\mathsf{actuation}$. Together these form a closed system, whose states are:
\[
\textit{ClosedSystemStates}\triangleq \textit{States}_{\textit{Orch}}\times
\prod_{\textit{Serv}\in \textit{Services}}
\textit{States}_{\textit{Serv}}\times W\times\mathsf{Trace}.
\]
Its transition relation
$\textit{trans}_{\textit{Orch}}\in \wp((\textit{ClosedSystemStates}\times\textit{Lang})\times
\textit{ClosedSystemStates})$ reacts to an input in \textit{Lang} and includes world changes, such as a doorbell
ringing, that the orchestrator did not cause but may later observe.

There are five\DIFdelbegin\DIFdel{kinds of orchestrator transitions}
\DIFdelend\DIFaddbegin\DIFadd{orchestration cases}\DIFaddend : call a
symbolic or neural component, an agent service (including prompts), update an
internal state, or observe a world change.

\begin{foroma}
These five cases may be (Symbolic:) run a picture-rule checker or form
validator; (Neural:) ask the language model to produce the next message or plan
fragment; (Service:) call the camera, or government server; (Internal:) pause,
resume, update context, or decide the next control label; (World:) record that
the doorbell rang, or the form changed elsewhere.
\end{foroma}

Let
$s_{\textit{Closed}}\triangleq\quadruple{s_{\textit{Orch}}}{\vec{s}_{\textit{Serv}}}{w}{\textit{trace}}$
and
$s'_{\textit{Closed}}\triangleq\quadruple{s'_{\textit{Orch}}}{\vec{s}\,'_{\!\textit{Serv}}}{w'}{\textit{trace}'}$,
where $\vec{s}_{\textit{Serv}}$ is the family of service states and
\(\textit{trace}\) is the event trace.  For
$\textit{Comp}\in\{\textit{Symbolic},\textit{Neural},\textit{Service},
\textit{Internal},\textit{World}\}$, the transition relations are
$\textit{trans}_{\textit{Orch}}^{\textit{Comp}} \subseteq
(\textit{ClosedSystemStates}\times\textit{Lang})\times\textit{ClosedSystemStates}$.

The orchestrator transition relation is their union:
$\textit{trans}_{\textit{Orch}}\triangleq\bigcup_{\textit{Comp}}\textit{trans}_{\textit{Orch}}^{\textit{Comp}}$.
Thus each transition in $\textit{trans}_{\textit{Orch}}$ satisfies at least one
of the following clauses.  The symbol \(m'\), when it appears below, names a
message produced inside the transition and recorded in \(s'_{\textit{Closed}}\)
or in the appended event; it is not a separate output of
\(\textit{trans}_{\textit{Orch}}\).
\DIFadd{The superscript on a transition relation records how the event was
produced; it is distinct from the event's \(\mathsf{kind}\) field, which records
whether the event is a read, write, check, authority event, internal
computation, or external mutation.}

\begin{itemize}[leftmargin=*,itemsep=0pt,topsep=2pt]

\item \emph{Service:} the orchestrator calls a tool, sensor, actuator,
browser, client, or other service. The service may observe the world and
may request a world update. For services that do not observe the world, we
use a distinguished no-observation message \(o_{\bot}\in\textit{Lang}\),
and stipulate that \(\mathsf{observation}(\textit{Serv},w)=o_{\bot}\).
For services that do not request a world update, we use a
distinguished no-actuation message \(m_{\bot}\in\textit{Lang}\), and
stipulate that \(\mathsf{actuation}(w,m_{\bot})=w\). Let
\(\mathsf{NoActuation}(\textit{Serv})\) mark services that cannot request world
updates. Thus the following clause treats observation and actuation uniformly;
actual read-only or write-free services are represented by these no-op messages.
Formally, if $\pair{\pair{s_{\textit{Closed}}}{m}}{s'_{\textit{Closed}}}
\in\textit{trans}_{\textit{Orch}}^{\textit{Service}}$, then there exist
$\textit{Serv}\in\textit{Services}$ with transition structure
		$\pair{\textit{States}_{\textit{Serv}}}
		{\textit{trans}_{\textit{Serv}}}$, an observation message
		$o\in\textit{Lang}$, a world-update message $m'_w\in\textit{Lang}$,
		and an event \(e'\in\textit{Ev}\) such that
		$o=\mathsf{observation}(\textit{Serv},w)$,
		$\pair{
	\triple{\vec{s}_{\textit{Serv}}(\textit{Serv})}{m}{o}}
	{\triple{\vec{s}\,'_{\!\textit{Serv}}(\textit{Serv})}{m'}{m'_w}}
	\in\textit{trans}_{\textit{Serv}}$,
	\(\bigl(\mathsf{NoActuation}(\textit{Serv}) \Rightarrow
	m'_w=m_{\bot}\bigr)\),
	$w'=\mathsf{actuation}(w,m'_w)$, and
	$\vec{s}\,'_{\!\textit{Serv}}(\textit{OtherServ})
	=\vec{s}_{\textit{Serv}}(\textit{OtherServ})$ for all
$\textit{OtherServ}\in \textit{Services}\setminus\{\textit{Serv}\}$,
\DIFdelbegin \DIFdel{$e'.\mathsf{kind}=\textit{Service}$,}
\DIFdelend
$e'.\mathsf{message}=m'$, and
$\textit{trace}'=\textit{trace}\cdot e'$.
The service reads the communication message $m$, observes the world
through $\mathsf{observation}$, returns a message $m'$ recorded in ${s'_{\textit{Closed}}}$, and updates the
world through $\mathsf{actuation}$. No-observation and no-actuation services use
the distinguished no-op messages above. 

\item \emph{Symbolic:} the orchestrator calls a symbolic component by sending a
message in \textit{Lang} which is encoded by $\mathsf{enc}_{\textit{ModLang}}$,
receives a result which is decoded by $\mathsf{dec}_{\textit{ModLang}}$ into
\textit{Lang}.  This message back is recorded by the orchestrator in its next
state $s'_{\textit{Closed}}$.  Formally, if
$\pair{\pair{s_{\textit{Closed}}}{m}}{s'_{\textit{Closed}}}
\in\textit{trans}_{\textit{Orch}}^{\textit{Symbolic}}$, then there exist
$\quadruple{\textit{ModLang}}{\mathsf{enc}_{\textit{ModLang}}}
{\mathsf{dec}_{\textit{ModLang}}}{\textit{inference}_{\textit{ModLang}}} \in
\textit{Symbolic}$, a hypothesis set $H\subseteq\textit{ModLang}$, a result
$r\in\textit{ModLang}\cup\{\bot\}$, and an event \(e'\in\textit{Ev}\) such that
$H=\mathsf{enc}_{\textit{ModLang}}(m)$,
$\pair{H}{r}\in\textit{inference}_{\textit{ModLang}}$,
$m'=\mathsf{dec}_{\textit{ModLang}}(r)$,
${\vec{s}\,'_{\!\textit{Serv}}}=\vec{s}_{\textit{Serv}}$, $w'=w$,
\DIFdelbegin \DIFdel{$e'.\mathsf{kind}=\textit{Symbolic}$,}
\DIFdelend $e'.\mathsf{message}=m'$,
$e'.\mathsf{observation}=o_{\bot}$,
$e'.\mathsf{actuation}=m_{\bot}$ (the distinguished no-observation and
no-actuation messages of the Service clause above), and
$\textit{trace}'=\textit{trace}\cdot e'$. The orchestrator state changes from
$s_{\textit{Orch}}$ to $s'_{\textit{Orch}}$ to record the call and its result.

\item \emph{Neural:} the orchestrator calls a neural component by sending a message $m\in\textit{Lang}$, receives a
generated message $m'$, and records the step and its result $m'$ in its next state ${s'_{\textit{Closed}}}$. Formally, if
$\pair{\pair{s_{\textit{Closed}}}{m}}{s'_{\textit{Closed}}}
\in\textit{trans}_{\textit{Orch}}^{\textit{Neural}}$, then there exists
$\triple{\textit{encode}_{N\!N}}{N\!N}{\textit{decode}_{N\!N}}\in
\textit{Neural}$ and an event \(e'\in\textit{Ev}\) such that
\DIFdelbegin \DIFdel{$m'=\textit{decode}_{N\!N}(N\!N(\textit{encode}_{N\!N}(m))),
{\vec{s}\,'_{\!\textit{Serv}}}= \vec{s}_{\textit{Serv}}$,
}\DIFdelend \DIFaddbegin \DIFadd{$m'\in\mathsf{NeuralOutputs}_{N\!N}(m)$,
${\vec{s}\,'_{\!\textit{Serv}}}= \vec{s}_{\textit{Serv}}$, }\DIFaddend $w'=w$,
\DIFdelbegin \DIFdel{$e'.\mathsf{kind}=\textit{Neural}$,}
\DIFdelend
$e'.\mathsf{message}=m'$,
\PipeRevC{$e'.\mathsf{observation}=o_{\bot}$,
$e'.\mathsf{actuation}=m_{\bot}$,} and
$\textit{trace}'=\textit{trace}\cdot e'$.
For a large language model \textit{LLM},
\DIFdelbegin \DIFdel{$m' =
\textit{decode}_{\textit{LLM}}(\textit{NN}(\textit{encode}_{\textit{LLM}}(m)))$
}\DIFdelend \DIFaddbegin \DIFadd{$m'\in\mathsf{NeuralOutputs}_{\textit{LLM}}(m)$
}\DIFaddend is the call to \textit{LLM} on a context window,
$\textit{encode}_{\textit{LLM}}$ feeds the selected context window and
$\textit{decode}_{\textit{LLM}}$ \DIFdelbegin \DIFdel{records the generated token sequence}\DIFdelend \DIFaddbegin \DIFadd{contains the possible token
sequences; the event records the sequence returned by this call}\DIFaddend .
Updating the context window is an orchestrator update.

\item \emph{Internal:} the orchestrator changes its own state, for example to
route, pause, resume, update context, or choose the next control label.
Formally, if $\pair{\pair{s_{\textit{Closed}}}{m}}{s'_{\textit{Closed}}}
\in\textit{trans}_{\textit{Orch}}^{\textit{Internal}}$, then
${\vec{s}\,'_{\!\textit{Serv}}}={\vec{s}_{\textit{Serv}}}$, $w'=w$, and
\(\textit{trace}'=\textit{trace}\). The internal state change from
$s_{\textit{Orch}}$ to $s'_{\textit{Orch}}$ can reflect control flow and local computations (as in the
state-based small-step operational semantics of programming languages). Any
communication message produced by the internal step is recorded in
\(s'_{\textit{Orch}}\) or in a later event, rather than as a separate output of
\(\textit{trans}_{\textit{Orch}}\).

\item \emph{World:} the world changes independently, e.g. because the doorbell
rings. If the mutation is unobserved, the trace can remain unchanged; when the
system observes or records the mutation, that later observation has an event.
Formally, if $\pair{\pair{s_{\textit{Closed}}}{m}}{s'_{\textit{Closed}}}
\in\textit{trans}_{\textit{Orch}}^{\textit{World}}$, then
${s'_{\textit{Orch}}}={s_{\textit{Orch}}}$,
${\vec{s}\,'_{\!\textit{Serv}}}={\vec{s}_{\textit{Serv}}}$,
\(\textit{trace}'=\textit{trace}\), since only the world changes from $w$ to
$w'$. A later read or service call may observe the new world state and update
the orchestrator state.  \end{itemize}

\subsubsection{Orchestrator Invariants}
\label{sec:OrchestrationCorrectness}

Orchestrator validation is mostly validation of the AI system as a whole. Some
properties can nevertheless be checked at the orchestration layer. For safety
properties one can define an orchestration invariant $I
\in\wp(\textit{ClosedSystemStates})$ which must be proved to hold initially and
remains true after each transition $\textit{trans}_{\textit{Orch}}$. Since
$\textit{trans}_{\textit{Orch}}$ includes external mutations of the world, such
a proof must also state the admissible environment steps. Equivalently, one
proves invariance for AI-controlled steps and assumes an environment relation
$Env\subseteq W\times W$ for external changes allowed by the model of the
world.

Liveness properties are more application specific, such as: a reactive
orchestrator should not terminate before receiving a termination
condition, the body of a reactive cycle should terminate when it is meant
to, forbidden services should not be reachable from a given prompt class,
and service results should be inserted into the context only through the
specified communication channel.  Variant functions can prove termination
or progress of bounded phases.  Recurrent obligations need temporal
assumptions such as fairness or explicit scheduling.

\begin{foroma}
An invariant may say that readiness is unreachable unless sources, picture and form are checked, and confirmation is present
in the trace.
The environment relation admits outside changes such as the doorbell ringing or
the phone moving. Liveness may require the orchestrator eventually to save, ask Oma,
refuse, or route to a clerk rather than remain silently stuck.
\end{foroma}

\section{AI systems\label{sec:def-spec-soundness}}

An AI system is the whole engineered system that connects knowledge,
prompts, components, services, orchestration, and actuations.

\subsection{Definition}
\label{sec:DefinitionAIsystem}

This section describes an AI system as a combination of components that
communicate through a common language, may interact with the world, and be
orchestrated, as shown in Figure~\ref{fig:AISystem}

\begin{figure}
\DIFdelbeginFL %DIFDELCMD < \includegraphics[width=.8\textwidth]{AISystem-tikz.pdf}
%DIFDELCMD < \vspace*{-2mm}
%DIFDELCMD < %%%
\DIFdelendFL \DIFaddbeginFL \centering
\includegraphics[width=.62\textwidth]{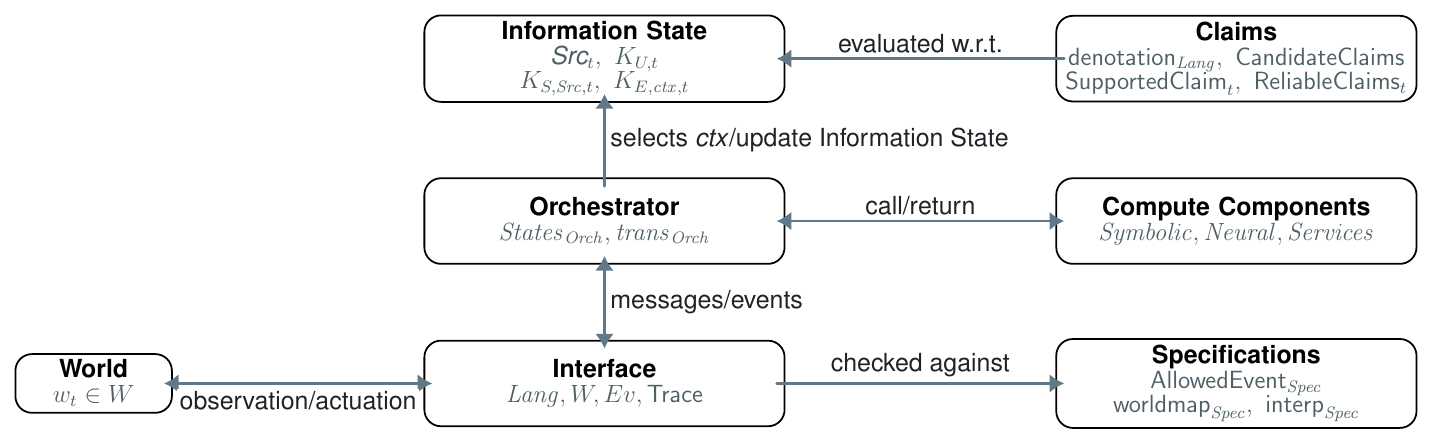}\\[-1mm]
{\small
\[
\DIFaddFL{\begin{gathered}
\langle\langle s_{\textit{Closed}},m\rangle,s'_{\textit{Closed}}\rangle
\in\textit{trans}_{\textit{Orch}}^{\textit{Comp}}
\subseteq\textit{trans}_{\textit{Orch}},\\[-1mm]
s_{\textit{Closed}}=
\langle s_{\textit{Orch}},\vec{s}_{\textit{Serv}},w,\textit{trace}\rangle,
\quad
s'_{\textit{Closed}}=
\langle s'_{\textit{Orch}},\vec{s}\,'_{\!\textit{Serv}},w',\textit{trace}'\rangle.
\end{gathered}}
\]
}
\DIFaddendFL \caption{\DIFdelbeginFL \DIFdelFL{An AI system }\DIFdelendFL \DIFaddbeginFL \DIFaddFL{Static layers and their transition semantics.  The expression below
the picture is the transition form defined in
Section~\ref{sec:Neuro-SymbolicOrchestration}, where \textit{Comp} is Symbolic,
Neural, Service, Internal, or World.  Iteration and refinement are sequences
of these transitions; the five clauses specify which parts of the closed state
may change. }\DIFaddendFL \label{fig:AISystem}}
\vspace*{-4mm}
\end{figure}

An AI system has the following parts:
\begin{itemize}[leftmargin=*,itemsep=0pt,topsep=2pt]
\item \DIFdelbegin\DIFdel{$\mathsf{Information\ State} \triangleq \langle \textit{Src}_t,K_{U,t},K_{S,\textit{Src},t},K_{E,\textit{ctx},t}\rangle$ is the information-state layer: the reference sources $\textit{Src}_t$, the universal knowledge base $K_{U,t}$, the source-derived knowledge $K_{S,\textit{Src},t}$, and the AI
system's effective working knowledge $K_{E,\textit{ctx},t}$.}\DIFdelend
\DIFaddbegin\DIFadd{$\mathsf{Information\ State}\triangleq
\langle\textit{Src}_t,K_{U,t},K_{S,\textit{Src},t},K_{E,\textit{ctx},t}\rangle$.}\DIFaddend

\DIFdelbegin
\DIFdel{\item $\mathsf{Components} \triangleq \langle
\textit{Symbolic},\textit{Neural},\textit{Services}\rangle$ is the components
layer: $\textit{Symbolic}$ components, $\textit{Neural}$ components, and
$\textit{Services}$ e.g. prompts, tools, sensors, browsers, APIs, or actuators.}
\DIFdelend
\DIFaddbegin
\item \DIFadd{$\mathsf{ComputeComponents} \triangleq \langle
\textit{Symbolic},\textit{Neural}\rangle$ contains the message-transforming
symbolic and neural components.}

\item \DIFadd{$\mathsf{AgentServices}\triangleq\textit{Services}$ contains
prompts, tools, sensors, browsers, APIs, or actuators with service state and a
world-facing transition relation.}
\DIFaddend

\DIFdelbegin\DIFdel{\item $\mathsf{Orchestrator} \triangleq \langle
\textit{States}_{\textit{Orch}},\textit{trans}_{\textit{Orch}}\rangle$ is the
orchestrator layer: the orchestrator states $\textit{States}_{\textit{Orch}}$,
including control state, selected context \textit{ctx}, memory and trace
pointers, routing information, and the transition relation
$\textit{trans}_{\textit{Orch}}$ that advances the closed system by calling
components, services, updating internal state, observing world changes, or
producing messages and world updates.}\DIFdelend
\DIFaddbegin\item \DIFadd{$\mathsf{Orchestrator}\triangleq
\langle\textit{States}_{\textit{Orch}},\textit{trans}_{\textit{Orch}}\rangle$.}\DIFaddend

\DIFdelbegin\DIFdel{\item $\mathsf{Interface} \triangleq \langle \textit{Lang},W,\textit{Ev},\mathsf{Trace},
\mathsf{observation},\mathsf{actuation}\rangle$ is the
interface layer: the communication language
$\textit{Lang}$, the set $W\subseteq \bar{U}$
of possible world states, the event records $\textit{Ev}$, the traces
$\mathsf{Trace}=\textit{Ev}^\ast$, the $\mathsf{observation}$ interface from
the world into the system, and the $\mathsf{actuation}$ interface by
which system messages may change the world.}\DIFdelend
\DIFaddbegin\item \DIFadd{$\mathsf{Interface}\triangleq
\langle\textit{Lang},W,\textit{Ev},\mathsf{Trace},\mathsf{observation},
\mathsf{actuation}\rangle$.}\DIFaddend

\item \(\mathsf{Claims}\) contains the maps and judgments of
Section~\ref{sec:candidate-to-reliable-claims}.
\DIFdelbegin\DIFdel{It contains the maps, application-specific relations, and judgments that
connect communication-level items to underlying semantic claims.  Messages map
to intended source items via $\mathsf{denotation}_{\textit{Lang}}$.
\(\mathsf{CandidateClaims}\) maps messages in \textit{Lang} to candidate
semantic meanings.  \(\mathsf{Asserts}_{E,\textit{ctx},t}\) selects which
candidate claims are actually asserted, and \(\mathsf{SourceSupport}_t\)
records which source items support which semantic claims.  The remaining
judgments separate assertion, external support, reliability, and actual
derivational dependence.}\DIFdelend
\DIFaddbegin\DIFadd{These maps and judgments relate messages to the claims they
may express, actually assert, support, and rely on.}\DIFaddend

\item \(\mathsf{Specifications}\) contains the seven maps defined below.
\DIFdelbegin\DIFdel{It says which
observations, actions, and other candidate events are sound;
the specification-relative map from world states to semantic facts;
the specification-relative map from event records to semantic claims or
records; the specification-relative map from trace and world states to
required semantic claims; and the checks for current witnesses, authority, and
permitted actuation. It is checked against the trace, the current world state,
and the candidate event, using the semantic record when semantic context is
needed.}\DIFdelend
\DIFaddbegin\DIFadd{These maps state which events are permitted, what world
states and events mean, which claims are required, and whether witnesses,
authority, and actuation are acceptable.}\DIFaddend  \end{itemize}

\DIFaddbegin
\paragraph{\DIFadd{Application choices and calculated judgments.}}
\DIFadd{The framework does not infer an application's intended semantics from
nothing.  It separates the choices supplied for an application from the
judgments calculated from them:}
\begin{itemize}[leftmargin=*,itemsep=0pt,topsep=2pt]
\item \DIFadd{\emph{The application supplies}:}
\begin{itemize}[leftmargin=1.4em,itemsep=0pt,topsep=0pt]
\item \DIFadd{\emph{knowledge and sources}: \(\bar U,\bar H_t,\bar R_t\),
\(\overline{\textit{Scope}}\), \(\equiv_f\), sources, and source extraction;}
\item \DIFadd{\emph{representation}: \(\textit{Lang}\), both denotation maps,
source support, context selection, \(R_E\), and assertion selection;}
\item \DIFadd{\emph{world and execution}: \(W\), \(\textit{Ev}\),
observation, actuation, \(Env\), and the symbolic, neural, service, and
orchestrator relations;}
\item \DIFadd{\emph{specification}: the seven specification maps, the safeguard,
and the four rejection clauses typed in
Appendix~\ref{app:transition-safeguard-rejection-cases}.}
\end{itemize}
\item \DIFadd{\emph{The framework calculates} the ignorance and forgetfulness
abstractions and the three knowledge bases; fixed-point closures and statuses;
candidate, asserted, supported, and reliable claims; reliance and history
diagnostics; \(\mathsf{Validity}\), \(\mathsf{RequiredClaimsOK}\),
\(\mathsf{AcceptOK}\), \(\mathsf{InterfaceOK}\), \(\mathsf{StepOK}\), and
\(\mathsf{ValidStep}\); and the rejection and soundness judgments.}
\end{itemize}
\DIFadd{The definitions constrain every supplied map and relation.
Section~\ref{sec:finite-readiness-instance} supplies them for one finite
execution and calculates the result.}
\DIFaddend

\subsection{Specifications\label{sec:event-specifications}}

Specifications state which system-controlled events are permitted, so that
traces can be checked against the semantic and operational obligations the
system is meant to respect. \DIFdelbegin \DIFdel{For simplicity, we assume that a
specification~\textsf{Spec} requires universally positive claims.}\DIFdelend
\DIFaddbegin \DIFadd{A specification may require a claim before its universal
status is positive; acceptance still requires a reliable message.}\DIFaddend

Let $\textit{trace}=e_1\ldots e_n\in\mathsf{Trace}$ be the event trace so far, let
$w_t\in W$ be the current world state, and let $e\in \textit{Ev}$ be the next event.
A specification supplies:
\[
\begin{aligned}
\mathsf{worldmap}_{\textsf{Spec}}
&: W\rightharpoonup \wp(\bar{U}),
& \mathsf{AuthorityOK}_{\textsf{Spec}}
&: \mathsf{Trace}\times W\times\textit{Ev}
\rightarrow\{\mathsf{true},\mathsf{false}\},\\
\mathsf{interp}_{\textsf{Spec}}
&: \textit{Ev}\rightharpoonup \wp(\bar{U}),
&\mathsf{ActuationOK}_{\textsf{Spec}}
&: \mathsf{Trace}\times W\times\textit{Ev}\times W
\rightarrow\{\mathsf{true},\mathsf{false}\},\\
\mathsf{Requires}_{\textsf{Spec}}
&: \mathsf{Trace}\times W \rightarrow \wp(\bar{U}),
&\mathsf{AllowedEvent}_{\textsf{Spec}}
&: \mathsf{Trace}\times W \times \textit{Ev}
\rightarrow\{\mathsf{true},\mathsf{false}\}.
\\
\rlap{$\mathsf{CurrentWitnessesOK}_{\textsf{Spec}}: \mathsf{Trace}\times W\times\textit{Ev}
\rightarrow\{\mathsf{true},\mathsf{false}\}.$}
\end{aligned}
\]
For a partial map \(f:X\rightharpoonup Y\), write \(f(x)\downarrow\) when
\(f(x)\) is defined and \(f(x)\uparrow\) when it is undefined; definedness does
not require \(f(x)\) to be nonempty when \(Y\) is a powerset.
Here $\mathsf{worldmap}_{\textsf{Spec}}(w)$ records the
semantic facts about world state \(w\) that matter to \(\textsf{Spec}\), while
$\mathsf{interp}_{\textsf{Spec}}(e)$ records the semantic content of event
\(e\): what it claims, records, witnesses, authorizes, observes, or changes.
The set \(\mathsf{Requires}_{\textsf{Spec}}(\textit{trace},w_t)\) records the
task-relevant semantic claims that \(\textsf{Spec}\) requires at the current
trace and world state.  The predicates
\(\mathsf{CurrentWitnessesOK}_{\textsf{Spec}}\),
\(\mathsf{AuthorityOK}_{\textsf{Spec}}\), and
\(\mathsf{ActuationOK}_{\textsf{Spec}}\) record the remaining acceptance
obligations checked by the safeguard: witness freshness, permission
or delegation, and whether the candidate actuation from \(w_t\) to \(w'\) is
permitted.
\PipeRevC{The content of  \(\mathsf{worldmap}_{\textsf{Spec}}\) is the
reference against which an application instantiates
\(\mathsf{Requires}_{\textsf{Spec}}\) and
\(\mathsf{ActuationOK}_{\textsf{Spec}}\), which may consult it when relating
required claims and permitted updates to the current and proposed world
states.}

If $\mathsf{worldmap}_{\textsf{Spec}}(w_t)\uparrow$, some prior
$\mathsf{interp}_{\textsf{Spec}}(e_i)\uparrow$, or
$\mathsf{interp}_{\textsf{Spec}}(e)\uparrow$, then $e$ is not checkable under
$\textsf{Spec}$ and is not valid;
an implementation may reject, pause, or route to fallback.

\begin{foroma}
$\mathsf{worldmap}_{\textsf{Spec}}(w)$ may contain the current form value and picture
state; $\mathsf{interp}_{\textsf{Spec}}(e)$ may contain the claim that a source was checked, a checker
returned a result, or Oma confirmed the draft.
\end{foroma}

$\mathsf{AllowedEvent}_{\textsf{Spec}}$ lists the constraints given by
\textsf{Spec}.  If $e$ is an external mutation, user action, or other event not
controlled by the system, there is no direct requirement on the system. There
may still be invariants, e.g. the orchestrator state must remain consistent
with the resulting world state once the event is observed. If $e$ is performed
by the system then the specification can require
$\mathsf{AllowedEvent}_{\textsf{Spec}}(\textit{trace},w_t,e)$.  We write
\(\mathsf{SystemControlled}(e)\) when \(e\) is performed by the system,
rather than by a user, external agent, or the world.  For reads, this can
express e.g. that the private information is not read, or the read belongs to
the user-specified boundaries.  For writes, this can express e.g. that private
information is not modified, or an actuation had the required authority.  

\begin{foroma}
We give an example of specification implemented in our prototype using the
\texttt{cat} language~\cite{amt14}, and shown in appendix.
A write event \(e\) labelled \texttt{CheckReady} is allowed only if the trace
contains the selected route, a successful picture-check witness, a
form-complete witness, and Oma's confirmation of the draft. Moreover, no
invalidation may occur between those witnesses and the readiness event: editing
the form, changing the picture, the route, or revoking confirmation makes the
prior support stale.  This catches readiness checks without a picture check,
without form completion, without Oma's current draft confirmation, or after an
edit that invalidates the earlier confirmation.
\end{foroma}

\section{Soundness of Controlled Transitions\label{sec:controlled-soundness}}

Here we examine what it means for the orchestrator to make sound transitions.
Controlled transition soundness requires every system-controlled step to be
justified before it goes ahead.

\subsection{Conditions on events and orchestrator steps}

Events and orchestrator steps connect messages to state changes, so each step
must record that the event, message, interface interpretation, and resulting
transition agree. We define a number of concepts to do so:
\begin{itemize}[leftmargin=*,itemsep=0pt,topsep=2pt]
\item \emph{Reliable assertions.} This says that the claims made by the message
of an event $e$ are reliable:
{\small
\[
\begin{aligned}
&\mathsf{ReliableAssertions}_{t,\textit{ctx}}(\textit{trace};e)
\triangleq
|\textit{trace}|=t
\wedge
\textit{ctx}\in\mathsf{Select}_t(\textit{trace})
\\
&\quad{}\wedge
\forall c\in
\mathsf{AssertedClaims}_{E,\textit{ctx},t}(e.\mathsf{message})
\mathrel{.}\\
&\quad{}
\mathsf{ReliableClaim}_t(e.\mathsf{message},c;
K_{E,\textit{ctx},t},
K_{S,\textit{Src},t},K_{U,t},\textit{ctx}) .
\end{aligned}
\]
}

\item \emph{Validity.}
This says that the event is meaningful and that every claim asserted by its
message may be relied upon in every selected context.  It does not itself
assert \(\mathsf{AllowedEvent}_{\textsf{Spec}}\):
\[
\begin{aligned}
&\mathsf{Validity}_{t,\textsf{Spec}}(\textit{trace},w;e)
\triangleq
|\textit{trace}|=t
\wedge
\mathsf{worldmap}_{\textsf{Spec}}(w)\downarrow
\wedge
\bigl(\forall e_j\in\textit{trace}\mathrel{.}\,
\mathsf{interp}_{\textsf{Spec}}(e_j)\downarrow\bigr)
\\
&\quad{}\wedge
\mathsf{interp}_{\textsf{Spec}}(e)\downarrow
\wedge
\forall\textit{ctx}\in\mathsf{Select}_t(\textit{trace})\mathrel{.}\,
\mathsf{ReliableAssertions}_{t,\textit{ctx}}(\textit{trace};e).
\end{aligned}
\]

\item \emph{Interface Agreement.} The event record must match what the
observation and actuation interfaces actually did:
{\small
\[
\begin{aligned}
&\mathsf{InterfaceOK}(\textit{Serv},e,w,w')
\triangleq
\textit{Serv}\in\textit{Services}
\wedge
\bigl(
e.\mathsf{observation}=o_{\bot}
\vee
e.\mathsf{observation}={}\\
&\quad{}\mathsf{observation}(\textit{Serv},w)
\bigr)\\
&\quad{}
\wedge
\bigl(
(e.\mathsf{actuation}=m_{\bot}\wedge w'=w)
\vee
\mathsf{actuation}(w,e.\mathsf{actuation})=w'
\bigr).
\end{aligned}
\]
}
Here \(\textit{Serv}\) is the service whose observation interface is being
checked, and \(o_{\bot}\) and \(m_{\bot}\) are the no-observation and
no-actuation messages used by the service semantics.

\item \emph{Step Satisfies Interface.} 
We define the derived step relation
\[
\begin{aligned}
&\mathsf{Step}_{\textit{Orch}}
(s,\vec{s},\textit{trace},w,e,
s',\vec{s}',w')
\\
&\quad\triangleq
\exists m\in\textit{Lang}\mathrel{.}
\bigl\langle
  \bigl\langle
    \langle s,\vec{s},w,\textit{trace}\rangle,
    m
  \bigr\rangle,
  \langle s',\vec{s}',w',\textit{trace}\cdot e\rangle
\bigr\rangle
\in\textit{trans}_{\textit{Orch}} .
\end{aligned}
\]

Now, this invariant says that the step taken
satisfies the interface agreement:
\[
\begin{aligned}
&\mathsf{StepOK}(\mathsf{Step}_{\textit{Orch}})
\triangleq
\forall s,\vec{s},s',\vec{s}',\textit{trace},w,w',e\mathrel{.}\\
&\qquad
\mathsf{Step}_{\textit{Orch}}
(s,\vec{s},\textit{trace},w,e,s',\vec{s}',w')
\Rightarrow
\exists \textit{Serv}\in\textit{Services}\mathrel{.}\,
\mathsf{InterfaceOK}(\textit{Serv},e,w,w').
\end{aligned}
\]
\end{itemize}

\subsection{Transition Safeguard}

A transition safeguard is a check that determines whether a proposed system
action has enough reliable support, authority and freshness to proceed.  Before
adding \(e\) to the trace, it would be desirable for an orchestrator to run
$\mathsf{Safeguard}_{\textsf{Spec}}(\textit{trace},w,e,w',\textit{ctx})$.  At a
high-level, the safeguard either accepts the event $e$ or rejects it. Here we
discuss the acceptance case, and rejection cases are discussed later in
Section~\ref{sec:rej-diag}:

\DIFaddbegin \DIFadd{Operationally, this safeguard occupies the role of a runtime monitor or
execution enforcer~\cite{schneider2000enforceable,ligatti2005editAutomata,wang2025agentspec}.
Its acceptance condition additionally checks the semantic support, current
witnesses, and authority attached to the proposed event.
}\DIFaddend
\begin{itemize}[leftmargin=*,itemsep=0pt,topsep=2pt]
\item \emph{Acceptance.} A result \(\mathsf{Accept}(e,w')\) records \(e\) and commits the world
update \(w'\). We write:
\[
\mathsf{SafeguardAccept}_{\textsf{Spec}}
(\textit{trace},w,e,w',\textit{ctx})
\triangleq(\mathsf{Safeguard}_{\textsf{Spec}}(\textit{trace},w,e,w',\textit{ctx})
=\mathsf{Accept}(e,w')).
\]

\item \emph{Acceptance certificate.}
An acceptance result is not itself a proof that \(e\) is allowed.  The
safeguard must establish a structured certificate.  First, every claim required
by \(\textsf{Spec}\) at the current trace and world must have a reliable
message:
{\small
\[
\begin{aligned}
&\mathsf{RequiredClaimsOK}_{\textsf{Spec},t,\textit{ctx}}
(\textit{trace},w;e)
\triangleq
|\textit{trace}|=t
\wedge
\textit{ctx}\in\mathsf{Select}_t(\textit{trace})
\\
&\quad{}\wedge
\forall c\in\mathsf{Requires}_{\textsf{Spec}}(\textit{trace},w)
\mathrel{.}\,
\exists m_c\in\textit{Lang}\mathrel{.}\\
&\quad{}
\mathsf{ReliableClaim}_t(m_c,c;
K_{E,\textit{ctx},t},
K_{S,\textit{Src},t},K_{U,t},\textit{ctx}) .
\end{aligned}
\]
}
\PipeRevC{\(\mathsf{Validity}\) ranges over every context in
\(\mathsf{Select}_t(\textit{trace})\) since a later call may select
differently from the same trace, whereas \(\mathsf{RequiredClaimsOK}\) fixes
the single context under which the safeguard runs and whose reliable messages
it can actually inspect.}
The remaining certificate predicates are specification-relative checks:
\(\mathsf{CurrentWitnessesOK}\) checks that required witnesses are present and
not invalidated, \(\mathsf{AuthorityOK}\) checks permission and delegation, and
\(\mathsf{ActuationOK}\) checks that the proposed world update is permitted.
The complete acceptance certificate is:
{\small
\[
\begin{aligned}
&\mathsf{AcceptOK}_{\textsf{Spec},t,\textit{ctx}}
(\textit{trace},w,e,w')
\triangleq
\mathsf{Validity}_{t,\textsf{Spec}}(\textit{trace},w;e)
\wedge
\mathsf{RequiredClaimsOK}_{\textsf{Spec},t,\textit{ctx}}
(\textit{trace},w;e)
\\
&\quad{}\wedge
\mathsf{CurrentWitnessesOK}_{\textsf{Spec}}(\textit{trace},w;e)
\wedge
\mathsf{AuthorityOK}_{\textsf{Spec}}(\textit{trace},w;e)
\\
&\quad{}\wedge
\mathsf{ActuationOK}_{\textsf{Spec}}(\textit{trace},w;e,w') .
\end{aligned}
\]
}

\item \emph{Acceptance adequacy and safeguard soundness.}
Acceptance adequacy is the declarative specification claim that the acceptance
certificate is strong enough to imply that the event is allowed:
\[
\begin{aligned}
&\mathsf{AcceptAdequate}_{\textsf{Spec}}
\triangleq\\
&\quad
\forall \textit{trace},w,e,w',\textit{ctx}\mathrel{.}\,
\mathsf{AcceptOK}_{\textsf{Spec},|\textit{trace}|,\textit{ctx}}
(\textit{trace},w,e,w')
\\
&\qquad\Rightarrow
\mathsf{AllowedEvent}_{\textsf{Spec}}(\textit{trace},w,e).
\end{aligned}
\]
The implementation-level safeguard contract says that every accept result
comes with such a certificate:
\[
\begin{aligned}
&\mathsf{SafeguardAcceptSound}_{\textsf{Spec}}
(\mathsf{Safeguard}_{\textsf{Spec}})
\triangleq\\
&\quad
\forall \textit{trace},w,e,w',\textit{ctx}\mathrel{.}\,
\mathsf{Safeguard}_{\textsf{Spec}}
(\textit{trace},w,e,w',\textit{ctx})
=\mathsf{Accept}(e,w')
\\
&\qquad\Rightarrow
\mathsf{AcceptOK}_{\textsf{Spec},|\textit{trace}|,\textit{ctx}}
(\textit{trace},w,e,w').
\end{aligned}
\]

\item \emph{Valid step.}
A valid step is a system-controlled orchestrator step whose proposed event has
been accepted by the safeguard:
{\small\begin{flalign*}
&\hspace{1em}\mathsf{ValidStep}_{\textsf{Spec}}
(\mathsf{Step}_{\textit{Orch}},
\mathsf{Safeguard}_{\textsf{Spec}},
\textit{trace},w,w',e,\textit{ctx},
s,\vec{s},s',\vec{s}')
\triangleq &&
\\
&\hspace{1em}\quad
\mathsf{Step}_{\textit{Orch}}
(s,\vec{s},\textit{trace},w,e,
s',\vec{s}',w')
\wedge
\mathsf{SystemControlled}(e) 
\wedge
\mathsf{SafeguardAccept}_{\textsf{Spec}}
(\textit{trace},w,e,w',\textit{ctx}). &&
\end{flalign*}}
\end{itemize}

An accepted controlled transition is sound when the safeguard acceptance is
sound, the acceptance certificate is adequate, and the recorded step agrees with
the observation/actuation interfaces:
\begin{theorem}[Valid steps for accepted controlled transitions]\label{th:valid-accepted-controlled-transitions}
For all $\textit{trace}$, $w$, $w'$, $e$, $\textit{ctx}$,
$s$, $\vec{s}$, $s'$, $\vec{s}'$:
\[
\begin{aligned}
&\mathsf{ValidStep}_{\textsf{Spec}}
\bigl(
\mathsf{Step}_{\textit{Orch}},
\mathsf{Safeguard}_{\textsf{Spec}},
\textit{trace},w,w',e,\textit{ctx},
s,\vec{s},s',\vec{s}'
\bigr)
\\
&\quad{}\wedge
\mathsf{SafeguardAcceptSound}_{\textsf{Spec}}
(\mathsf{Safeguard}_{\textsf{Spec}})
\wedge
\mathsf{AcceptAdequate}_{\textsf{Spec}}
\wedge
\mathsf{StepOK}(\mathsf{Step}_{\textit{Orch}})
\Rightarrow\\
&\qquad
\mathsf{AllowedEvent}_{\textsf{Spec}}(\textit{trace},w,e)
\wedge
\exists \textit{Serv}\in\textit{Services}\mathrel{.}\,
\mathsf{InterfaceOK}(\textit{Serv},e,w,w').
\end{aligned}
\]

\end{theorem}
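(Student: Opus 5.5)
The proof is a direct unfolding of definitions; no earlier theorem (neither Theorem~\ref{th:exhaustiveness} nor Theorem~\ref{th:trace-soundness}) is needed. I would fix arbitrary $\textit{trace}$, $w$, $w'$, $e$, $\textit{ctx}$, $s$, $\vec{s}$, $s'$, $\vec{s}'$ and assume the four hypotheses. I would then prove the two conjuncts of the conclusion separately, since each uses a disjoint part of the hypotheses.

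For the first conjunct, $\mathsf{AllowedEvent}_{\textsf{Spec}}(\textit{trace},w,e)$, I unfold $\mathsf{ValidStep}_{\textsf{Spec}}$ and extract its third conjunct, $\mathsf{SafeguardAccept}_{\textsf{Spec}}(\textit{trace},w,e,w',\textit{ctx})$. By definition this is the equation $\mathsf{Safeguard}_{\textsf{Spec}}(\textit{trace},w,e,w',\textit{ctx})=\mathsf{Accept}(e,w')$. I instantiate the universally quantified $\mathsf{SafeguardAcceptSound}_{\textsf{Spec}}(\mathsf{Safeguard}_{\textsf{Spec}})$ at the same tuple $(\textit{trace},w,e,w',\textit{ctx})$. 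Modus ponens then yields $\mathsf{AcceptOK}_{\textsf{Spec},|\textit{trace}|,\textit{ctx}}(\textit{trace},w,e,w')$. Instantiating $\mathsf{AcceptAdequate}_{\textsf{Spec}}$ at the same tuple gives $\mathsf{AllowedEvent}_{\textsf{Spec}}(\textit{trace},w,e)$.

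The one point to check is that the time index matches. Both $\mathsf{SafeguardAcceptSound}$ and $\mathsf{AcceptAdequate}$ are stated with index $|\textit{trace}|$, so the certificate produced by the first is exactly the antecedent of the second. No reasoning about $\mathsf{Validity}$ or $\mathsf{RequiredClaimsOK}$ is needed. The conjunct $\mathsf{SystemControlled}(e)$ is not used either; it only records the intended scope of the theorem.

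For the second conjunct, I take the first conjunct of $\mathsf{ValidStep}_{\textsf{Spec}}$, namely $\mathsf{Step}_{\textit{Orch}}(s,\vec{s},\textit{trace},w,e,s',\vec{s}',w')$. I instantiate the universal statement $\mathsf{StepOK}(\mathsf{Step}_{\textit{Orch}})$ at $s,\vec{s},s',\vec{s}',\textit{trace},w,w',e$. Modus ponens gives $\exists\textit{Serv}\in\textit{Services}.\,\mathsf{InterfaceOK}(\textit{Serv},e,w,w')$.

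The only real obstacle is bookkeeping: the argument orders differ between $\mathsf{ValidStep}$ (where $w'$ precedes $e$) and $\mathsf{Step}_{\textit{Orch}}$ (where $e$ precedes $s'$). I would therefore write each instantiation explicitly. I would also note that the witness $m$ hidden inside $\mathsf{Step}_{\textit{Orch}}$ never needs to be exhibited. Conjoining the two derived facts gives the statement.
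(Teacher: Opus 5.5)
Your proposal is correct and follows the paper's proof exactly. You extract the accepted safeguard result from $\mathsf{ValidStep}_{\textsf{Spec}}$, chain $\mathsf{SafeguardAcceptSound}_{\textsf{Spec}}$ and $\mathsf{AcceptAdequate}_{\textsf{Spec}}$ (both at index $|\textit{trace}|$) to obtain $\mathsf{AllowedEvent}_{\textsf{Spec}}$, and instantiate $\mathsf{StepOK}$ on the recorded step to obtain the $\mathsf{InterfaceOK}$ witness; your remarks that $\mathsf{SystemControlled}(e)$ goes unused and that the time indices align are accurate bookkeeping the paper leaves implicit.
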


\begin{foroma}
\DIFaddbegin \DIFadd{Instantiate the theorem with the finite transition of
Section~\ref{sec:finite-readiness-instance}: \(\textit{trace}=\textit{trace}_0\),
\(w=w_0\), \(w'=w_0^+\), \(e=e_{\mathsf{ready}}\), and
\(\textit{ctx}=\textit{ctx}_0\).  In that instance,
\(\mathsf{ValidStep}_{\textsf{Spec}}\) holds because
\(\textit{trans}_{\textit{Orch}}\) contains the service step that appends
\(e_{\mathsf{ready}}\), the
event is system-controlled, and the safeguard returns
\(\mathsf{Accept}(e_{\mathsf{ready}},w_0^+)\).  The safeguard is accept-sound
because this is its only accept result and \(\mathsf{AcceptOK}\) holds for that
tuple.  Acceptance adequacy holds because the application-supplied
\(\mathsf{AllowedEvent}\) holds on the only tuple satisfying
\(\mathsf{AcceptOK}\).
\(\mathsf{StepOK}\) holds because that orchestrator step has
\(\mathsf{InterfaceOK}(\textit{ReadyServ},e_{\mathsf{ready}},w_0,w_0^+)\).
The theorem therefore concludes exactly
\(\mathsf{AllowedEvent}_{\textsf{Spec}}(\textit{trace}_0,w_0,e_{\mathsf{ready}})\)
and
\(\mathsf{InterfaceOK}(\textit{ReadyServ},e_{\mathsf{ready}},w_0,w_0^+)\).}
\DIFaddend
\end{foroma}
\subsection{Rejection diagnostics \label{sec:rej-diag}}

Safeguard rejection has several cases: for controlled-transition soundness,
every rejected transition must identify a failed semantic obligation for
\(\mathsf{AcceptOK}\), i.e. reliable support, current witness, authority, or
allowed actuation. \PipeRev{These obligations are not properties one could
prove of an arbitrary safeguard; like the acceptance contract
\(\mathsf{SafeguardAcceptSound}_{\textsf{Spec}}\), they \textit{define} the
implementation-level rejection contract.}

\begin{definition}\label{def:reject-sound}
\(\mathsf{Safeguard}_{\textsf{Spec}}\) is \textit{reject-sound}, written
\(\mathsf{SafeguardRejectSound}_{\textsf{Spec}}
(\mathsf{Safeguard}_{\textsf{Spec}})\), when every rejection result satisfies
the obligation of its constructor, universally quantified over
\(\textit{trace}\), \(w\), \(e\), \(w'\), \(\textit{ctx}\), and the constructor
payload; the formal rendition is in
Appendix~\ref{app:transition-safeguard-rejection-cases}:
\begin{itemize}[leftmargin=*,itemsep=0pt,topsep=2pt]
\item \(\mathsf{RejectMissingClaim}(c)\) obliges
\(\mathsf{Unjustifiability}\): \(\textsf{Spec}\) requires \(c\) at
\((\textit{trace},w)\), and no current reliable message justifies relying on
\(c\).

\item \(\mathsf{RejectRelianceFailure}(m,c,d)\) obliges
\[m=e.\mathsf{message}
\wedge
\mathsf{RelianceFailure}_{|\textit{trace}|,\textit{ctx}}(m,c)
\wedge
\mathsf{DiagnosticOK}_{|\textit{trace}|,\textit{ctx}}
(d;m,c,e,\textit{trace})\] 
where \(\mathsf{DiagnosticOK}\) means that \(d\)
names a specific reliance-failure case established by the direct or
history-derived diagnostic predicates of Section~\ref{sec:mismatches}.
\DIFaddbegin \DIFadd{Exhaustiveness is what makes \(\mathsf{DiagnosticOK}\)
total, so a reject-sound safeguard can always name its reason.}\DIFaddend

\item \(\mathsf{RejectStaleWitness}(c,e_j)\) obliges
\(\mathsf{StaleWitness}_{\textsf{Spec}}(\textit{trace},w;e,c,e_j)\): some
earlier trace event witnessed \(c\), the later event \(e_j\) invalidated that
witness under \(\textsf{Spec}\), and no later event restored a current witness
for \(c\).

\item \(\mathsf{RejectMissingAuthority}(a)\) obliges
\(\mathsf{MissingAuthority}_{\textsf{Spec}}(\textit{trace},w;e,a)\):
\(\textsf{Spec}\) requires prior authority \(a\) for \(e\) at
\((\textit{trace},w)\), but no prior trace event is \(a\), records \(a\) in its
authority field, or interprets as \(a\).

\item \(\mathsf{RejectForbiddenActuation}(r)\) obliges
\(\mathsf{ForbiddenActuation}_{\textsf{Spec}}(\textit{trace},w;e,w',r)\):
\(r=e.\mathsf{actuation}\), \(r\neq m_{\bot}\),
\(w'=\mathsf{actuation}(w,r)\), and \(\textsf{Spec}\) does not permit that
actuation for this event at \((\textit{trace},w)\).
\end{itemize}
\end{definition}

Reject-soundness ties each rejection to a true failure statement.  To
conclude that a rejected transition refutes the acceptance certificate
\(\mathsf{AcceptOK}\), the failure predicates must also refute the
corresponding certificate predicates
\(\mathsf{CurrentWitnessesOK}_{\textsf{Spec}}\),
\(\mathsf{AuthorityOK}_{\textsf{Spec}}\), and
\(\mathsf{ActuationOK}_{\textsf{Spec}}\), which \textsf{Spec} otherwise leaves
opaque.  This is a coherence condition on \textsf{Spec} alone:
\begin{definition}\label{def:spec-coherent}
\textsf{Spec} is \textit{diagnostically coherent} when, for all
\(\textit{trace}\), \(w\), \(e\), \(w'\), \(c\), \(e_j\), \(a\), \(r\):
{\small\[
\begin{aligned}
\mathsf{SpecCoherent}_{\textsf{Spec}}
\triangleq{}
&\bigl(\mathsf{StaleWitness}_{\textsf{Spec}}(\textit{trace},w;e,c,e_j)
\Rightarrow
\neg\,\mathsf{CurrentWitnessesOK}_{\textsf{Spec}}(\textit{trace},w;e)\bigr)
\\
{}\wedge{}
&\bigl(\mathsf{MissingAuthority}_{\textsf{Spec}}(\textit{trace},w;e,a)
\Rightarrow
\neg\,\mathsf{AuthorityOK}_{\textsf{Spec}}(\textit{trace},w;e)\bigr)
\\
{}\wedge{}
&\bigl(\mathsf{ForbiddenActuation}_{\textsf{Spec}}(\textit{trace},w;e,w',r)
\Rightarrow
\neg\,\mathsf{ActuationOK}_{\textsf{Spec}}(\textit{trace},w;e,w')\bigr).
\end{aligned}
\]}
\end{definition}

\PipeRev{\noindent
Under these two assumptions, diagnostic adequacy is a genuine implication:
every rejection refutes an actual conjunct of the acceptance certificate.}

\PipeRev{%
\begin{theorem}[Rejection Adequacy]\label{th:diagnostic-adequacy}
Assume
\(\mathsf{SafeguardRejectSound}_{\textsf{Spec}}
(\mathsf{Safeguard}_{\textsf{Spec}})\)
as well as \(\mathsf{SpecCoherent}_{\textsf{Spec}}\).
Then for all \(\textit{trace}\), \(w\), \(e\), \(w'\), and every
\(\textit{ctx}\in\mathsf{Select}_{|\textit{trace}|}(\textit{trace})\):
\[
\mathsf{Safeguard}_{\textsf{Spec}}(\textit{trace},w,e,w',\textit{ctx})
\in\mathsf{Reject}_{\textsf{Spec}}
\Rightarrow
\neg\,\mathsf{AcceptOK}_{\textsf{Spec},|\textit{trace}|,\textit{ctx}}
(\textit{trace},w,e,w').
\]
\end{theorem}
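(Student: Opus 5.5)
The proof is a case analysis on the constructor of the rejection result. Fix \(\textit{trace}\), \(w\), \(e\), \(w'\) and \(\textit{ctx}\in\mathsf{Select}_{|\textit{trace}|}(\textit{trace})\), and write \(t=|\textit{trace}|\). Suppose the safeguard returns a value in \(\mathsf{Reject}_{\textsf{Spec}}\). By the typing of rejection results in Appendix~\ref{app:transition-safeguard-rejection-cases}, this value is one of the five constructors of Definition~\ref{def:reject-sound}. By \(\mathsf{SafeguardRejectSound}_{\textsf{Spec}}\), its obligation holds for the given payload. In each case I will refute one conjunct of \(\mathsf{AcceptOK}_{\textsf{Spec},t,\textit{ctx}}(\textit{trace},w,e,w')\); that conjunct is either \(\mathsf{Validity}\), \(\mathsf{RequiredClaimsOK}\), or one of the three opaque specification predicates.

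\emph{Semantic cases.} For \(\mathsf{RejectMissingClaim}(c)\), the obligation \(\mathsf{Unjustifiability}\) gives two facts: \(c\in\mathsf{Requires}_{\textsf{Spec}}(\textit{trace},w)\), and there is no \(m_c\in\textit{Lang}\) with \(\mathsf{ReliableClaim}_t(m_c,c;\ldots,\textit{ctx})\) for the current context. This directly negates the universally quantified clause of \(\mathsf{RequiredClaimsOK}_{\textsf{Spec},t,\textit{ctx}}\). Here I must check that the appendix states \(\mathsf{Unjustifiability}\) relative to the same \(\textit{ctx}\) and \(t\); if it instead quantifies over all selected contexts, it is stronger and still suffices.

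For \(\mathsf{RejectRelianceFailure}(m,c,d)\), we have \(m=e.\mathsf{message}\) and \(\mathsf{RelianceFailure}_{t,\textit{ctx}}(m,c)\). Unfolding, this gives \(\mathsf{ClaimUse}_{E,\textit{ctx},t}(m,c)\), hence \(c\in\mathsf{AssertedClaims}_{E,\textit{ctx},t}(e.\mathsf{message})\), together with \(\neg\mathsf{ReliableClaim}_t(e.\mathsf{message},c;\ldots)\). So \(\mathsf{ReliableAssertions}_{t,\textit{ctx}}(\textit{trace};e)\) fails. Since \(\textit{ctx}\in\mathsf{Select}_t(\textit{trace})\) by hypothesis, the universal quantifier over contexts in \(\mathsf{Validity}_{t,\textsf{Spec}}\) fails too. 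This is the only place the hypothesis on \(\textit{ctx}\) is genuinely needed. The diagnostic \(d\) plays no role here; by Theorem~\ref{th:exhaustiveness} it merely exists.

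\emph{Specification cases.} For \(\mathsf{RejectStaleWitness}(c,e_j)\), \(\mathsf{RejectMissingAuthority}(a)\) and \(\mathsf{RejectForbiddenActuation}(r)\), reject-soundness yields \(\mathsf{StaleWitness}_{\textsf{Spec}}\), \(\mathsf{MissingAuthority}_{\textsf{Spec}}\) and \(\mathsf{ForbiddenActuation}_{\textsf{Spec}}\) respectively. The matching conjunct of \(\mathsf{SpecCoherent}_{\textsf{Spec}}\) (Definition~\ref{def:spec-coherent}), instantiated at the same payload, then gives \(\neg\mathsf{CurrentWitnessesOK}_{\textsf{Spec}}\), \(\neg\mathsf{AuthorityOK}_{\textsf{Spec}}\) and \(\neg\mathsf{ActuationOK}_{\textsf{Spec}}\). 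Each of these kills the corresponding conjunct of \(\mathsf{AcceptOK}\).

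\emph{Main obstacle.} The risk is not mathematical but bookkeeping. I must confirm that \(\mathsf{Reject}_{\textsf{Spec}}\) contains exactly these five constructors, with no generic rejection carrying no obligation. I must also confirm that every obligation is stated at the same \((\textit{trace},w,e,w',\textit{ctx})\) and time \(t=|\textit{trace}|\) used in \(\mathsf{AcceptOK}\). The quantifier alignment in the reliance case is the most delicate point: \(\mathsf{Validity}\) ranges over all selected contexts while the rejection concerns one. I would settle it first, and it goes through precisely because we assume \(\textit{ctx}\in\mathsf{Select}_t(\textit{trace})\).
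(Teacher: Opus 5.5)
Your proposal is correct and follows the paper's proof essentially step for step. It does the same case analysis on the five $\mathsf{Reject}_{\textsf{Spec}}$ constructors: $\mathsf{Unjustifiability}$ refutes $\mathsf{RequiredClaimsOK}$, $\mathsf{RelianceFailure}$ together with $\textit{ctx}\in\mathsf{Select}_t(\textit{trace})$ refutes the final conjunct of $\mathsf{Validity}$, and $\mathsf{SpecCoherent}_{\textsf{Spec}}$ turns each of the three specification failures into the negation of the matching opaque conjunct of $\mathsf{AcceptOK}$.
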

}

\PipeRev{\noindent
Together with acceptance soundness, a total safeguard then decides the
certificate:}

\PipeRev{%
\begin{theorem}[Safeguard Decision]\label{th:safeguard-decision}
Assume additionally
\(\mathsf{SafeguardAcceptSound}_{\textsf{Spec}}
(\mathsf{Safeguard}_{\textsf{Spec}})\)
and that \(\mathsf{Safeguard}_{\textsf{Spec}}\) is total: every call returns
\(\mathsf{Accept}(e,w')\) or a value in \(\mathsf{Reject}_{\textsf{Spec}}\).
Then for all \(\textit{trace}\), \(w\), \(e\), \(w'\), and every
\(\textit{ctx}\in\mathsf{Select}_{|\textit{trace}|}(\textit{trace})\):
\[
\mathsf{Safeguard}_{\textsf{Spec}}(\textit{trace},w,e,w',\textit{ctx})
=\mathsf{Accept}(e,w')
\iff
\mathsf{AcceptOK}_{\textsf{Spec},|\textit{trace}|,\textit{ctx}}
(\textit{trace},w,e,w').
\]
\end{theorem}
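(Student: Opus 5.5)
The plan is to prove the two directions of the biconditional separately, fixing arbitrary \(\textit{trace}\), \(w\), \(e\), \(w'\) and \(\textit{ctx}\in\mathsf{Select}_{|\textit{trace}|}(\textit{trace})\), and writing \(t=|\textit{trace}|\). The forward direction (\(\Rightarrow\)) is immediate. \(\mathsf{SafeguardAcceptSound}_{\textsf{Spec}}\) is universally quantified over exactly these arguments, so instantiating it gives \(\mathsf{AcceptOK}_{\textsf{Spec},t,\textit{ctx}}(\textit{trace},w,e,w')\) from \(\mathsf{Safeguard}_{\textsf{Spec}}(\textit{trace},w,e,w',\textit{ctx})=\mathsf{Accept}(e,w')\). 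This direction needs neither totality nor coherence, nor the restriction \(\textit{ctx}\in\mathsf{Select}_t(\textit{trace})\).

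For the converse (\(\Leftarrow\)) I will argue by contraposition, using totality. Assume \(\mathsf{AcceptOK}_{\textsf{Spec},t,\textit{ctx}}(\textit{trace},w,e,w')\) and suppose the safeguard does not return \(\mathsf{Accept}(e,w')\). Totality then says that the call returns a value in \(\mathsf{Reject}_{\textsf{Spec}}\). Since \(\mathsf{SafeguardRejectSound}_{\textsf{Spec}}\) and \(\mathsf{SpecCoherent}_{\textsf{Spec}}\) are among the hypotheses (the theorem says ``additionally''), Theorem~\ref{th:diagnostic-adequacy} applies at this \(\textit{ctx}\), because \(\textit{ctx}\in\mathsf{Select}_t(\textit{trace})\). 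It yields \(\neg\,\mathsf{AcceptOK}_{\textsf{Spec},t,\textit{ctx}}(\textit{trace},w,e,w')\), which contradicts the assumption. Hence the safeguard returns \(\mathsf{Accept}(e,w')\).

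The only delicate point is reading totality correctly. It must be read as ``either exactly \(\mathsf{Accept}(e,w')\) for the same \(e,w'\) passed in, or a rejection'', and not as an accept carrying a modified event or world. This matches the stated hypothesis verbatim, so I would simply make the reading explicit. A second check is that \(\mathsf{Accept}(e,w')\notin\mathsf{Reject}_{\textsf{Spec}}\) is not needed, since the argument only uses the disjunction. The main obstacle therefore lies in Theorem~\ref{th:diagnostic-adequacy} itself, which we take as given. Its role is to ensure that every rejection constructor (missing claim, reliance failure, stale witness, missing authority, forbidden actuation) refutes a conjunct of \(\mathsf{AcceptOK}\). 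Once it is available, the decision theorem is a two-line combination of acceptance soundness, totality, and rejection adequacy.
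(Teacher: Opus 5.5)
Your proposal is correct and follows the paper's proof: the forward direction is a direct instantiation of \(\mathsf{SafeguardAcceptSound}_{\textsf{Spec}}\). The converse uses totality to reduce to a rejection result, which Theorem~\ref{th:diagnostic-adequacy} (with reject-soundness and diagnostic coherence carried over by ``additionally'') shows is incompatible with \(\mathsf{AcceptOK}\).
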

}

\begin{foroma}
\DIFaddbegin \DIFadd{Under the hypotheses of Rejection Adequacy, every rejection identifies a check
in \(\mathsf{AcceptOK}\) that actually failed.  If the safeguard also always
returns a result and never accepts unless \(\mathsf{AcceptOK}\) holds, Safeguard
Decision says that it accepts exactly the transitions that pass all the
\(\mathsf{AcceptOK}\) checks.}\DIFaddend
\end{foroma}

\section{Related Work\DIFaddbegin \label{sec:related-work}\DIFaddend }

\DIFaddbegin \paragraph{\DIFadd{Semantics and representation.}}\DIFaddend
\DIFaddbegin \DIFadd{Our abstractions draw on abstract interpretation
\cite{cousotCousot1977AI}; our justification statuses on trivalent, epistemic,
and Belnap--Dunn/FDE logics
\cite{TrivalentLogics26,EpistemicLogic25,belnap1977useful,dunn1976intuitive};
and our denotation maps on denotational semantics
\cite{strachey2000fundamentalConcepts,scottStrachey1971mathematicalSemantics}.
We do not formalise natural-language meaning~\cite{galton1988formalSemanticsAI};
our distinction between a message and its referent follows Peirce and Frege
\cite{peirce1931collectedPapers,frege1892sinn}.}\DIFaddend

\DIFaddbegin \paragraph{\DIFadd{Argumentation and evidence.}}\DIFaddend
\DIFaddbegin \DIFadd{Argumentation theory handles conflicts: Dung models attacks;
ABA and ASPIC+ build arguments from assumptions and rules
\cite{dung1995acceptability,bondarenko1997aba,modgilPrakken2014aspic}.
Our rules classify claims as supported, refuted, both, or neither. Such methods
could replace that rule layer, but do not define message meaning, evidence
freshness, or permission to act.}\DIFaddend

\DIFaddbegin \DIFadd{Justification logic records reasons; truth maintenance and
belief revision update conclusions
\cite{artemov2001explicitProvability,artemovFitting2019justificationLogic,doyle1979tms,deKleer1986atms,alchourronGardenforsMakinson1985theoryChange}.
Provenance and audit record origins; runtime verification finds trace violations
\cite{moreauMissier2013provdm,bunemanKhannaTan2001whyWhere,cheneyChiticariuTan2009provenance,pereraAcarCheneyLevy2012selfExplaining,leuckerSchallhart2009rv,gargJiaDatta2011audit}.
We additionally check sources and events for staleness and support against
$K_{S,\textit{Src},t}$ and $K_{U,t}$
\cite{halpernPucella2006evidence,rashkin2023ais,bohnet2022attributedQA}.}\DIFaddend

\DIFaddbegin \paragraph{\DIFadd{Prompts, retrieval, and checked generation.}}\DIFaddend
\DIFaddbegin \DIFadd{Prompt programming treats prompts as testable artefacts
\cite{beurerKellner2023prompting,guy2024promptsPrograms,liang2025promptsProgramsToo};
retrieval and attribution studies external and parametric knowledge and conflicts
\cite{lewis2020rag,gao2023alce,asai2023selfrag,es2023ragas,petroni2019languageKB,wang2024knowledgeConflicts,wang2025astuteRag}.
Here prompts constrain \(K_{E,\textit{ctx},t}\) and retrieval populates or tests
\(K_{S,\textit{Src},t}\); neither establishes \(K_{U,t}\).
}\DIFaddend

\DIFaddbegin \DIFadd{Formal methods verify AI models and workflows
\cite{DBLP:journals/corr/abs-2104-02466,DBLP:journals/scp/UrbanSD25,DBLP:conf/vmcai/PalRUZ24,albarghouthi21};
WybeCoder, SAIL, formal-method-guided coding, and intent formalization put LLM outputs in verification or repair loops
\cite{gloeckle2026wybecoder,guSinghSingh2026Sail,wei2026formalVibeCoding,lahiri2024intentFormalization,lahiri2026intentFormalization}.
Those cover components or artefacts; ours adds sources, traces, authority, and actions.}\DIFaddend

\DIFaddbegin \paragraph{\DIFadd{Agents and tools.}}\DIFaddend
\DIFaddbegin \DIFadd{Agent and harness work studies tools, APIs, reasoning/action loops,
memory, orchestration, environments, and governance
\cite{karpas2022mrkl,schick2023toolformer,patil2023gorilla,li2023apibank,gao2022pal,yao2023react,sumers2023coala,wang2023surveyAgents,zhou2023webarena,liu2023agentbench,ruan2023toolemu,jimenez2024swebench,yang2024sweagent,li2026agentharness}.
Here a tool is a service, a call an event, and a result a witness only under a specification.}\DIFaddend

\DIFaddbegin \paragraph{\DIFadd{Security, policy, and runtime control.}}\DIFaddend
\DIFaddbegin \DIFadd{Secure-agent systems constrain the harness through policies,
separation, labels, isolation, or protocol semantics
\cite{wang2025agentspec,shi2025progent,debenedetti2025camel,costa2025fides,zhong2025rtbas,wu2025isolategpt,schlapbach2026agenticToolProtocols}.
These address services and traces; we additionally decide when a tool result is
a witness relative to the three knowledge bases.}\DIFaddend

\DIFaddbegin \DIFadd{Authorization languages supply permissions; monitors and edit
automata enforce trace properties; certifying systems attach checkable evidence
\cite{becker2010secpal,damianou2001ponder,oasis2013xacml,detreville2002binder,schneider2000enforceable,ligatti2005editAutomata,necula1997pcc,mcconnell2011certifying,chiesa2010pcd,kamran2024pccc}.
These correspond to our authority, safeguard, and witness checks.}\DIFaddend

\DIFaddbegin \DIFadd{Control theory, runtime assurance, and agent verification
constrain behaviour over time
\cite{sontag1998control,astromMurray2008feedback,kalman1960filtering,kaelbling1998pomdp,rawlingsMayneDiehl2017mpc,ramadgeWonham1987supervisory,hobbs2023rta,ames2019cbf,crouse2023formalAgents,zhang2024roadmapAgentsFormalMethods,koohestani2025agentguard,fang2026agentverify,kahani2026runtimeCompliance,flandre2026workflowVerification,wang2026lean4agent}.
Agent security likewise treats the model as untrusted
\cite{christodorescu2026agentSecuritySystemsProblem,shi2025progent,balunovic2024formalSecurityAgents}.
We also record observations, changes, authority, evidence, and referents.}\DIFaddend

\section{Conclusion}

An AI system output is not the object it represents: a generated answer, filled
form, citation, or action must be interpreted and examined against the
universal knowledge base \(K_{U,t}\), the source-derived knowledge
\(K_{S,\textit{Src},t}\), and the AI system's effective knowledge
\(K_{E,\textit{ctx},t}\).

Issues arise when these are conflated: outdated guidance treated as being
current, plausible output as justified claim. Our framework makes these
distinctions explicit. Messages denote source items, and generate candidate
claims via the denotation of their source items; candidate claims require
support from the universal and source-derived knowledge bases; and actuations
must also satisfy event specifications and authority constraints.

We do not suggest that finding the universal knowledge base is always feasible.
Our contribution is to locate where obligations sit: by separating
representation from object, source from domain knowledge, effective knowledge
from authority, and trace evidence from world state, we obtain a vocabulary for
specifying and checking AI systems.

\begin{PipeForOmaFrame}
The app uses our framework as an engineering discipline. A readiness check is
not merely a green UI tick: it is a record of observations, witnesses, source
support, and Oma's confirmation.
\end{PipeForOmaFrame}

\clearpage

\bibliographystyle{plain}
\DIFdelbegin %DIFDELCMD < \bibliography{magie}
%DIFDELCMD < %%%
\DIFdelend \DIFaddbegin \bibliography{revision}
\DIFaddend 

\clearpage
\appendix

\section{Theorems and Proofs}
\label{app:theorems-proofs}

This appendix gives proof sketches for our theorems.

\subsection{Theorem~\ref{th:exhaustiveness}: Exhaustiveness}

\begin{proof}[Proof of theorem \ref{th:exhaustiveness} on exhaustiveness]
Fix \(s\) such that \(\mathsf{Denotes}_{\textit{Lang}}(m,s)\).
($\Rightarrow$)\quad
Suppose $\mathsf{RelianceFailure}_{t,\textit{ctx}}(m,c)$.
By the definitions of \(\mathsf{RelianceFailure}\),
\(\mathsf{ClaimUse}\), and
\(\mathsf{AssertedClaims}_{E,\textit{ctx},t}\), we have
\(\mathsf{ClaimUse}_{E,\textit{ctx},t}(m,c)\),
\(K_{E,\textit{ctx},t}\vdash^{+}_t m\), and
\(c\in\mathsf{CandidateClaims}(m)\).  Since the fixed predicate
\(\mathsf{Denotes}_{\textit{Lang}}(m,s)\) supplies the denotation conjunct
and, by the four-status definitions and status exclusivity,
\(K_{E,\textit{ctx},t}\vdash^{+}_t m\) entails
\(K_{E,\textit{ctx},t}\vdash^{\oplus}_t m\), we also have
\(\mathsf{Effective}_{\oplus,t}(m,s;K_{E,\textit{ctx},t})\).
Moreover, \(\mathsf{RelianceFailure}\) gives
\(\neg\,\mathsf{ReliableClaim}_t(m,c;
K_{E,\textit{ctx},t},K_{S,\textit{Src},t},K_{U,t},\textit{ctx})\), so the
definition of \(\mathsf{ReliableClaim}\) and the preceding facts give
\(\neg\,\mathsf{SupportedClaim}_t(m,c;K_{U,t},K_{S,\textit{Src},t})\).
Because \(\mathsf{Denotes}_{\textit{Lang}}\) is functional and \(s\) is the
fixed denotation of \(m\), this unsupported-claim fact is equivalently
\[
\neg(\mathsf{Universal}_{+,t}(c;K_{U,t})
\wedge
\mathsf{Source}_{+,t}(s;K_{S,\textit{Src},t})
\wedge
\mathsf{SourceSupport}_t(s,c)
).
\]
Let us consider all possible cases
$\mathsf{Universal}_{\times,t}(c;K_{U,t})$ for
${\times}\in\{{+},{-},{\pm},{?}\}$.
\begin{itemize}[leftmargin=*]
\item If $\mathsf{Universal}_{-,t}(c;K_{U,t})$ then
\(\mathsf{RefutedAssertion}_t(m,c;\textit{ctx})\) follows \PipeRevC{directly}
from
\(\mathsf{ClaimUse}_{E,\textit{ctx},t}(m,c)\)\PipeRevC{}
and the
definition of \(\mathsf{RefutedAssertion}\);
\item if $\mathsf{Universal}_{?,t}(c;K_{U,t})$ then
\(\mathsf{UnsupportedAssertion}_t(m,c;\textit{ctx})\) follows from the common
claim-use, denotation, effective-one-sidedness, and unsupported-claim facts
above, together with the definition of \(\mathsf{UnsupportedAssertion}\);
\item if $\mathsf{Universal}_{\pm,t}(c;K_{U,t})$ then again by case analysis:
\[
\begin{aligned}
&\mathsf{Source}_{\pm,t}(s;K_{S,\textit{Src},t})
\vee
\mathsf{Source}_{?,t}(s;K_{S,\textit{Src},t})
\Rightarrow
\mathsf{Extrapolation}_t(m,c;\textit{ctx}),
\\
&\mathsf{Source}_{+,t}(s;K_{S,\textit{Src},t})
\Rightarrow
\mathsf{SourceUniversalMismatch}_t(m,s,c;\textit{ctx}),
\\
&\mathsf{Source}_{-,t}(s;K_{S,\textit{Src},t})
\Rightarrow
\mathsf{RefutedSourceContradictoryClaim}_{t}(m,s,c;\textit{ctx}).
\end{aligned}
\]
\item If $\mathsf{Universal}_{+,t}(c;K_{U,t})$ then
$\neg(\mathsf{Source}_{+,t}(s;K_{S,\textit{Src},t})
\wedge
\mathsf{SourceSupport}_t(s,c))$
must hold.
\begin{itemize}[leftmargin=*]
\item Either $\neg\mathsf{Source}_{+,t}(s;K_{S,\textit{Src},t})$ gives
$\mathsf{SpuriousSupport}_{t}(m,s,c;\textit{ctx})$,
\item or $\mathsf{Source}_{+,t}(s;K_{S,\textit{Src},t})
\wedge
\neg \mathsf{SourceSupport}_t(s,c)$ gives
$\mathsf{SourceSupportGap}_{t}(m,s,c;\textit{ctx})$.
\end{itemize}
\end{itemize}
($\Leftarrow$)\quad
The definition of the seven predicates implies
\[
\neg(\mathsf{Universal}_{+,t}(c;K_{U,t})
\wedge
\mathsf{Source}_{+,t}(s;K_{S,\textit{Src},t})
\wedge
\mathsf{SourceSupport}_t(s,c)).
\]
The first four each force
\(
\neg\mathsf{Universal}_{+,t}(c;K_{U,t})
\)\PipeRevC{, for \textsf{RefutedAssertion} via status exclusivity}.

\textsf{Spu\-ri\-ous\-Sup\-port} forces
\(
\neg\mathsf{Source}_{+,t}(s;K_{S,\textit{Src},t});
\)
$\mathsf{SourceSupportGap}$ forces
\(
\neg\mathsf{SourceSupport}_t(s,c)
\).
And finally \textsf{RefutedSourceContradictoryClaim} forces
\(
\neg\mathsf{Universal}_{+,t}(c;K_{U,t}).
\)
\end{proof}

\subsection{Theorem~\ref{th:trace-soundness}: Trace soundness}

\begin{proof}[Proof of theorem \ref{th:trace-soundness} on trace soundness]
\DIFdelbegin \DIFdel{In each case \(\mathsf{ClaimUse}_{E,\textit{ctx},\cdot}(m,c)\) is a literal conjunct, so by definition of \(\mathsf{RelianceFailure}\) it suffices to show \(\neg\,\mathsf{ReliableClaim}_{\cdot}(m,c;\dots)\), which by definition of \(\mathsf{ReliableClaim}_t\) follows from \(\neg\,\mathsf{SupportedClaim}_{\cdot}(m,c;K_{U,\cdot},K_{S,\textit{Src},\cdot})\).}
\DIFdelend \DIFaddbegin \DIFadd{For each diagnostic, let \(\tau\) and \(\gamma\) be the time and context in
its reliance-failure conclusion.  The predicate
\(\mathsf{ClaimUse}_{E,\gamma,\tau}(m,c)\) is a literal conjunct, so it suffices
to show
\(\neg\,\mathsf{ReliableClaim}_{\tau}
(m,c;K_{E,\gamma,\tau},K_{S,\textit{Src},\tau},K_{U,\tau},\gamma)\).
By the definition of \(\mathsf{ReliableClaim}_{\tau}\), this follows from
\(\neg\,\mathsf{SupportedClaim}_{\tau}
(m,c;K_{U,\tau},K_{S,\textit{Src},\tau})\).}
\DIFaddend
\begin{itemize}[leftmargin=*]
\item \emph{\(\mathsf{StaleSource}\).} The final disjunct of the definition gives \(\mathsf{Universal}_{-,t}(c;K_{U,t})\), \(\mathsf{Universal}_{?,t}(c;K_{U,t})\), or \(\mathsf{Universal}_{\pm,t}(c;K_{U,t})\); each excludes \(\mathsf{Universal}_{+,t}(c;K_{U,t})\) by status exclusivity (\S\ref{sec:paracons}), so it is the case that \DIFdelbegin \DIFdel{\(\mathsf{SupportedClaim}_t(m,c;\dots)\)}\DIFdelend \DIFaddbegin \DIFadd{\(\mathsf{SupportedClaim}_t(m,c;K_{U,t},K_{S,\textit{Src},t})\)}\DIFaddend{} fails on its universal conjunct.
\item \PipeRevC{\emph{\(\mathsf{RefutedSource}\)}.} Since the four base justification
statuses are mutually exclusive by definition (\S\ref{sec:paracons}),
\(\mathsf{Universal}_{-,t}(c;K_{U,t})\) excludes
\(\mathsf{Universal}_{+,t}(c;K_{U,t})\), so
\DIFdelbegin \DIFdel{\(\mathsf{SupportedClaim}_t(m,c;\dots)\)}\DIFdelend \DIFaddbegin \DIFadd{\(\mathsf{SupportedClaim}_t(m,c;K_{U,t},K_{S,\textit{Src},t})\)}\DIFaddend{} fails on its second conjunct.
\item \PipeRevC{\emph{\(\mathsf{AddedHypothesis}\)}.} \(\neg\,\mathsf{SupportedClaim}_t(m,c;K_{U,t},K_{S,\textit{Src},t})\) is a literal conjunct. 
\item \PipeRevC{\emph{\(\mathsf{UnsupportedUse}\)}.} \(\neg\,\mathsf{SupportedClaim}_{t+1}(m,c;K_{U,t+1},K_{S,\textit{Src},t+1})\) is a literal conjunct, taken at \(\textit{ctx}_{t+1}\). \hfill\qed
\end{itemize}\let\qed\relax
\end{proof}

\subsection{Theorem~\ref{th:valid-accepted-controlled-transitions}: Valid steps for accepted controlled transitions}

\begin{proof}[Proof of theorem \ref{th:valid-accepted-controlled-transitions} on Valid steps for accepted controlled transitions]
From \(\mathsf{ValidStep}\) we get the recorded orchestrator step and the
accepted safeguard result.  Accept-soundness gives
\[
\mathsf{AcceptOK}_{\textsf{Spec},|\textit{trace}|,\textit{ctx}}
(\textit{trace},w,e,w').
\]
By \(\mathsf{AcceptAdequate}_{\textsf{Spec}}\), this certificate entails
\[
\mathsf{AllowedEvent}_{\textsf{Spec}}(\textit{trace},w,e).
\]
Instantiating
\(\mathsf{StepOK}\) with the same recorded step yields a service witness for
the observation interface.
\end{proof}

\subsection{Theorem~\ref{th:diagnostic-adequacy}: Rejection Adequacy}

\PipeRev{%
\begin{proof}
Let \(t=|\textit{trace}|\) and
\(\mathsf{res}=\mathsf{Safeguard}_{\textsf{Spec}}
(\textit{trace},w,e,w',\textit{ctx})\in\mathsf{Reject}_{\textsf{Spec}}\), and
proceed by cases on the constructor of \(\mathsf{res}\); in each case we refute
one conjunct of
\(\mathsf{AcceptOK}_{\textsf{Spec},t,\textit{ctx}}(\textit{trace},w,e,w')\).

If \(\mathsf{res}=\mathsf{RejectMissingClaim}(c)\), reject-soundness gives
\(\mathsf{Unjustifiability}_{t,\textit{ctx}}(c;\textit{trace},w,\textsf{Spec})\),
whose conjuncts include
\(c\in\mathsf{Requires}_{\textsf{Spec}}(\textit{trace},w)\) and the absence of
any \(m_c\in\textit{Lang}\) with
\(\mathsf{ReliableClaim}_t(m_c,c;
K_{E,\textit{ctx},t},K_{S,\textit{Src},t},K_{U,t},\textit{ctx})\). Hence the
universally quantified clause of the predicate
\(\mathsf{RequiredClaimsOK}_{\textsf{Spec},t,\textit{ctx}}
(\textit{trace},w;e)\) fails at this \(c\), refuting the second conjunct of
\(\mathsf{AcceptOK}\).

If \(\mathsf{res}=\mathsf{RejectRelianceFailure}(m,c,d)\), it is the case that reject-soundness
gives \(m=e.\mathsf{message}\) as well as
\(\mathsf{RelianceFailure}_{t,\textit{ctx}}(m,c)\).  By definition of
\(\mathsf{RelianceFailure}\), \(\mathsf{ClaimUse}_{E,\textit{ctx},t}(m,c)\)
holds, so
\(c\in\mathsf{AssertedClaims}_{E,\textit{ctx},t}(e.\mathsf{message})\), and
\(\neg\,\mathsf{ReliableClaim}_t(m,c;
K_{E,\textit{ctx},t},K_{S,\textit{Src},t},K_{U,t},\textit{ctx})\) holds.
Since \(|\textit{trace}|=t\) and
\(\textit{ctx}\in\mathsf{Select}_t(\textit{trace})\) by hypothesis, the
universally quantified clause of
\(\mathsf{ReliableAssertions}_{t,\textit{ctx}}(\textit{trace};e)\) fails at
this \(c\); as \(\textit{ctx}\) is a selected context, the final conjunct of
\(\mathsf{Validity}_{t,\textsf{Spec}}(\textit{trace},w;e)\) fails, refuting the
first conjunct of \(\mathsf{AcceptOK}\).

If \(\mathsf{res}=\mathsf{RejectStaleWitness}(c,e_j)\), reject-soundness gives
\(\mathsf{StaleWitness}_{\textsf{Spec}}(\textit{trace},w;e,c,e_j)\), and
diagnostic coherence gives
\(\neg\,\mathsf{CurrentWitnessesOK}_{\textsf{Spec}}(\textit{trace},w;e)\),
refuting the third conjunct.  The cases
\(\mathsf{RejectMissingAuthority}(a)\) and
\(\mathsf{RejectForbiddenActuation}(r)\) are identical, using the cases
\(\mathsf{MissingAuthority}_{\textsf{Spec}}\Rightarrow
\neg\,\mathsf{AuthorityOK}_{\textsf{Spec}}\)
and
\(\mathsf{ForbiddenActuation}_{\textsf{Spec}}\Rightarrow
\neg\,\mathsf{ActuationOK}_{\textsf{Spec}}\)
to refute the fourth and fifth conjuncts respectively.
\end{proof}

\subsection{Theorem~\ref{th:safeguard-decision}: Safeguard Decision}

\begin{proof}
Left to right is
\(\mathsf{SafeguardAcceptSound}_{\textsf{Spec}}
(\mathsf{Safeguard}_{\textsf{Spec}})\).
Right to left: suppose that it is the case that
\(\mathsf{AcceptOK}_{\textsf{Spec},|\textit{trace}|,\textit{ctx}}
(\textit{trace},w,e,w')\).  By totality the safeguard returns
\(\mathsf{Accept}(e,w')\) or a value in \(\mathsf{Reject}_{\textsf{Spec}}\);
the latter is impossible, since Theorem~\ref{th:diagnostic-adequacy} would ensure that it is the case that \(\neg\,\mathsf{AcceptOK}_{\textsf{Spec},|\textit{trace}|,\textit{ctx}}
(\textit{trace},w,e,w')\).
\end{proof}
}

\section{Transition Safeguard Rejection Cases}
\label{app:transition-safeguard-rejection-cases}

For completeness, this appendix gives the formal rendition of every
transition-safeguard rejection constructor\PipeRev{: for each constructor, the
failure predicate carried by its payload, and the implication that the
constructor obliges.  The formal rendition of reject-soundness
(Definition~\ref{def:reject-sound}) is then
\(\mathsf{SafeguardRejectSound}_{\textsf{Spec}}
(\mathsf{Safeguard}_{\textsf{Spec}})\triangleq{}\)the conjunction of the five
constructor implications below, universally quantified over
\(\textit{trace}\), \(w\), \(e\), \(w'\), \(\textit{ctx}\), and the respective
payloads}.  \DIFaddbegin \DIFadd{Writing \(\mathbb B=\{\mathsf{true},\mathsf{false}\}\), the four
further application-supplied clauses have signatures
}{\scriptsize
\[
\DIFadd{\begin{gathered}
\mathsf{Invalidates}_{\textsf{Spec}}:\mathsf{Trace}\times W\times\textit{Ev}^2\times\bar U\to\mathbb B,
\quad \mathsf{CurrentWitness}_{\textsf{Spec}}:\mathsf{Trace}\times W\times\textit{Ev}\times\bar U\to\mathbb B,\\
\mathsf{RequiredAuthority}_{\textsf{Spec}}:\mathsf{Trace}\times W\times\textit{Ev}\to\wp(\textit{Ev}),
\quad \mathsf{ActuationClause}_{\textsf{Spec}}:\mathsf{Trace}\times W\times\textit{Ev}\times\textit{Lang}\to\mathbb B.
\end{gathered}
}\]
}
\DIFadd{They state invalidation, current witnessing, required authority, and permitted
actuation, respectively.  }\DIFaddend The complete
rejection-result family is
\[
\begin{array}{rcl}
\mathsf{Reject}_{\textsf{Spec}}
&::=&
\mathsf{RejectMissingClaim}(c)
\\
&&{}\mid
\mathsf{RejectRelianceFailure}(m,c,d)
\\
&&{}\mid
\mathsf{RejectStaleWitness}(c,e_j)
\\
&&{}\mid
\mathsf{RejectMissingAuthority}(a)
\\
&&{}\mid
\mathsf{RejectForbiddenActuation}(r).
\end{array}
\]

\subsection{\texorpdfstring{\(\mathsf{RejectMissingClaim}\)}{RejectMissingClaim}}

\[
\begin{aligned}
&\mathsf{Unjustifiability}_{t,\textit{ctx}}(c;\textit{trace},w_t,\textsf{Spec})
\triangleq\\
&\quad
|\textit{trace}|=t
\wedge
\textit{ctx}\in\mathsf{Select}_t(\textit{trace})
\\
&\quad{}\wedge
\PipeRev{c\in\mathsf{Requires}_{\textsf{Spec}}(\textit{trace},w_t)}
\\
&\quad{}\wedge
\neg\,\exists m_c\in\textit{Lang}\mathrel{.}\,
\mathsf{ReliableClaim}_t(
m_c,c;
K_{E,\textit{ctx},t},
K_{S,\textit{Src},t},
K_{U,t},
\textit{ctx}) .
\end{aligned}
\]
\DIFaddbegin \DIFadd{This definition covers every required claim for which no
reliable message exists.  Rejection Adequacy uses exactly these two facts.}\DIFaddend

\[
\begin{aligned}
&\mathsf{Safeguard}_{\textsf{Spec}}
(\textit{trace},w,e,w',\textit{ctx})
=\mathsf{RejectMissingClaim}(c)
\\
&\quad
\Rightarrow
\mathsf{Unjustifiability}_{|\textit{trace}|,\textit{ctx}}
(c;\textit{trace},w,\textsf{Spec}).
\end{aligned}
\]

\subsection{\texorpdfstring{\(\mathsf{RejectRelianceFailure}\)}{RejectRelianceFailure}}

\[
\begin{aligned}
&\mathsf{DiagnosticOK}_{t,\textit{ctx}}(d;m,c,e,\textit{trace})
\triangleq\\
&\quad
\bigl(d=\textsf{Extrapolation}
\wedge\mathsf{Extrapolation}_{t}(m,c;\textit{ctx})\bigr)
\\
&\quad{}\vee
\bigl(d=\textsf{RefutedAssertion}
\wedge\mathsf{RefutedAssertion}_{t}(m,c;\textit{ctx})\bigr)
\\
&\quad{}\vee
\bigl(d=\textsf{UnsupportedAssertion}
\wedge\mathsf{UnsupportedAssertion}_{t}(m,c;\textit{ctx})\bigr)
\\
&\quad{}\vee
\exists s\mathrel{.}\,
\bigl(d=\textsf{SourceUniversalMismatch}(s)
\wedge
\mathsf{SourceUniversalMismatch}_{t}(m,s,c;\textit{ctx})\bigr)
\\
&\quad{}\vee
\exists s\mathrel{.}\,
\bigl(d=\textsf{SpuriousSupport}(s)
\wedge
\mathsf{SpuriousSupport}_{t}(m,s,c;\textit{ctx})\bigr)
\\
&\quad{}\vee
\exists s\mathrel{.}\,
\bigl(d=\textsf{SourceSupportGap}(s)
\wedge
\mathsf{SourceSupportGap}_{t}(m,s,c;\textit{ctx})\bigr)
\\
&\quad{}\vee
\exists s\mathrel{.}\,
\bigl(
d=\textsf{RefutedSourceContradictoryClaim}(s)
\bigr.
\\
&\qquad{}\bigl.
\wedge
\mathsf{RefutedSourceContradictoryClaim}_{t}
(m,s,c;\textit{ctx})\bigr)
\\
&\quad{}\vee
\exists \mathit{sp},s\mathrel{.}\,
\bigl(d=\textsf{StaleSource}(\mathit{sp},s)
\wedge
\mathsf{StaleSource}_{t}
(m,\mathit{sp},s,c;\textit{ctx},\textit{trace})\bigr)
\\
&\quad{}\vee
\exists s\mathrel{.}\,
\bigl(d=\textsf{RefutedSource}(s)
\wedge\mathsf{RefutedSource}_{t}(m,s,c;\textit{ctx})\bigr)
\\
&\quad{}\vee
\exists h\mathrel{.}\,
\bigl(d=\textsf{AddedHypothesis}(h)
\wedge
\mathsf{AddedHypothesis}_{t}(h,c;m,\textit{ctx})\bigr)
\\
&\quad{}\vee
\exists s,\textit{ctx}_{t+1}\mathrel{.}\,
\bigl(
d=\textsf{UnsupportedUse}(s)
\\
&\qquad{}\wedge
\textit{ctx}_{t+1}\in\mathsf{Select}_{t+1}(\textit{trace}\cdot e)
\\
&\qquad{}\wedge
\mathsf{UnsupportedUse}_{t+1}
\\
&\qquad{}
(e,m,s,c;\textit{trace},\textit{ctx},\textit{ctx}_{t+1})
\bigr).
\end{aligned}
\]

\[
\begin{aligned}
&\mathsf{Safeguard}_{\textsf{Spec}}(\textit{trace},w,e,w',\textit{ctx})
=\mathsf{RejectRelianceFailure}(m,c,d)
\\
&\quad\Rightarrow
m=e.\mathsf{message}
\wedge
\mathsf{RelianceFailure}_{|\textit{trace}|,\textit{ctx}}(m,c)
\\
&\qquad{}\wedge
\mathsf{DiagnosticOK}_{|\textit{trace}|,\textit{ctx}}
(d;m,c,e,\textit{trace}).
\end{aligned}
\]

\subsection{\texorpdfstring{\(\mathsf{RejectStaleWitness}\)}{RejectStaleWitness}}

\[
\begin{aligned}
&\mathsf{TraceWitness}_{\textsf{Spec}}(e_i,e,c;\textit{trace})
\triangleq\\
&\quad
e_i\in\textit{trace}
\wedge
\mathsf{interp}_{\textsf{Spec}}(e_i)\downarrow
\wedge
c\in\mathsf{interp}_{\textsf{Spec}}(e_i)
\\
&\quad{}\wedge
\Bigl(
e_i\in e.\mathsf{witness}
\\
&\qquad{}\vee
\exists I\subseteq[1,|\textit{trace}|]\mathrel{.}\,
I\in e.\mathsf{witness}
\wedge
\exists k\in I\mathrel{.}\,e_i=\textit{trace}[k]
\\
&\qquad{}\vee
\exists t_0,\mathit{sp},s\mathrel{.}\,
\mathit{sp}\in e.\mathsf{witness}
\wedge
\mathsf{SourceWitness}_{t_0}(\mathit{sp},s;\textit{trace})
\wedge
\mathsf{ExtractedFrom}_{t_0}(s,\mathit{sp};e_i)
\Bigr).
\end{aligned}
\]

\[
\DIFdelbegin %DIFDELCMD < \begin{aligned}
%DIFDELCMD < &\mathsf{StaleWitness}_{\textsf{Spec}}
%DIFDELCMD < (\textit{trace},w;e,c,e_j)
%DIFDELCMD < \triangleq\\
%DIFDELCMD < &\quad
%DIFDELCMD < \mathsf{worldmap}_{\textsf{Spec}}(w)\downarrow
%DIFDELCMD < \\
%DIFDELCMD < &\quad{}\wedge
%DIFDELCMD < \bigl(\forall e_k\in\textit{trace}\mathrel{.}\,
%DIFDELCMD < \mathsf{interp}_{\textsf{Spec}}(e_k)\downarrow\bigr)
%DIFDELCMD < \\
%DIFDELCMD < &\quad{}\wedge
%DIFDELCMD < \mathsf{interp}_{\textsf{Spec}}(e)\downarrow
%DIFDELCMD < \\
%DIFDELCMD < &\quad{}\wedge
%DIFDELCMD < \exists i,j\in[1,|\textit{trace}|]\mathrel{.}\,
%DIFDELCMD < i<j
%DIFDELCMD < \wedge
%DIFDELCMD < \textit{trace}[j]=e_j
%DIFDELCMD < \\
%DIFDELCMD < &\quad{}\wedge
%DIFDELCMD < \mathsf{TraceWitness}_{\textsf{Spec}}
%DIFDELCMD < (\textit{trace}[i],e,c;\textit{trace})
%DIFDELCMD < \\
%DIFDELCMD < &\quad{}\wedge
%DIFDELCMD < \hbox{a witness-invalidation clause of }\textsf{Spec}
%DIFDELCMD < \hbox{ classifies }e_j
%DIFDELCMD < \\
%DIFDELCMD < &\qquad
%DIFDELCMD < \hbox{as invalidating }\textit{trace}[i]\hbox{ for }c
%DIFDELCMD < \hbox{ at }(\textit{trace},w)
%DIFDELCMD < \\
%DIFDELCMD < &\quad{}\wedge
%DIFDELCMD < \neg\,\exists \ell\in[j+1,|\textit{trace}|]\mathrel{.}\,
%DIFDELCMD < \mathsf{TraceWitness}_{\textsf{Spec}}
%DIFDELCMD < (\textit{trace}[\ell],e,c;\textit{trace})
%DIFDELCMD < \\
%DIFDELCMD < &\qquad{}\wedge
%DIFDELCMD < \hbox{\(\textsf{Spec}\) classifies }\textit{trace}[\ell]
%DIFDELCMD < \hbox{ as a current witness for }c.
%DIFDELCMD < \end{aligned}%%%
\DIFdelend \DIFaddbegin \begin{aligned}
&\mathsf{StaleWitness}_{\textsf{Spec}}
(\textit{trace},w;e,c,e_j)
\triangleq\\
&\quad
\mathsf{worldmap}_{\textsf{Spec}}(w)\downarrow
\\
&\quad{}\wedge
\bigl(\forall e_k\in\textit{trace}\mathrel{.}\,
\mathsf{interp}_{\textsf{Spec}}(e_k)\downarrow\bigr)
\\
&\quad{}\wedge
\mathsf{interp}_{\textsf{Spec}}(e)\downarrow
\\
&\quad{}\wedge
\exists i,j\in[1,|\textit{trace}|]\mathrel{.}\,
i<j
\wedge
\textit{trace}[j]=e_j
\\
&\quad{}\wedge
\mathsf{TraceWitness}_{\textsf{Spec}}
(\textit{trace}[i],e,c;\textit{trace})
\\
&\quad{}\wedge
\mathsf{Invalidates}_{\textsf{Spec}}
(\textit{trace},w,e_j,\textit{trace}[i],c)
\\
&\quad{}\wedge
\neg\,\exists \ell\in[j+1,|\textit{trace}|]\mathrel{.}\,
\mathsf{TraceWitness}_{\textsf{Spec}}
(\textit{trace}[\ell],e,c;\textit{trace})
\\
&\qquad{}\wedge
\mathsf{CurrentWitness}_{\textsf{Spec}}
(\textit{trace},w,\textit{trace}[\ell],c).
\end{aligned}\DIFaddend 
\]

\[
\begin{aligned}
&(\mathsf{Safeguard}_{\textsf{Spec}}(\textit{trace},w,e,w',\textit{ctx})
=\mathsf{RejectStaleWitness}(c,e_j))
\\
&\quad\Rightarrow
\mathsf{StaleWitness}_{\textsf{Spec}}(\textit{trace},w;e,c,e_j).
\end{aligned}
\]

\subsection{\texorpdfstring{\(\mathsf{RejectMissingAuthority}\)}{RejectMissingAuthority}}

\[
\DIFdelbegin %DIFDELCMD < \begin{aligned}
%DIFDELCMD < &\mathsf{MissingAuthority}_{\textsf{Spec}}(\textit{trace},w;e,a)
%DIFDELCMD < \triangleq\\
%DIFDELCMD < &\quad
%DIFDELCMD < \mathsf{worldmap}_{\textsf{Spec}}(w)\downarrow
%DIFDELCMD < \\
%DIFDELCMD < &\quad{}\wedge
%DIFDELCMD < \bigl(\forall e_j\in\textit{trace}\mathrel{.}\,
%DIFDELCMD < \mathsf{interp}_{\textsf{Spec}}(e_j)\downarrow\bigr)
%DIFDELCMD < \\
%DIFDELCMD < &\quad{}\wedge
%DIFDELCMD < \mathsf{interp}_{\textsf{Spec}}(e)\downarrow
%DIFDELCMD < \\
%DIFDELCMD < &\quad{}\wedge
%DIFDELCMD < \textsf{Spec}\ \hbox{requires prior authority }a
%DIFDELCMD < \hbox{ for }e\hbox{ at }(\textit{trace},w)
%DIFDELCMD < \\
%DIFDELCMD < &\quad{}\wedge
%DIFDELCMD < \neg\,\exists j\in[1,|\textit{trace}|]\mathrel{.}\,
%DIFDELCMD < \bigl(
%DIFDELCMD < \textit{trace}[j]=a
%DIFDELCMD < \vee
%DIFDELCMD < a\in\textit{trace}[j].\mathsf{authority}
%DIFDELCMD < \vee
%DIFDELCMD < a\in\mathsf{interp}_{\textsf{Spec}}(\textit{trace}[j])
%DIFDELCMD < \bigr).
%DIFDELCMD < \end{aligned}%%%
\DIFdelend \DIFaddbegin \begin{aligned}
&\mathsf{MissingAuthority}_{\textsf{Spec}}(\textit{trace},w;e,a)
\triangleq\\
&\quad
\mathsf{worldmap}_{\textsf{Spec}}(w)\downarrow
\\
&\quad{}\wedge
\bigl(\forall e_j\in\textit{trace}\mathrel{.}\,
\mathsf{interp}_{\textsf{Spec}}(e_j)\downarrow\bigr)
\\
&\quad{}\wedge
\mathsf{interp}_{\textsf{Spec}}(e)\downarrow
\\
&\quad{}\wedge
a\in\mathsf{RequiredAuthority}_{\textsf{Spec}}(\textit{trace},w,e)
\\
&\quad{}\wedge
\neg\,\exists j\in[1,|\textit{trace}|]\mathrel{.}\,
\bigl(
\textit{trace}[j]=a
\vee
a\in\textit{trace}[j].\mathsf{authority}
\vee
a\in\mathsf{interp}_{\textsf{Spec}}(\textit{trace}[j])
\bigr).
\end{aligned}\DIFaddend 
\]

\[
\begin{aligned}
&\mathsf{Safeguard}_{\textsf{Spec}}(\textit{trace},w,e,w',\textit{ctx})
=\mathsf{RejectMissingAuthority}(a)
\\
&\quad\Rightarrow
\mathsf{MissingAuthority}_{\textsf{Spec}}(\textit{trace},w;e,a).
\end{aligned}
\]

\subsection{\texorpdfstring{\(\mathsf{RejectForbiddenActuation}\)}{RejectForbiddenActuation}}

\[
\DIFdelbegin %DIFDELCMD < \begin{aligned}
%DIFDELCMD < &\mathsf{ForbiddenActuation}_{\textsf{Spec}}(\textit{trace},w;e,w',r)
%DIFDELCMD < \triangleq\\
%DIFDELCMD < &\quad
%DIFDELCMD < \mathsf{worldmap}_{\textsf{Spec}}(w)\downarrow
%DIFDELCMD < \\
%DIFDELCMD < &\quad{}\wedge
%DIFDELCMD < \bigl(\forall e_j\in\textit{trace}\mathrel{.}\,
%DIFDELCMD < \mathsf{interp}_{\textsf{Spec}}(e_j)\downarrow\bigr)
%DIFDELCMD < \\
%DIFDELCMD < &\quad{}\wedge
%DIFDELCMD < \mathsf{interp}_{\textsf{Spec}}(e)\downarrow
%DIFDELCMD < \wedge
%DIFDELCMD < r=e.\mathsf{actuation}
%DIFDELCMD < \wedge
%DIFDELCMD < r\neq m_{\bot}
%DIFDELCMD < \wedge
%DIFDELCMD < \mathsf{actuation}(w,r)=w'
%DIFDELCMD < \\
%DIFDELCMD < &\quad{}\wedge
%DIFDELCMD < \hbox{no actuation clause of }\textsf{Spec}\hbox{ allows the tuple }
%DIFDELCMD < \\
%DIFDELCMD < &\qquad
%DIFDELCMD < (\textit{trace},w,e.\mathsf{name},e.\mathsf{kind},e.\mathsf{resource},
%DIFDELCMD < e.\mathsf{authority},e.\mathsf{witness},e.\mathsf{actuation},r).
%DIFDELCMD < \end{aligned}%%%
\DIFdelend \DIFaddbegin \begin{aligned}
&\mathsf{ForbiddenActuation}_{\textsf{Spec}}(\textit{trace},w;e,w',r)
\triangleq\\
&\quad
\mathsf{worldmap}_{\textsf{Spec}}(w)\downarrow
\\
&\quad{}\wedge
\bigl(\forall e_j\in\textit{trace}\mathrel{.}\,
\mathsf{interp}_{\textsf{Spec}}(e_j)\downarrow\bigr)
\\
&\quad{}\wedge
\mathsf{interp}_{\textsf{Spec}}(e)\downarrow
\wedge
r=e.\mathsf{actuation}
\wedge
r\neq m_{\bot}
\wedge
\mathsf{actuation}(w,r)=w'
\\
&\quad{}\wedge
\neg\,\mathsf{ActuationClause}_{\textsf{Spec}}(\textit{trace},w,e,r).
\end{aligned}\DIFaddend 
\]

\[
\begin{aligned}
&(\mathsf{Safeguard}_{\textsf{Spec}}(\textit{trace},w,e,w',\textit{ctx})
=\mathsf{RejectForbiddenActuation}(r))
\\
&\quad\Rightarrow
\mathsf{ForbiddenActuation}_{\textsf{Spec}}(\textit{trace},w;e,w',r).
\end{aligned}
\]

\clearpage
\section{Example implementation of a specification written in cat~\cite{amt14}}
\label{app:write-check-readiness-cat}

\begingroup
\footnotesize
% (verbatiminput) write-check-readiness.cat
\begin{verbatim}
(* Readiness for submission. *)

let s_before = po
let E_ready = E_appAgent & E_write & E_checkReady
let E_filled = E_checkForm
let s_readiness_route = [E_ukRenewalPicked | E_frRenewalPicked]
  ; feeds_into ; [E_ready]
let s_readiness_picture = [E_checkPicture] ; was_checked_before ; [E_ready]
let s_readiness_form = [E_filled] ; was_checked_before ; [E_ready]
let s_readiness_confirmation = [E_confirmDraft] ; permits ; [E_ready]
let s_readiness_stale = [E_formEditInvalidatedConfirmation]
  ; invalidates ; [E_ready]
let s_readiness_support =
  s_readiness_route | s_readiness_picture | s_readiness_form | s_readiness_confirmation
empty E_ready \ range(s_readiness_route) as s_readiness_requires_route
empty E_ready \ range(s_readiness_picture) as s_readiness_requires_picture
empty E_ready \ range(s_readiness_form) as s_readiness_requires_form
empty E_ready \ range(s_readiness_confirmation) as s_readiness_requires_confirmation
empty s_readiness_support \ s_before as s_readiness_support_must_precede_ready
empty s_readiness_stale & s_before as s_readiness_requires_current_form_state
empty ((invalidates ; s_readiness_support) & s_before)
  as s_readiness_support_not_invalidated
\end{verbatim}
\endgroup

\end{document}